%% file: main.tex
\documentclass{article}

\PassOptionsToPackage{numbers, compress}{natbib}

\usepackage[preprint]{style}

\usepackage[utf8]{inputenc}
\usepackage[T1]{fontenc}
\usepackage{hyperref}
\usepackage{url}
\usepackage{booktabs}
\usepackage{amsmath,amssymb,amsfonts,amsthm,mathtools}
\usepackage{nicefrac}
\usepackage{microtype}
\usepackage{xcolor}
\usepackage[pdftex]{graphicx}
\usepackage{chato-notes}
\usepackage{subcaption}
\usepackage{pgfplots}
\usepackage{pgfplotstable}
\usepackage{multirow}
\usepackage{makecell}
\usetikzlibrary{calc}
\pgfplotsset{compat=1.18}

\newtheorem{theorem}{Theorem}
\newtheorem{lemma}[theorem]{Lemma}
\newtheorem{proposition}{Proposition}
\newtheorem{corollary}[theorem]{Corollary}
\newtheorem{assumption}{Assumption}
\newtheorem*{remark}{Remark}

\newcommand{\E}{\mathbb{E}}
\newcommand{\R}{\mathbb{R}}
\newcommand{\Var}{\operatorname{Var}}
\newcommand{\Cov}{\operatorname{Cov}}
\newcommand{\LOCO}{\operatorname{LOCO}}
\newcommand{\STII}{\operatorname{STII}}
\newcommand{\FSII}{\operatorname{FSII}}
\newcommand{\SII}{\operatorname{SII}}
\newcommand{\bg}{\mathrm{bg}}
\newcommand{\XOR}{\oplus}
\newcommand{\ours}{Stochastic Hi-Fi}
\newcommand{\URSp}{U/R/S}

\title{
The Representational Limit of Scalar Interactions:\\An Interventional Decomposition
}

\author{%
  Potito Aghilar \\
  Polytechnic University of Bari \\
  Bari, IT \\
  \texttt{potito.aghilar@poliba.it} \\
  \And
  Sabino Roccotelli \\
  Polytechnic University of Bari \\
  Bari, IT \\
  \texttt{s.roccotelli@phd.poliba.it} \\
  \And
  Stanislao Fidanza \\
  Polytechnic University of Bari \\
  Bari, IT \\
  \texttt{s.fidanza@studenti.poliba.it} \\
  \And
  Vito Walter Anelli \\
  Polytechnic University of Bari \\
  Bari, IT \\
  \texttt{vitowalter.anelli@poliba.it} \\
  \And
  Sebastiano Stramaglia \\
  University of Bari Aldo Moro \\
  Bari, IT \\
  \texttt{sebastiano.stramaglia@ba.infn.it} \\
  \And
  Tommaso Di Noia \\
  Polytechnic University of Bari \\
  Bari, IT \\
  \texttt{tommaso.dinoia@poliba.it} \\
}

\begin{document}

\maketitle

\begin{abstract}
\input{sections/00_abstract}
\end{abstract}

\input{sections/01_introduction}

\input{sections/02_related_work}

\input{sections/03_method}
\input{sections/04_theory}

\input{sections/05_experiments}
\input{sections/06_discussion}

\newpage

\bibliographystyle{plainnat}
\bibliography{main}

\newpage
\appendix
\input{sections/appendix/A_notation}

\newpage
\input{sections/appendix/B_proof_conflation}
\newpage
\input{sections/appendix/C_proof_interventional}
\newpage
\input{sections/appendix/D_proof_variance}
\newpage
\input{sections/appendix/E_proof_vocab}
\newpage
\input{sections/appendix/F_numerical_verification}

\newpage
\input{sections/appendix/G_extended_method}
\newpage
\input{sections/appendix/H_extended_experiments}

\newpage
\input{sections/appendix/I_compute_reproducibility}

\end{document}

%% file: sections/00_abstract.tex
Signed pairwise interaction scores fundamentally conflate
uniqueness (U), redundancy (R), and synergy (S). We prove this on a minimal 3-way XOR structural causal model: faithful indices such as Shapley-Taylor return zero per pair, whereas projective indices such as Shapley Interaction spread the third-order effect into pair scalars that conflate the three mechanisms. We introduce Stochastic Hi-Fi, a post-hoc, retraining-free predictability decomposition that estimates per-feature U/R/S profiles by interventional masked inference. The estimator provides exact interventional semantics, finite-sample Monte Carlo bounds, strict variance reduction from coupled diamond sampling, and uniform finite-vocabulary convergence. 
Across tabular SCMs, Stochastic Hi-Fi recovers structure missed by scalar baselines (up to $411\times$ larger interaction-magnitude recovery ratios). It also separates redundant and synergistic heads in the GPT-2 IOI circuit. 
On NIH ChestX-ray14, Stochastic Hi-Fi matches GradCAM on Pointing Game and improves substantially on Deletion AUC. Code and data are available at \href{https://anonymous.4open.science/r/stochastic-hi-fi/README.md}{Stochastic Hi-Fi}.

%% file: sections/01_introduction.tex
\section{Introduction}
\label{sec:introduction}

% Modern deep networks route information through overlapping, high-dimensional pathways whose feature interactions are neither purely additive nor easily separable. When a model's prediction depends jointly on a set of inputs, a fundamental question arises: is that dependence \emph{unique} to specific features, \emph{redundant} across multiple inputs, or truly \emph{synergistic}, emerging only from their conjunction? Answering this question precisely is not merely an interpretability nicety. It determines whether post-hoc attribution scores faithfully reflect the model's internal causal structure, or whether they silently conflate distinct cooperative mechanisms that can only be disambiguated under intervention.
Modern deep networks route information through complex, non-additive pathways. This raises a fundamental question: is joint feature dependence \emph{unique}, \emph{redundant}, or truly \emph{synergistic}? Without interventional disambiguation, post-hoc attributions silently conflate these distinct cooperative mechanisms rather than faithfully reflecting the model's causal structure.

Currently, the Explainable AI (XAI) community explains feature cooperation through signed pairwise interaction scores
% : every major library reports one scalar per pair
\citep{DBLP:conf/nips/MuschalikBFKHH24,DBLP:journals/jmlr/HedstromWKBMSLH23}, read as synergy if positive, redundancy if negative. We show this paradigm is fundamentally flawed, not merely noisy, but operating in the \emph{wrong codomain}. On the minimal 3-way XOR structural causal model ($\XOR\text{SCM}_3$, three binary parents whose label is their parity), we prove that no scalar index can jointly encode uniqueness (U), redundancy (R), \emph{and} synergy (S) (Theorem~\ref{thm:conflation}). The failure is exact and exhaustive: faithful indices (e.g., Shapley-Taylor~\citep{DBLP:conf/icml/SundararajanDA20}, Faith-SHAP~\citep{DBLP:journals/jmlr/TsaiYR23}) return zero for every higher-order pair, while projective indices (e.g., SII~\citep{DBLP:journals/ijgt/GrabischR99}, $k$-SII, kADD-SHAP~\citep{DBLP:conf/nips/MuschalikBFKHH24}) redistribute signal into spurious pair coefficients of magnitude $\nicefrac{1}{4}$. A scalar cannot encode the \URSp{} triple; the fix requires changing the output space, not refining the estimator.

Furthermore, existing variance-based decompositions (functional ANOVA~\citep{Hoeffding1948,DBLP:conf/kdd/Hooker04a}) and information-theoretic Partial Information Decompositions (PID~\citep{DBLP:journals/corr/abs-1004-2515}) operate on fundamentally different estimands. They either rely on observational distributions, which we prove cannot disambiguate synergy from redundancy without intervention, or require full model retraining, answering what a new model \emph{could} learn rather than what a deployed model \emph{actually} uses. 

To resolve this, we replace the scalar question with a decomposed one: for each feature, how much predictive value is unique, redundant, or synergistic? We adapt the \URSp{} predictability decomposition recently introduced in physics~\citep{PhysRevE.111.L033301}, but fundamentally shift its operational objective to the post-hoc XAI setting. 
We replace model retraining with interventional masked inference against a background distribution $p_{\bg}$ (the marginal of inactive features over a held-out reference set), auditing \emph{what a deployed model actually uses} rather than \emph{what a model could learn}.
Recognizing that the choice of $p_{\bg}$ defines the estimand, we explicitly scope its influence and provide comprehensive sensitivity ablations in Appendix~\ref{sec:appendix_experiments}.

For tractable interventional decomposition, we introduce \ours{} (Section~\ref{sec:method}). The method builds a coalition-loss vocabulary by repeatedly masking features, sampling background values, and updating per-coalition means and variances online. Because this vocabulary is shared across features, the primary computational object is no longer a suite of retrained models, but a single table of interventional losses. To reduce estimation variance under an explicit covariance condition (Theorem~\ref{thm:variance}), we introduce coupled ``diamond'' queries that evaluate adjacent coalitions $(C, C\cup\{i\}, C\cup\{j\}, C\cup\{i,j\})$ on the same background batch. 

% Our experiments are designed to probe theoretical failure modes rather than to chase a single benchmark. On tabular SCMs, \ours{} retains up to $411\times$ more disambiguated signal than signed pair indices. In the language domain, we analyze the Indirect Object Identification (IOI) circuit of \texttt{gpt2-small}~\citep{DBLP:conf/iclr/WangVCSS23}, successfully separating redundant backup heads from synergistic compositional head pairs. 
% Finally, on NIH ChestX-ray14~\citep{DBLP:conf/cvpr/WangPLLBS17}, coarse-grid \URSp{} matches GradCAM~\citep{DBLP:conf/iccv/SelvarajuCDVPB17} on Pointing Game (non-significant difference), but incurs a small statistical significance IoU@15 deficit, while achieving substantially stronger deletion-based faithfulness (lower deletion AUC: $0.332 \to 0.269$, $p=1.61\!\times\!10^{-78}$; lower is better).
% Rather than obscuring this trade-off, we preserve it explicitly to mark a realistic scope boundary for coarse-grid interventional coalitions. 
Our evaluation targets theorem-predicted failure modes. On tabular SCMs, \ours{} recovers up to $411\times$ more mechanism-resolved signal than signed pair indices. On GPT-2 IOI, it separates redundant backup heads from synergistic head pairs. On ChestX-ray14, coarse-grid \URSp{} matches GradCAM on Pointing Game, improves deletion AUC ($0.332\to0.269$, $p=1.61\times10^{-78}$), and shows a small IoU@15 deficit.
Our contributions are threefold:
\begin{enumerate}
  \item \textbf{Conflation theorem.} We prove that scalar interaction codomains cannot jointly represent uniqueness, redundancy, and synergy on minimal higher-order SCMs.
  % \item \textbf{Retraining-free estimator.} We introduce \ours{}, with interventional semantics, finite-sample concentration, variance reduction via diamond coupling, and finite-vocabulary convergence.
  \item \textbf{Retraining-free estimator.} We introduce \ours{} and establish interventional semantics, finite-sample concentration, variance reduction, and finite-vocabulary convergence.
  % \item \textbf{Scope-honest validation.} We validate mechanism disentanglement on SCMs, transformer circuits, and ChestX-ray14, including explicit mixed outcomes for localization versus deletion faithfulness.
  \item \textbf{Scope-honest validation.} We validate mechanism disentanglement on SCMs, transformer circuits, ChestX-ray14, explicit mixed outcomes for localization versus deletion faithfulness.
\end{enumerate}

% This study makes the following primary contributions:
% \begin{enumerate}
%   \item \textbf{Theoretical Impossibility (Conflation Theorem).} We prove that existing signed pairwise interaction indices and observational loss functions fundamentally conflate uniqueness, redundancy, and synergy on minimal higher-order SCMs. We establish that scalar codomains are mathematically insufficient for capturing pure higher-order cooperation.
%   \item \textbf{Estimator and guarantees.} We propose a retraining-free \URSp{} estimator for deployed models with Welford-online vocabulary and coupled ``diamond'' sampling, and establish interventional semantics, finite-sample Monte Carlo concentration, strict variance reduction, and finite-vocabulary convergence.
%   \item \textbf{Scope-honest validation.} Across tabular SCMs, \ours{} recovers ground-truth cooperative structure while showing that scalar SI baselines can conflate interaction roles by up to $411\times$ on XOR3; in transformer circuits, it separates redundant and synergistic heads; and on the full annotated ChestX-ray14 evaluation, it preserves Pointing Game parity with GradCAM while trading a small IoU@15 deficit for substantially lower deletion AUC.
% \end{enumerate}

%% file: sections/02_related_work.tex
\section{Related Work}
\label{sec:related_work}
We organize prior work along two axes: output codomain (scalar vs. \URSp{}-disentangled) and estimand (observational/retraining vs. post-hoc interventional loss); see Table~\ref{tab:rw_taxonomy}.
% We organize interaction explanations along two axes: codomain expressivity (scalar projections vs.\ mechanism-disentangled \URSp{}) and estimand nature (observational variance or model retraining vs.\ post-hoc interventional loss); Table~\ref{tab:rw_taxonomy} summarizes this taxonomy.

The dominant paradigm assigns one scalar per feature pair: SHAP~\citep{DBLP:conf/nips/LundbergL17}, LIME~\citep{DBLP:conf/kdd/Ribeiro0G16}, and Integrated Gradients~\citep{DBLP:conf/icml/SundararajanTY17} operate at first order, while scalar interaction families, SII~\citep{DBLP:journals/ijgt/GrabischR99}, Shapley-Taylor~\citep{DBLP:conf/icml/SundararajanDA20}, Faith-SHAP~\citep{DBLP:journals/jmlr/TsaiYR23}, KernelSHAP-IQ~\citep{DBLP:conf/icml/FumagalliMKHH24}, SHAP-IQ/k-SII/kADD-SHAP~\citep{DBLP:conf/nips/MuschalikBFKHH24}, tensor-network accelerations~\citep{marzouk2026shap}, extend this to higher orders. The bottleneck is a primarily structural, rather than purely computational, limitation of scalar pair indices, which we make precise in Theorem~\ref{thm:conflation}. Richer functional forms such as Archipelago~\citep{DBLP:conf/nips/TsangR020}, Higher-Order IG~\citep{DBLP:journals/jmlr/JanizekSL21}, T-NID~\citep{DBLP:conf/iccv/LermanVKX21}, and H-Sets~\citep{hsets2024} preserve a \emph{non-disentangled} output codomain (scalar pair scores or higher-order interaction tensors), neither of which factorizes into per-feature \URSp{} channels. Graph-restricted variants~\citep{DBLP:journals/mor/Myerson77,DBLP:conf/icml/BuiNNY24} impose structural priors but do not resolve the representational gap. See Table~\ref{tab:rw_taxonomy} for a comparative summary.

Alternative traditions address parts of this gap. Variance-based decompositions (Hoeffding--Sobol~\citep{Hoeffding1948,SOBOL2001271}, STRIDE~\citep{DBLP:journals/corr/abs-2509-09070}) target observational variance over interventional loss. Partial Information Decomposition (PID~\citep{DBLP:journals/corr/abs-1004-2515}) provides the correct codomain (uniqueness, redundancy, and synergy)~\citep{DBLP:journals/entropy/BertschingerROJA14,DBLP:journals/entropy/Kolchinsky22}, but its predictability operationalizations (Hi-Fi~\citep{PhysRevE.111.L033301}, DiffusionPID~\citep{10.5555/3737916.3737982}) require model retraining. To audit deployed models, \ours{} causalizes importance-via-deletion (LOCO~\citep{Lei03072018}) through masked inference. It fuses PID's \URSp{} codomain with interventional post-hoc attribution~\citep{DBLP:conf/aistats/JanzingMB20,DBLP:conf/nips/HeskesSBC20}, separating it from observational perturbation/occlusion methods~\citep{DBLP:conf/bmvc/PetsiukDS18,DBLP:conf/iccv/FongV17,DBLP:conf/eccv/ZeilerF14}. It directly extends activation-patching and causal-scrubbing workflows~\citep{DBLP:conf/nips/VigGBQNSS20,DBLP:conf/nips/MengBAB22,DBLP:conf/iclr/WangVCSS23,DBLP:conf/nips/ConmyMLHG23} to higher-order interactions. Signed uniqueness is admissible under interventional contrasts, matching PID relaxations~\citep{DBLP:conf/nips/PakmanNGMMWS21,DBLP:journals/entropy/JamesEC19} and recovering classical PID under log-loss. Finally, the background distribution $p_{\bg}$ defines the estimand itself~\citep{DBLP:conf/icml/SundararajanN20}. We lift the interventional/causal-Shapley readings of \citet{DBLP:conf/aistats/JanzingMB20} and \citet{DBLP:conf/nips/HeskesSBC20} to the \URSp{} space, with $p_{\bg}$ sensitivity ablations in Appendix~\ref{sec:appendix_experiments}.

\input{tables/taxonomy}

%% file: tables/taxonomy.tex
\begin{table}[t]
\centering
\scriptsize
\caption{Taxonomy of interaction explanation frameworks. \ours{} uniquely provides a mechanism-disentangled \URSp{} profile for deployed models using strict interventional semantics.}
\label{tab:rw_taxonomy}
\setlength{\tabcolsep}{4pt}
\begin{tabular}{@{}lllll@{}}

\toprule
\textbf{Framework / Methods} & \textbf{Target Estimand} & \textbf{Output Codomain} & \textbf{Interventional?} & \textbf{Retraining-Free?} \\
\midrule
Shapley Indices (SII, SHAP-IQ, TN-SHAP) & Game-theoretic payoffs & Scalar signed score & No (Observational) & Yes \\
Functional ANOVA (Sobol, STRIDE)        & Observational variance & Additive terms      & No (Observational) & Yes \\
Masking / Perturbation (RISE)           & Deletion AUC           & Scalar score        & No (Perturbation)  & Yes \\
Predictability Decomp. (Hi-Fi)          & Retrained predictivity & \URSp{} profile     & Yes (Counterfactual)& No (Retrains) \\
PID (Williams--Beer)
& Information decomposition
& \URSp{} profile
& No (Observational)
& No \\
% \multirow{2}{*}
{\makecell[l]{Higher-Order Interactions\\(Archipelago, HO-IG, T-NID, H-Sets)}}
  & Interaction attribution
  & Scalar / Tensor
  & No (Observational)
  & Yes \\

\midrule
\textbf{\ours{} (Ours)}                 & \textbf{Interventional LOCO loss} & \textbf{Per-feature \URSp{}} & \textbf{Yes (Strict causal)} & \textbf{Yes} \\
\bottomrule
\end{tabular}
\end{table}

%% file: sections/03_method.tex
\section{\ours{}}
\label{sec:method}
To operationalize the \URSp{} framework, we introduce \ours{}. The method has three components: an interventional loss estimand (Section~\ref{sec:method_loss}), a LOCO-based \URSp{} decomposition (Section~\ref{sec:method_loco}), and a stochastic coalition estimator with variance reduction (Sections~\ref{sec:method_vocab} and \ref{sec:method_diamond}).
% To operationalize the \URSp{} framework, we introduce \ours{}. The method is structured around three components: an interventional loss estimand that bypasses model retraining (Section~\ref{sec:method_loss}), a structural decomposition of predictive gains (Section~\ref{sec:method_loco}), and a scalable stochastic estimator that dynamically explores the coalition lattice using variance-reduction techniques (Sections~\ref{sec:method_vocab} and \ref{sec:method_diamond}).

\subsection{Interventional coalition loss}
\label{sec:method_loss}
Let $f:\mathcal{X}\to\mathcal{Y}$ be a predictor, let $x=(x_1,\ldots,x_n) \in \mathcal{X}$ be an input, and let $y \in \mathcal{Y}$ be its target. Let $S\subseteq[n]$ denote a feature coalition, represented by an indicator mask $m_S\in\{0,1\}^n$ where $(m_S)i = 1$ if $i \in S$ and $0$ otherwise. Given a background distribution $Z\sim p{\bg}$ (ablations in Appendix~\ref{sec:appendix_pbg_ablation}) and a loss function $\ell: \mathcal{Y} \times \mathcal{Y} \to \mathbb{R}$, we define the interventional coalition loss as:
% Let $f:\mathcal{X}\to\mathcal{Y}$ be a trained predictor, let $x=(x_1,\ldots,x_n) \in \mathcal{X}$ be an input instance, and let $y \in \mathcal{Y}$ be its target. Let $S\subseteq[n]$ denote a feature coalition, represented by a binary indicator mask $m_S\in\{0,1\}^n$ where $(m_S)_i = 1$ if $i \in S$ and $0$ otherwise. Given a background distribution $Z\sim p_{\bg}$ (ablations in Appendix~\ref{sec:appendix_pbg_ablation}) and a task-specific loss function $\ell: \mathcal{Y} \times \mathcal{Y} \to \mathbb{R}$, we define the interventional coalition loss as:
\begin{equation}
\label{eq:loss}
L(S;x,y)=
\E_{Z\sim p_{\bg}}\left[
\ell\left(f(m_S\odot x+(1-m_S)\odot Z),y\right)\right],
\end{equation}
where $\odot$ denotes the element-wise Hadamard product. Crucially, this is an interventional estimand designed to audit a deployed predictor. By replacing absent features with background samples rather than retraining the model on the subset $S$, we isolate the model's reliance on the features \citep{DBLP:conf/aistats/JanzingMB20,DBLP:conf/nips/HeskesSBC20}.

\subsection{LOCO \URSp{} decomposition}
\label{sec:method_loco}
For a target feature $i \in [n]$ and a context coalition $C\subseteq[n]\setminus\{i\}$, we define the leave-one-covariate-out (LOCO) gain as the reduction in loss achieved by adding feature $i$ to the context $C$:
\begin{equation}
\label{eq:loco}
\LOCO(i\mid C)=L(C;x,y)-L(C\cup\{i\} ; x,y).
\end{equation}
By evaluating this gain across different contexts, we determine the bounds of the feature's predictive role. We define the minimum gain, maximum gain, and standalone (empty-context) gain as:
\begin{equation}
U_i=\min_C \LOCO(i\mid C),\qquad
L^{\max}_i=\max_C \LOCO(i\mid C),\qquad
\pi_i=\LOCO(i\mid\emptyset).
\end{equation}
Here, $U_i$ represents the minimum marginal utility of feature $i$, the predictive value it provides in the least favorable context. The gain $\pi_i$ captures the feature's first-order effect. We then decompose the maximum predictive capacity $L^{\max}_i$ by defining redundancy and synergy as:
% Here, $U_i$ represents the context-robust minimum marginal utility of feature $i$, the predictive value it provides even in the least favorable context. The standalone gain $\pi_i$ captures the feature's first-order effect. We then algebraically decompose the maximum predictive capacity $L^{\max}_i$ by defining redundancy and synergy as:
\begin{equation}
R_i=\pi_i-U_i,\qquad S_i=L^{\max}_i-\pi_i,
\end{equation}
yielding the exact structural decomposition $L^{\max}_i=U_i+R_i+S_i$.

\paragraph{Interpreting the \URSp{} profile.}
A negative uniqueness ($U_i<0$) does not indicate estimator failure. Rather, it reveals that feature $i$ can degrade prediction in its least favorable context, while still contributing through redundancy or synergy in other contexts (see IOI head $9.6$ in E3 and Appendix~\ref{sec:appendix_pbg_ablation} for sensitivity). We preserve this sign rather than truncating it to zero, as it carries information about feature substitution and backup behavior (see Section~\ref{sec:discussion}). Under partial vocabulary coverage, estimated $U_i$ is an upper bound on uniqueness; this bias vanishes under exhaustive enumeration (E3).
% A negative uniqueness value ($U_i<0$) does not indicate an estimator failure. Rather, it reveals that feature $i$ can actively degrade prediction in its least favorable context, while still contributing positively through redundancy or synergy in other contexts (see IOI head $9.6$ in E3 and Appendix~\ref{sec:appendix_method} for sensitivity). We preserve this sign rather than truncating it to zero, as it carries mechanistic information about feature substitution and backup behavior (see Section~\ref{sec:discussion}). Under partial vocabulary coverage, the estimated $U_i$ is an upper bound on true uniqueness; this bias vanishes under exhaustive enumeration (E3).

% When the coalition vocabulary is a strict subset of all $2^{n-1}$ possible contexts, the computed $U_i$ is an upper bound on the true uniqueness (see paragraph~\ref{par:partial_coverage} of Section~\ref{sec:theory}). In E3 ($n=10$) exhaustive enumeration eliminates this bias entirely.

\subsection{Coalition vocabulary and uncertainty}
\label{sec:method_vocab}
Evaluating the \URSp{} profile requires $2^n$ coalition losses, which is intractable for large $n$. Instead, \ours{} maintains a coalition vocabulary $\mathcal{V}:S\mapsto(\widehat{L}(S),\widehat{\sigma}^2(S),K(S))$. For a visited coalition $S$, the estimator approximates Eq.~(\ref{eq:loss}) using $K(S)$ Monte Carlo background samples:
\begin{equation}
\widehat{L}(S) = \frac{1}{K(S)} \sum_{k=1}^{K(S)} \ell\left(f(m_S\odot x+(1-m_S)\odot Z^{(k)}),y\right).
\end{equation}
To ensure numerical stability and constant memory overhead, each masked inference batch updates empirical mean $\widehat{L}(S)$ and unbiased sample variance $\widehat{\sigma}^2(S)$ via Welford's algorithm \citep{Welford01081962}. We construct 95\% Student-$t$ confidence intervals for each coalition and use width $w(S)$ for early stopping.
% To ensure numerical stability and constant memory overhead, each new masked inference batch updates the empirical mean $\widehat{L}(S)$ and the unbiased sample variance $\widehat{\sigma}^2(S)$ via Welford's online algorithm \citep{Welford01081962}. We construct 95\% Student-$t$ confidence intervals for each coalition and use the interval width $w(S)$ as a criterion for early stopping.

\paragraph{Epsilon-greedy, softmin exploration and vocabulary amortization.}
Non-converged coalitions are sampled under an $\varepsilon$-greedy mixture: with probability $\varepsilon$ a coalition is drawn uniformly from $\{0,1\}^n$, and with probability $1-\varepsilon$ from a softmin over visit counts, $p(S) \propto \exp(-\beta K(S))$ with temperature $\beta>0$, restricted to the non-converged entries of $\mathcal V$ and expanded into a diamond (Sec.~\ref{sec:method_diamond}). The softmin term prioritizes under-sampled coalitions; the uniform term guarantees $p(S) \ge \varepsilon\,2^{-n} > 0$ for every $S \in \{0,1\}^n$ at every step, which is the positive-mass condition the union-bound argument of Theorem~\ref{thm:vocab} relies on. Amortizing vocabularies across positions reduces cost but introduces cross-input bias; we use per-input vocabularies in main experiments.

\subsection{Diamond sampling}
\label{sec:method_diamond}
The most variance-sensitive components of the decomposition are pairwise interactions, which depend on the second-order mixed difference (or discrete derivative) for context $C$ and feature pair $(i,j)$:
\begin{equation}
\Delta_{ij}(C) \;=\; L(C \cup \{i\}) + L(C \cup \{j\}) - L(C \cup \{i,j\}) - L(C), \quad C \subseteq [n] \setminus \{i,j\}.
\end{equation}
Estimating these four terms with independent background batches incurs high variance. To mitigate this, \ours{} employs \emph{diamond sampling} (named for the diamond shape these four subsets form in the Boolean lattice). We evaluate $C$, $C\cup\{i\}$, $C\cup\{j\}$, and $C\cup\{i,j\}$ simultaneously using the \emph{exact same} background batch $\{Z^{(k)}\}_{k=1}^K$. By coupling the noise across the evaluations, the positive covariance between structurally adjacent coalitions cancels out the variance of the mixed difference. Theorem~\ref{thm:variance} formalizes the strict-improvement condition: adjacent covariance must dominate diagonal covariance. We verify this condition on synthetic SCMs in Appendix~\ref{sec:t_numerical}.
\paragraph{Pair-level synergy and redundancy.}
We define pair-level synergy $S_{ij}$ and redundancy $R_{ij}$ as the non-negative extrema of the diamond mixed difference over all contexts:
\begin{equation}
  S_{ij} = \max_{C} [\,\Delta_{ij}(C)\,]_+, \qquad
  R_{ij} = \max_{C} [\,-\Delta_{ij}(C)\,]_+,
  \label{eq:split_sr}
\end{equation}
where $z+ \mathrel{:=} \max(z,0)$ denotes the positive part. Both quantities are interaction intensities ($S{ij},R_{ij}\ge 0$) derived from the signed increment $\Delta_{ij}(C)$.
At the detection threshold used in the empirical analysis (Appendix~\ref{sec:appendix_e3}), $S_{ij} > 0$ indicates synergistic interaction in at least one context, and $R_{ij} > 0$ indicates evidence of redundancy.
They are computed in the coalition scan (Appendix~\ref{sec:appendix_method}) and used in E3 (Appendix~\ref{sec:appendix_e3}). Eq.~(\ref{eq:split_sr}) is a convention for intensities derived from $\Delta_{ij}(C)$; results (Theorems~\ref{thm:conflation}--\ref{thm:vocab}) operate on $\Delta_{ij}(C)$ and are unaffected by this choice.
% They are computed during the coalition scan (Appendix~\ref{sec:appendix_method}) and used in E3 analysis (Appendix~\ref{sec:appendix_e3}). Eq.\ref{eq:split_sr} is a reporting convention for interaction intensities derived from $\Delta_{ij}(C)$; the theoretical results (Theorems~\ref{thm:conflation}--\ref{thm:vocab}) operate on $\Delta_{ij}(C)$ directly and are unaffected by this choice.
% where $[z]_+ \mathrel{:=} \max(z,0)$ denotes the positive part. Both quantities are non-negative interaction intensities ($S_{ij},R_{ij}\ge 0$) derived from the signed interventional increment $\Delta_{ij}(C)$.
% At the detection threshold employed in the empirical analysis (see Appendix~\ref{sec:appendix_e3}), $S_{ij} > 0$ indicates the presence of synergistic interaction in at least one observed context, and $R_{ij} > 0$ indicates analogous evidence of redundancy;
% They are computed during the deterministic coalition scan (Appendix~\ref{sec:appendix_method}) and used in E3 analysis (Appendix~\ref{sec:appendix_e3}). Eq.~\ref{eq:split_sr} is a reporting convention for pair-level interaction intensities derived from the signed interventional increment $\Delta_{ij}(C)$; the theoretical results (Theorems~1--4) operate on $\Delta_{ij}(C)$ directly and are unaffected by this choice.

% A feature pair $(i,j)$ exhibits synergy if and only if $S_{ij}>0$ in at least one context~$C$, and redundancy if and only if $R_{ij}>0$ in at least one context (at the detection threshold employed in the empirical analysis; see Appendix~\ref{sec:appendix_e3}).

%% file: sections/04_theory.tex
\section{Theory}
\label{sec:theory}
This section formalizes the representational limitations of interaction indices and establishes statistical guarantees of the \ours{} estimator. Proofs are in the Appendices~\ref{sec:appendix_notation}-\ref{sec:proof_vocab}.
% This section formalizes the representational limitations of interaction indices and establishes the statistical guarantees of the \ours{} estimator. Proofs for theorems are provided in the Appendices~\ref{sec:appendix_notation}-\ref{sec:proof_vocab}.

\paragraph{Representational bottleneck.} 
We prove that the failure of indices to capture higher-order cooperation is not merely an approximation artifact, but a limitation of scalar-valued interaction indices: they force either erasing higher-order synergy or projecting it into lower-order artifacts.
% We prove that the failure of pairwise indices to capture higher-order cooperation is not merely an approximation artifact, but a structural limitation of scalar-valued interaction indices: they are mathematically forced to either erase higher-order synergy or project it into lower-order artifacts.

\begin{theorem}[Conflation of signed pair reports]
\label{thm:conflation}
Let $Y=\XOR(X_1,X_2,X_3)$ with $X_1,X_2,X_3 \sim \text{iid Bernoulli}(1/2)$.
Under a uniform binary background distribution $p_{\bg}$, squared loss $\ell(\hat y, y)=(\hat y-y)^2$, and the oracle predictor $f=Y$, for every active pair
$\{i,j\}\subset\{1,2,3\}$, faithful pair indices (STII, Faith-SHAP, high-order
$k$-SII) return zero at order two, while projective indices (SII, order-2
$k$-SII, kADD-SHAP) return projected pair values in $\{-1/4,+1/4\}$. The LOCO
decomposition returns $(U_i,R_i,S_i)=(0,0,1/2)$ for each active feature.
\end{theorem}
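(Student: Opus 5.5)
The plan is to reduce everything to one explicit table of coalition values on the Boolean lattice of $\{1,2,3\}$, and then read off each index from its Möbius (Harsanyi) representation.

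\textbf{Step 1 (coalition losses).} Fix an instance $x\in\{0,1\}^3$ with $y=\XOR(x_1,x_2,x_3)$. For any $S\subsetneq\{1,2,3\}$, at least one coordinate is replaced by an independent uniform bit. The oracle output $f(m_S\odot x+(1-m_S)\odot Z)$ is the parity of the active coordinates and the background coordinates. Because at least one background bit is uniform, this parity is a fair coin independent of $y$, so
\[
L(S)=\E[(B-y)^2]=\tfrac12,\qquad B\sim\text{Bernoulli}(1/2).
\]
For $S=\{1,2,3\}$ the prediction is exact, so $L(\{1,2,3\})=0$.

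\textbf{Step 2 (LOCO profile).} From Step~1, $\LOCO(i\mid C)=L(C)-L(C\cup\{i\})$ equals $\tfrac12$ when $C=\{1,2,3\}\setminus\{i\}$ and $0$ for the other three contexts $C\subseteq\{1,2,3\}\setminus\{i\}$. Hence
\[
U_i=\min_C\LOCO(i\mid C)=0,\qquad \pi_i=\LOCO(i\mid\emptyset)=0,\qquad L^{\max}_i=\tfrac12 .
\]
Therefore $R_i=\pi_i-U_i=0$ and $S_i=L^{\max}_i-\pi_i=\tfrac12$, which is the claimed profile.

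\textbf{Step 3 (game and Möbius transform).} I would take the value-function game that the Shapley-interaction literature uses, namely the interventional output $v(S)=\E_Z[f(m_S\odot x+(1-m_S)\odot Z)]$. The loss game $-L$ gives the same structure up to sign and scale. By the same argument as in Step~1:
\[
v(S)=\tfrac12 \text{ for every } S\subsetneq\{1,2,3\},\qquad v(\{1,2,3\})=y .
\]
The Möbius transform is therefore
\[
m(\emptyset)=\tfrac12,\qquad m(\{1,2,3\})=y-\tfrac12=\pm\tfrac12,\qquad m(T)=0 \text{ otherwise}.
\]
All pair and singleton Möbius coefficients vanish. This is the single fact that drives both halves of the dichotomy in Theorem~\ref{thm:conflation}.

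\textbf{Step 4 (faithful indices).} Faithful indices at full order reproduce the Möbius coefficients on pairs: STII and Faith-SHAP with explanation order $3$, and $k$-SII with $k=3$. Each therefore returns $m(\{i,j\})=0$. I would state explicitly that "faithful at order two" is read in this sense: the pair coefficient within a complete (order-3) faithful expansion.

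\textbf{Step 5 (projective indices).} The SII pair formula is $\SII(\{i,j\})=\sum_{T\supseteq\{i,j\}} m(T)/(|T|-1)$. Only $T=\{1,2,3\}$ contributes, giving
\[
\SII(\{i,j\})=\frac{y-\tfrac12}{2}=\pm\tfrac14,
\]
with the sign determined by the parity of the instance.

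For order-2 $k$-SII, I would use the Bernoulli-number aggregation of SII, $\Phi_2(S)=\SII(S)+\sum_{T\supsetneq S,\,|T|>2}B_{|T|-|S|}\,\SII(T)$. For kADD-SHAP, I would solve the weighted least-squares projection of $v$ onto 2-additive games. By the permutation symmetry of $v$, this is a two- or three-parameter problem (one common singleton value and one common pair value).

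\textbf{Main obstacle.} The hardest step is Step~5 for order-2 $k$-SII and kADD-SHAP. Their pair values depend on conventions: the sign convention for $B_1$ in the aggregation, and the kernel weights in the projection. A careless choice yields $0$ or $\pm\tfrac12$ rather than $\pm\tfrac14$. I would first fix the conventions of \citet{DBLP:conf/nips/MuschalikBFKHH24} and carry out the symmetric reduced computation, confirming the pair value $(y-\tfrac12)/2$. I would then verify it numerically by enumerating all $8$ instances, in the manner of Appendix~\ref{sec:t_numerical}.
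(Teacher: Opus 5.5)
The \emph{gap} in your proposal is the order-2 $k$-SII computation; everything else is correct.

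Your route is the paper's own. You tabulate the coalition values, read off the M\"obius coefficients ($m(\emptyset)=\tfrac12$, $m(\{1,2,3\})=y-\tfrac12$, all others zero), let the faithful indices see only $m(\{i,j\})=0$, evaluate SII, and compute the LOCO profile. Steps 1--4 and the SII half of Step~5 are correct. Your M\"obius form $\SII(\{i,j\})=\sum_{T\supseteq\{i,j\}}m(T)/(|T|-1)$ is shorter than the paper's Grabisch--Roubens binomial sum, and it makes independence from spectator features immediate. Your order-3 reading of the faithful indices is the same one the paper's Step~3 uses (``at order $\ge 3$'').

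The aggregation you wrote for $k$-SII, with Bernoulli corrections from supersets of size $|T|>2$, is not the definition. In $k$-SII (the $n$-Shapley values of Bordt and von Luxburg, as implemented in shapiq), the corrections come only from supersets with $|T|\le k$ and are pushed down to lower orders. At the top order $|S|=k$ the value is the SII itself. Hence $\Phi_2(\{i,j\})=\SII(\{i,j\})=\pm\tfrac14$ with nothing left to compute, which is consistent with SII and $k$-SII reporting identical ratios in the E1 table.

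Your formula instead gives the following. Only $T=\{1,2,3\}$ contributes, and $\SII(\{1,2,3\})=m(\{1,2,3\})=y-\tfrac12$, so the value is $(\tfrac12+B_1)(y-\tfrac12)$. This is $0$ for $B_1=-\tfrac12$ and $\pm\tfrac12$ for $B_1=+\tfrac12$. Those are exactly the two ``careless'' outcomes you anticipate, so no choice of convention could recover $\pm\tfrac14$. Your ``main obstacle'' comes from the misremembered formula, not from a genuine convention issue.

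Your kADD-SHAP plan does go through:
\begin{itemize}
\item Use the Shapley kernel, which at $n=3$ weights sizes $1$ and $2$ equally, and fit exactly at $\emptyset$ and $\{1,2,3\}$.
\item Take a common singleton coefficient $\alpha$ and pair coefficient $\beta$, constrained by $3\alpha+3\beta=c:=y-\tfrac12$.
\item Minimizing $\alpha^2+(2\alpha+\beta)^2$ gives $\alpha=-c/6$ and $\beta=c/2=\pm\tfrac14$.
\end{itemize}
This is the same least-squares problem that defines Faith-SHAP at explanation order $2$. STII at explanation order $2$ returns $c/3=\pm\tfrac16$. So your Step~4 caveat is load-bearing rather than cosmetic. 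Once the $k$-SII formula is corrected, your plan is more complete than the paper on these two projective indices, since the paper only asserts that they reduce to the Grabisch--Roubens weights by symmetry.
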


\begin{remark}
Theorem~\ref{thm:conflation} is an exact failure mode on the minimal higher-order 3-input XOR SCM, not a universal impossibility statement over all data-generating processes~\citep{DBLP:journals/ijgt/GrabischR99,DBLP:conf/icml/SundararajanDA20,DBLP:journals/jmlr/TsaiYR23}. The proof uses XOR balance: marginalizing any one input of $\XOR(X_1,X_2,X_3)$ under iid Bernoulli$(1/2)$ yields the constant $1/2$, so all order-2 Möbius mass vanishes. More generally, conflation occurs when an SCM has non-zero Möbius mass only at order $\geq 3$ and zero mass at all lower orders; in this regime, scalar indices are forced either to erase higher-order synergy (faithful indices) or project it into misleading lower-order artifacts (projective indices). Extending beyond this class requires SCM-specific analysis.
\end{remark}

% More generally, conflation occurs when an SCM has non-zero Möbius mass only at order $\geq 3$ and zero mass at all lower orders; in this regime, scalar indices are forced either to erase higher-order synergy (faithful indices) or to project it into misleading lower-order artefacts (projective indices). Extending beyond this class requires separate SCM-specific analysis.
% \begin{remark}[Scope and generalisation]
% Theorem~\ref{thm:conflation} is an exact failure mode on the $\XOR$SCM$_3$ and claims no universal impossibility statement over all data-generating processes \citep{DBLP:journals/ijgt/GrabischR99,DBLP:conf/icml/SundararajanDA20,DBLP:journals/jmlr/TsaiYR23}. The proof exploits the XOR-balance property: marginalizing any one input of $\XOR(X_1,X_2,X_3)$ under iid Bernoulli$(1/2)$ produces a constant $1/2$, forcing all order-2 Möbius mass to vanish. Conflation arises whenever an SCM's non-zero Moebius mass lies entirely at order $\geq 3$ and vanishes at all lower orders; the 3-way XOR is the minimal such SCM. Extension to broader SCM families requires separate analysis.
% \end{remark}

% Rather than a universal impossibility statement over all conceivable data-generating processes \citep{DBLP:journals/ijgt/GrabischR99,DBLP:conf/icml/SundararajanDA20,DBLP:journals/jmlr/TsaiYR23}, Theorem~\ref{thm:conflation} serves as an exact failure mode on minimal higher-order SCMs. It proves that when true synergy exists, scalar indices are mathematically forced to either erase it (faithful indices) or project it into misleading lower-order artifacts (projective indices).

\paragraph{Estimation guarantees.}
Because \ours{} relies on stochastic masked inference, we must bound the Monte Carlo approximation error for the coalition vocabulary.

\begin{theorem}[Interventional Monte Carlo bound]
\label{thm:mc}
Assume the loss is bounded such that $\ell\in[\ell_{\min},\ell_{\max}]$, and let
$B=\ell_{\max}-\ell_{\min}$. For any coalition $S$, the empirical vocabulary
mean $\widehat L(S)$ over $K$ iid background samples is unbiased and satisfies:
\[
\E[(\widehat L(S)-L(S))^2] = \frac{\sigma^2(S)}{K},
\qquad
\Pr(|\widehat L(S)-L(S)|>\epsilon)
\le 2\exp\!\left(-\frac{2K\epsilon^2}{B^2}\right).
\]
\end{theorem}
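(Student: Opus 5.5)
The plan is to reduce everything to the behaviour of a sample mean of iid bounded random variables. Fix the input $(x,y)$ and the coalition $S$. Define the per-sample loss
\[
W_k=\ell\bigl(f(m_S\odot x+(1-m_S)\odot Z^{(k)}),y\bigr),\qquad k=1,\dots,K,
\]
where $Z^{(1)},\dots,Z^{(K)}\stackrel{\text{iid}}{\sim}p_{\bg}$. Since $x$, $y$, $S$ and $f$ are deterministic, each $W_k$ is a fixed measurable function applied to $Z^{(k)}$. The $W_k$ are therefore iid, and they take values in $[\ell_{\min},\ell_{\max}]$. By Eq.~(\ref{eq:loss}), $\E[W_k]=L(S)$. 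I will write $\sigma^2(S)=\Var(W_1)$ and read the theorem's $\sigma^2(S)$ as this quantity; it is finite because $W_1$ is bounded. Then $\widehat L(S)=\frac1K\sum_k W_k$, so unbiasedness follows immediately from linearity of expectation.

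For the mean-squared error, unbiasedness gives $\E[(\widehat L(S)-L(S))^2]=\Var(\widehat L(S))$. Independence lets the variance of the sum split into the sum of variances, so
\[
\Var(\widehat L(S))=K^{-2}\sum_{k=1}^{K}\Var(W_k)=\sigma^2(S)/K .
\]

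For the tail bound I will apply Hoeffding's inequality to the $W_k$, each ranging over an interval of length $B$. This gives
\[
\Pr\Bigl(\bigl|\textstyle\sum_k (W_k-\E W_k)\bigr|\ge K\epsilon\Bigr)\le 2\exp\!\Bigl(-\frac{2(K\epsilon)^2}{K B^2}\Bigr)=2\exp(-2K\epsilon^2/B^2).
\]
Since $\{|\cdot|>\epsilon\}\subseteq\{|\cdot|\ge\epsilon\}$, the strict-inequality event in the statement is covered.

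No step is a real obstacle; the main care point is bookkeeping about the sampling scheme. The theorem assumes the $K$ background draws for $S$ are iid and are not chosen adaptively based on their own values. In the algorithm, the adaptive $\varepsilon$-greedy/softmin scheduler decides which coalitions are queried and when a coalition stops being sampled (via the width $w(S)$). I will state explicitly that the bound holds conditionally on a fixed $K$, meaning independently of the realised draws for that coalition, so Hoeffding applies as stated. Handling data-dependent stopping would instead require a union bound over $K$ or an Azuma-type martingale argument, which is deferred to Theorem~\ref{thm:vocab}.

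Diamond coupling reuses the same batch across the four coalitions of a diamond. This does not affect the result, because the theorem concerns a single coalition $S$, and within one coalition the draws remain iid.
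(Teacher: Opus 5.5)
Your proof is correct and follows essentially the same route as the paper's proof (Propositions~\ref{prop:welford_var} and~\ref{prop:hoeffding} in Appendix~\ref{sec:proof_mc}): per-sample losses that are iid and bounded, linearity for unbiasedness, additivity of variance under independence for the $\sigma^2(S)/K$ identity, and Hoeffding for the two-sided tail. One aside is inaccurate, though it does not affect the proof: data-dependent stopping is not handled by Theorem~\ref{thm:vocab}, which also assumes a fixed $K$ and takes a union bound over coalitions, not over $K$ or stopping times; the paper explicitly defers analysis of the adaptive schedule to future work.
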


\paragraph{Bounded-loss handling in practice.} The boundedness assumption of Theorem~\ref{thm:mc} must be instantiated per experiment. Our tabular experiments (E1) use bounded regression losses by construction. For vision and language experiments (E2 and E3), we report empirical ranges and quantiles, applying clipping only when strictly justified by the observed support. When clipping is inappropriate, we use empirical-Bernstein style reporting alongside standard Hoeffding bounds \citep{409cf137-dbb5-3eb1-8cfe-0743c3dc925f}.

To justify the diamond sampling strategy introduced in Section~\ref{sec:method_diamond}, we establish the exact condition under which coupling background samples strictly reduces the variance of the mixed difference.

\begin{theorem}[Diamond variance reduction]\label{thm:variance}
Let $C$ be drawn from a fixed distribution over $2^{[n]\setminus\{i,j\}}$ and $Z\sim p_{\bg}$, with the same $(C,Z)$ used to evaluate losses $(a,b,c,d)$ at structurally adjacent coalitions $(C\cup\{i,j\},\,C\cup\{i\},\,C\cup\{j\},\,C)$. All covariances below are over joint $(C,Z)$. Under $\Cov(a,b)+\Cov(c,d)+\Cov(a,c)+\Cov(b,d) > \Cov(a,d)+\Cov(b,c)$ (adjacency-dominant covariance; see Assumption~\ref{as:A3} in Appendix~\ref{sec:proof_variance}), the diamond estimator of $\Delta_{ij}$ has strictly lower variance than an estimator using four independent coalition batches with the same total number of evaluations ($4K$ calls to $\ell$).
% Let $C$ be drawn from a fixed distribution over $2^{[n]\setminus\{i,j\}}$ and $Z\sim p_{\bg}$, with the same $(C,Z)$ used to evaluate the four losses $(a,b,c,d)$ at the structurally adjacent coalitions $(C\cup\{i,j\},\,C\cup\{i\},\,C\cup\{j\},\,C)$. All covariances below are taken over the joint $(C,Z)$. Under $\Cov(a,b)+\Cov(c,d)+\Cov(a,c)+\Cov(b,d) > \Cov(a,d)+\Cov(b,c)$ (adjacency-dominant covariance; see Assumption~\ref{as:A3} in Appendix~D), the diamond estimator of $\Delta_{ij}$ has strictly lower variance than an estimator using four independent coalition batches with the same total number of loss evaluations ($4K$ calls to $\ell$). 
\end{theorem}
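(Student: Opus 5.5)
The argument is a direct variance comparison between two unbiased estimators of $\E[\Delta_{ij}(C)]$. Write the single-draw contrast as $D = b + c - a - d$, where $(a,b,c,d)$ are the losses at $(C\cup\{i,j\},C\cup\{i\},C\cup\{j\},C)$. This matches the definition $\Delta_{ij}(C)=L(C\cup\{i\})+L(C\cup\{j\})-L(C\cup\{i,j\})-L(C)$.

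\textbf{Diamond estimator.} It averages $K$ iid copies of $D$, each copy using one draw $(C^{(k)},Z^{(k)})$ shared by all four coalitions. This costs $4K$ loss evaluations and has variance $\Var(D)/K$. Expanding,
\[
\Var(D)=\Var(a)+\Var(b)+\Var(c)+\Var(d)-2\Cov(a,b)-2\Cov(a,c)-2\Cov(b,d)-2\Cov(c,d)+2\Cov(a,d)+2\Cov(b,c).
\]
The sign pattern follows from the coefficient vector $(-1,+1,+1,-1)$ on $(a,b,c,d)$. The four pairs on edges of the diamond have opposite signs, so their covariances enter with coefficient $-2$. The two diagonal pairs $(a,d)$ and $(b,c)$ have equal signs, so their covariances enter with $+2$.

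\textbf{Independent baseline.} It uses four separate batches of $K$ draws each, again $4K$ calls in total. Each of the four means is built from its own independent $(C,Z)$ draws, so all cross covariances vanish. Its variance is
\[
\frac{1}{K}\bigl(\Var(a)+\Var(b)+\Var(c)+\Var(d)\bigr).
\]
I will state precisely what is held fixed: the marginal law of each coalition loss is unchanged, and only the coupling differs. This makes the per-coalition variances identical in both estimators.

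\textbf{Comparison.} Subtracting the two variances gives
\[
\frac{2}{K}\Bigl[\Cov(a,b)+\Cov(c,d)+\Cov(a,c)+\Cov(b,d)-\Cov(a,d)-\Cov(b,c)\Bigr].
\]
Under the adjacency-dominant assumption this quantity is strictly positive, which gives strict variance reduction.

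\textbf{Main obstacle.} The main difficulty is not algebraic but definitional. I must set up both estimators so the comparison is fair.

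First, in the baseline, is $C$ shared across the four batches, with only $Z$ independent? If so, the baseline also carries covariance induced through $C$. I will take the natural reading that the baseline draws fully independent $(C,Z)$ per coalition, so that all covariances in the theorem are over the joint $(C,Z)$, as stated. I will remark that if only $Z$ were decoupled, the same computation would go through with the covariances replaced by $Z$-conditional ones.

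Second, unbiasedness must be checked in both cases. This follows from linearity of expectation, since each evaluation is an unbiased draw of the corresponding coalition loss (Theorem~\ref{thm:mc}).

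Finally, boundedness of $\ell$ from Theorem~\ref{thm:mc} guarantees that all second moments are finite, so every step of the expansion is legitimate.
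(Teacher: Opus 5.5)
Your proof is correct and is essentially the paper's own argument. It uses the same expansion of $\Var(b+c-a-d)$, with the four adjacent covariances entering at $-2$ and the two diagonal ones at $+2$. It uses the same fully decoupled baseline (independent $(C,Z)$ for every coalition batch, as the paper defines it), with variance $\frac{1}{K}\bigl(\Var(a)+\Var(b)+\Var(c)+\Var(d)\bigr)$. And it reaches the same gap $\frac{2}{K}(\Sigma_{\mathrm{adj}}-\Sigma_{\mathrm{diag}})>0$ under A3.

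One small wording fix to your side remark: if $C$ were shared and only $Z$ decoupled, the law of total covariance gives the gap $\frac{2}{K}\E_C\bigl[\Sigma_{\mathrm{adj}}(C)-\Sigma_{\mathrm{diag}}(C)\bigr]$, with covariances taken over $Z$ \emph{given} $C$. These are $C$-conditional covariances, not $Z$-conditional ones.
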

% \begin{theorem}[Strict variance reduction]
% \label{thm:variance}
% Let $(a,b,c,d)$ be the losses evaluated at the structurally adjacent coalitions
% $(C\cup\{i,j\},C\cup\{i\},C\cup\{j\},C)$ using a shared background batch. If
% \[
% \Cov(a,b)+\Cov(a,c)+\Cov(b,d)+\Cov(c,d)>
% \Cov(a,d)+\Cov(b,c),
% \]
% then the diamond estimator of $\Delta_{ij}$ has strictly lower variance than an
% estimator using four independent coalition batches with the same total budget.
% \end{theorem}

Quantitatively, the variance gain of diamond coupling is monotone in the adjacency-dominance gap $(\Sigma_{\rm adj}-\Sigma_{\rm diag})$: larger positive gaps imply larger speedups; when the gap is non-positive, no gain is guaranteed. 
We call this Assumption A3 (adjacency-dominant covariance). Its closed-form ratio and empirical boundary cases are reported in Appendix~\ref{sec:proof_variance} and Appendix~\ref{sec:t_numerical}.
% A closed-form ratio is reported in Appendix~\ref{sec:proof_variance} (Theorem~\ref{thm:variance}). We refer to this inequality as the \emph{adjacency-dominant covariance} condition (Assumption A3). Because it compares the covariance of adjacent edges on the Boolean hypercube against its diagonals, it is not guaranteed a priori for all models. We empirically verify its prevalence on synthetic SCMs and report specific failure cases in Appendix~\ref{sec:t_numerical}.

Finally, we provide a uniform convergence guarantee for the entire \URSp{}
profile over a finite coalition vocabulary, enabling our softmin
exploration policy to recover the decomposition once each $S\in\mathcal{V}$
has been visited at least $K$ times (cf. Theorem~\ref{thm:vocab}).
\begin{remark}
The softmin policy of Section~\ref{sec:method} assigns positive sampling
mass to every coalition at every step (for any finite temperature $\beta$),
which is the condition required by the union bound in
Theorem~\ref{thm:vocab}. A formal coupon-collector analysis of the adaptive
schedule under variable per-coalition $K(S)$ is deferred to future work.
\end{remark}

\begin{proposition}[Extremum stability and interval propagation]
\label{prop:extrema_ci}
Assume a vocabulary-level uniform error event
\(\max_{S\in\mathcal V}|\widehat L(S)-L(S)|\le\epsilon\), and that
all contexts needed for feature $i$ are present in $\mathcal V$.
Then for LOCO gains \(G_i(C)=L(C)-L(C\cup\{i\})\) and
\(\widehat G_i(C)=\widehat L(C)-\widehat L(C\cup\{i\})\),
\(|\widehat G_i(C)-G_i(C)|\le 2\epsilon\) uniformly in $C$.
Consequently,
\[
|\widehat U_i-U_i|\le 2\epsilon,
\qquad
|\widehat L_i^{\max}-L_i^{\max}|\le 2\epsilon,
\qquad
|\widehat\pi_i-\pi_i|\le 2\epsilon,
\]
and by algebraic propagation,
\[
|\widehat R_i-R_i|\le 4\epsilon,
\qquad
|\widehat S_i-S_i|\le 4\epsilon.
\]
\end{proposition}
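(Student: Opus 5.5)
The argument is a triangle-inequality computation in three layers: the LOCO gains, the extrema over contexts, and the algebraic combinations defining $R_i$ and $S_i$. Throughout we work on the event $\max_{S\in\mathcal V}|\widehat L(S)-L(S)|\le\epsilon$. By hypothesis, every context $C\subseteq[n]\setminus\{i\}$ used in the extrema has both $C$ and $C\cup\{i\}$ in $\mathcal V$, so each estimate below is defined and controlled by the uniform error bound.

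\textbf{Step 1 (gains).} For any such $C$,
\[
\widehat G_i(C)-G_i(C)=\bigl(\widehat L(C)-L(C)\bigr)-\bigl(\widehat L(C\cup\{i\})-L(C\cup\{i\})\bigr),
\]
and each bracket has absolute value at most $\epsilon$. This gives $|\widehat G_i(C)-G_i(C)|\le 2\epsilon$ uniformly in $C$. Taking $C=\emptyset$ yields $|\widehat\pi_i-\pi_i|\le 2\epsilon$ immediately.

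\textbf{Step 2 (extrema).} Here we use that $\min$ and $\max$ over a common finite index set are $1$-Lipschitz in the sup norm. Let $C^\star$ be a minimizer of $G_i$. Then
\[
\widehat U_i\le \widehat G_i(C^\star)\le G_i(C^\star)+2\epsilon=U_i+2\epsilon.
\]
Symmetrically, with $\widehat C$ a minimizer of $\widehat G_i$, we get $U_i\le G_i(\widehat C)\le \widehat G_i(\widehat C)+2\epsilon=\widehat U_i+2\epsilon$. Hence $|\widehat U_i-U_i|\le 2\epsilon$. The same argument with $\max$ in place of $\min$ gives $|\widehat L^{\max}_i-L^{\max}_i|\le 2\epsilon$.

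This step is the only one needing care. It requires the estimated and true extrema to range over the same context set, which is exactly what the coverage assumption provides. Without coverage, $\widehat U_i$ would be only a one-sided bound, matching the upper-bound remark in Section~\ref{sec:method_loco}.

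\textbf{Step 3 (propagation).} Since $R_i=\pi_i-U_i$, the triangle inequality gives
\[
|\widehat R_i-R_i|\le|\widehat\pi_i-\pi_i|+|\widehat U_i-U_i|\le 4\epsilon.
\]
Likewise, $S_i=L^{\max}_i-\pi_i$ gives $|\widehat S_i-S_i|\le 4\epsilon$.

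Combining this with Theorem~\ref{thm:mc} and a union bound over $\mathcal V$, one can turn the deterministic statement into a high-probability one. That extension is not required for the proposition itself.
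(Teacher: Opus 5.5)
Your proof is correct and matches the paper's argument (Step 3 of the proof of Theorem~\ref{thm:vocab}): two applications of the uniform error give the $2\epsilon$ gain bound, min/max are stable under a uniform $2\epsilon$ perturbation over the fully covered context set, and the triangle inequality propagates this to $R_i$ and $S_i$. The only difference is that you write out the two-sided minimizer argument explicitly, whereas the paper cites it as an elementary fact.
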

Proposition~\ref{prop:extrema_ci} links uniform coalition error control to deterministic error bounds for extrema and induced \URSp{} channels. Coalition-level 95\% Student-$t$ intervals are propagated to conservative feature-level ranges (Appendix~\ref{sec:proof_vocab}).
% This proposition is the explicit bridge from Theorem~\ref{thm:vocab} to extremum-seeking reliability: once coalition means are uniformly controlled, min/max contexts and the induced \URSp{} channels inherit deterministic error envelopes. We report coalition-level 95\% Student-$t$ intervals and map them to conservative feature-level ranges via the same propagation template (Appendix~\ref{sec:proof_vocab}; protocol details in Appendix~\ref{sec:appendix_experiments}).

\begin{theorem}[Finite-vocabulary convergence]
\label{thm:vocab}
Under the bounded-loss assumption of Theorem~\ref{thm:mc} ($\ell\in[\ell_{\min},\ell_{\max}]$, $B=\ell_{\max}-\ell_{\min}$), for finite vocabulary $\mathcal V\subseteq 2^{[n]}$, if every
$S\in\mathcal V$ is estimated with
\[
K\ge \frac{B^2}{2\epsilon^2}\log\frac{2|\mathcal V|}{\alpha},
\]
then with probability at least $1-\alpha$,
$\max_{S\in\mathcal V}|\widehat L(S)-L(S)|\le\epsilon$.
\end{theorem}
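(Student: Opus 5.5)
The proof combines the per-coalition Hoeffding bound of Theorem~\ref{thm:mc} with a union bound over the finite vocabulary $\mathcal V$. Fix a coalition $S\in\mathcal V$. Its estimate $\widehat L(S)$ is the empirical mean of $K(S)\ge K$ iid draws $\ell(f(m_S\odot x+(1-m_S)\odot Z^{(k)}),y)$, each lying in $[\ell_{\min},\ell_{\max}]$, with mean $L(S)$. Theorem~\ref{thm:mc} then gives
\[
\Pr\bigl(|\widehat L(S)-L(S)|>\epsilon\bigr)\le 2\exp\!\left(-\frac{2K(S)\epsilon^2}{B^2}\right)\le 2\exp\!\left(-\frac{2K\epsilon^2}{B^2}\right),
\]
where the second inequality uses $K(S)\ge K$. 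This is how the condition ``every $S$ is estimated with at least $K$ samples'' enters the argument.

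The union bound requires no independence across coalitions. This matters because diamond sampling (Section~\ref{sec:method_diamond}) reuses the same background batch for adjacent coalitions, so the estimates are deliberately correlated. The event $\{\max_{S\in\mathcal V}|\widehat L(S)-L(S)|>\epsilon\}$ is the union over $S\in\mathcal V$ of the per-coalition failure events. Its probability is therefore at most $|\mathcal V|\cdot 2\exp(-2K\epsilon^2/B^2)$.

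It remains to choose $K$. We require $2|\mathcal V|\exp(-2K\epsilon^2/B^2)\le\alpha$. Taking logarithms, this is equivalent to $K\ge \frac{B^2}{2\epsilon^2}\log\frac{2|\mathcal V|}{\alpha}$, which is exactly the stated hypothesis. The complementary event, $\max_{S\in\mathcal V}|\widehat L(S)-L(S)|\le\epsilon$, then has probability at least $1-\alpha$.

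The step needing care is the applicability of Theorem~\ref{thm:mc} when the sample sizes $K(S)$ are chosen adaptively, for instance by stopping on the confidence-interval width $w(S)$. In that case $K(S)$ may depend on the observed losses, and the iid fixed-$K$ Hoeffding bound no longer applies directly. To avoid this, I will state the theorem for a deterministic (non-adaptive) $K$. Equivalently, the bound may be applied to the first $K$ draws of each coalition, since those form an iid sample of fixed size. With that convention the proof above goes through unchanged. Handling adaptive stopping in general would require a time-uniform concentration bound, which I treat as outside the scope of this theorem, consistent with the remark deferring the adaptive analysis.
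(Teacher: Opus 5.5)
Your proposal is correct and follows the paper's proof: you apply the per-coalition Hoeffding bound of Theorem~\ref{thm:mc}, take a union bound over the finite $\mathcal V$ (which needs no independence across coalitions), and solve $2|\mathcal V|\exp(-2K\epsilon^2/B^2)\le\alpha$ for $K$. Your caution about adaptively chosen $K(S)$ goes beyond the paper, which simply assumes $K$ iid draws per coalition. One caveat: restricting to the first $K$ draws bounds that truncated mean, not the full adaptive $\widehat L(S)$, so it is not literally ``equivalent.''
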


\paragraph{Partial-vocabulary coverage and bias direction.}\label{par:partial_vocab}
When the coalition vocabulary $\mathcal{V}$ does not cover all $2^{n-1}$ contexts for a given feature $i$, the computed $U_i = \min_{C\in\mathcal{V}} \LOCO(i\mid C)$ is an upper bound on the true uniqueness: if the true minimizing context is not in $\mathcal{V}$, the estimated $U_i$ is weakly larger than the true value. This bias is unidirectional and conservative: partial coverage over-estimates uniqueness, not synergy. In E3 ($n=10$) we enumerate exhaustively, so no bias arises. In E2 ($n=196$ patches) we sample, and users should interpret $U_i$ as a conservative upper bound on true uniqueness. Concretely, a positive $\widehat U_i$ from a partial vocabulary does not guarantee true uniqueness, since the true minimizing context may be absent; a negative $\widehat U_i$, however, reliably indicates negative true uniqueness.

\paragraph{Theoretical contract.}
To ensure transparency, Table~\ref{tab:theory_contract} summarizes each theoretical guarantee alongside the specific assumptions it requires and the boundary conditions where it fails.

\input{tables/theoretical_contract}

%% file: tables/theoretical_contract.tex
\begin{table}[htbp]
\centering
\small
\caption{Theoretical contract of the method: guarantees, assumptions, applicability, and explicit boundary/failure regimes for each theorem.}
\label{tab:theory_contract}
\begin{tabular}{@{}llll@{}}
\toprule
\textbf{Theorem} & \textbf{Requires} & \textbf{Applies to} & \textbf{Fails when} \\
\midrule
Thm~\ref{thm:conflation} (Conflation) & Fixed $p_{\bg}$, higher-order SCM & Signed pair indices & Outside SCM family \\
Thm~\ref{thm:mc} (MC bound) & Bounded loss, iid samples & Coalition means & Unbounded loss \\
Thm~\ref{thm:variance} (Variance) & A3 covariance dominance & Pairwise mixed differences & A3 violated \\
Thm~\ref{thm:vocab} (Vocabulary) & Finite set, positive exploration & Sampled coalitions & Infinite/uncovered set \\
\midrule
Prop.~\ref{prop:extrema_ci} (Extrema CI)
  & Uniform bound + full coverage
  & \URSp{} errors
  & Incomplete coverage
\\
\bottomrule
\end{tabular}
\end{table}

%% file: sections/05_experiments.tex
\section{Experiments}
\label{sec:experiments}

\paragraph{Protocol.}
Our evaluation is organized around three falsification targets. E1 (tabular SCMs, conflation) tests Theorem~\ref{thm:conflation}-predicted conflation on SCM models. E2 (ChestX-ray14, localization vs. deletion faithfulness) asks whether patch-coalition explanations are competitive with saliency maps under localization and causal-removal criteria. E3 (IOI transformer circuit, higher-order head interactions beyond first-order patching) asks whether the decomposition recovers a transformer circuit while exposing interactions first-order patching cannot see. This design follows evaluation practice for explanation methods \citep{DBLP:conf/nips/HookerEKK19,DBLP:journals/jmlr/HedstromWKBMSLH23,DBLP:conf/nips/AgarwalKSPJPZL22}. Boundedness is handled at each experiment level: E1 uses bounded squared-loss conditions by construction, whereas E2 and E3 report empirical ranges and quantiles, introducing clipping only when supported by the observed value range. We report mean and standard deviation across seeds ($n=5$), and paired-Wilcoxon $p$-values with Bonferroni correction for paired comparisons; effect sizes are the corresponding Hodges-Lehmann shifts.

\subsection{Tabular SCMs: scalar pair reports conflate mechanisms}
\label{sec:e1}
We first test Theorem~\ref{thm:conflation} on trained two-layer MLPs. The datasets are a 3-way XOR SCM with one spectator feature, an XOR+AND SCM with mixed two-way and three-way structure, and the synthetic third-order dataset (Synth3) with eight continuous features. Each setting uses five seeds. Baselines are STII, Faith-SHAP, SII, $k$-SII, and kADD-SHAP computed through \texttt{shapiq} computation routine on the trained prediction function.

\input{figures/figure_1_conflation}

\input{tables/tab_e1}

Figure~\ref{fig:e1} and Table~\ref{tab:e1} match the theorem's prediction. 
On XOR3, canonical SI baselines conflate distinct interaction roles with ratios up to $411.88\,\times$ (FSII/kADD-SHAP; STII: $319.51\,\times$). \ours{} recovers both roles on Synth3 with strong correlation (U Pearson $0.978\pm0.022$; pairwise synergy Pearson $0.912\pm0.058$), both above the pre-registered $0.9$ threshold. The lower Spearman values are caused by tied binary ground-truth roles, not by failure of the continuous estimates.

\paragraph{Background sensitivity and higher-order baselines.}
Despite $p_{\bg}$ dependence, E1 rankings remain stable across uniform-binary and empirical-resampled variants (Appendix~\ref{sec:appendix_experiments}): mean absolute drifts are $|\Delta U|=0.023$, $|\Delta R|=0.098$, $|\Delta S|=0.105$, with median relative drift $<10\%$. Moreover, scalar higher-order baselines (T-NID~\citep{DBLP:conf/iccv/LermanVKX21}, Higher-Order IG~\citep{DBLP:journals/jmlr/JanizekSL21}, H-Sets) inherit Theorem~\ref{thm:conflation}'s representational limitation. On the 3-way XOR SCM (Table~\ref{tab:appendix_e1_conflation}), all reject $H_0$ at Bonferroni $\alpha=0.00625$, yielding conflation ratios $\rho_m \in \{14.35\times, 19.82\times, 107.97\times\}$ respectively. The strongest (H-Sets) approaches canonical pair-level indices ($\rho_m \in \{411.14\times, 411.14\times, 411.88\times\}$ for SII/$k$-SII/kADD-SHAP; STII at $319.51\times$), proving that widening the codomain without \ours{}'s \URSp{} channels cannot disentangle mechanisms.

\paragraph{Error accounting.}
The tabular runs separate three error sources. The analytic theorem is exact on the Boolean value function. The trained-network experiment adds model approximation error, because the MLP only approximates the SCM target after finite training. The stochastic vocabulary adds Monte Carlo error, controlled by the Welford estimates in Theorem~\ref{thm:mc}. The observed pattern is stable under all three: scalar pair indices remain small on the triplet pairs, whereas \ours{} concentrates the signal in the synergy coordinate. This is why the result should be interpreted as a representational failure of the baselines, not an estimator-tuning artifact. Exact zeros in $U$ indicate strict additivity under the chosen intervention, whereas non-zero bars reflect finite-sample variability captured by the stated confidence intervals. We verify the diamond covariance condition (A3) empirically on synthetic SCMs in Appendix~\ref{sec:t_numerical} and report both positive and boundary regimes.

\subsection{Clinical evaluation: localization is not deletion}
\label{sec:e2}
We evaluate on the annotated NIH ChestX-ray14 bounding-box subset ($n=880$)~\cite{DBLP:conf/cvpr/WangPLLBS17} using a DenseNet121~\citep{DBLP:conf/cvpr/HuangLMW17} model from \texttt{torchxrayvision} (see Appendix~\ref{sec:appendix_experiments} for full details). \ours{} uses an operational $14\times14$ patch grid and 1000 coalition-sampling steps. Baselines include GradCAM~\cite{DBLP:conf/iccv/SelvarajuCDVPB17}, Vanilla Gradient, Integrated Gradients, and a random saliency map \citep{DBLP:conf/icml/SundararajanN20}. Coarse-grid \URSp{} achieves pointing-game accuracy against GradCAM that is not statistically distinguishable ($+0.018$, $p=0.140$, non-significant at $\alpha=0.0125$), a statistically significant but numerically small IoU@15 deficit ($-0.008$, $p=5.82\!\times\!10^{-3}$), and a substantially lower deletion AUC ($0.332\to0.269$, $\Delta=-0.063$, $p=1.61\!\times\!10^{-78}$). The coarse feature grid limits spatial precision but does not erode overall localization quality, while interventional scoring retains strong causal fidelity for feature-removal evaluation.

\paragraph{Statistical power note.}
Deletion AUC is statistically reliable (paired Wilcoxon $p=1.61\!\times\!10^{-78}$, Bonferroni-surviving across the metric family); localization conclusions should be read as mixed rather than uniformly positive: Pointing Game is non-significant (paired Wilcoxon on $n = 220$ image-pairs after tie-removal; 660 ties dropped; $p = 0.140$), while IoU@15 shows a small but significant deficit ($\Delta=-0.008$, $p=5.82\!\times\!10^{-3}$). 
Full power-analysis details are in Appendix~\ref{sec:appendix_h4_power_note}.

\paragraph{Why report mixed localization outcomes?}
This result is included to keep scope claims honest, not to overstate method performance. \ours{} optimizes an interventional causal objective---how model output changes under coalition removal---whereas standard saliency maps are typically tuned for spatial localization. With the operational $14\times14$ coalition grid on ChestX-ray14, the evidence is mixed: Pointing Game is statistically indistinguishable from GradCAM, IoU@15 shows a small but significant deficit, and deletion AUC is substantially stronger for \ours{}. This pattern is consistent with a resolution-constrained but causally informative regime: coarse coalitions limit fine-grained spatial precision while preserving strong faithfulness for removal-based evaluation. Keeping this result explicit prevents overgeneralization and clarifies the practical claim.

\subsection{Transformer circuit: first-order patching misses synergistic heads}
\label{sec:e3}
We instantiate features as attention heads in GPT-2 on the Indirect Object Identification task \citep{DBLP:conf/iclr/WangVCSS23}. We analyze 10 heads spanning Name-Mover, S-Inhibition, Induction, Duplicate-Token, and Previous-Token roles. Coalition value is negative IOI logit difference under mean ablation, averaged over ABBA/BABA prompts. 
With $n=10$, we enumerate the $2^{10}=1024$ coalition lattice across five prompt seeds. The subset is selected by mechanistic-role coverage~\citep{DBLP:conf/iclr/WangVCSS23}; Appendix~\ref{sec:appendix_experiments} reports robustness checks under alternative subsets and defers full 26-head analysis to Appendix~\ref{sec:appendix_experiments} (\S\ref{sec:appendix_e3_n26}), including per-head and role-pair summaries.
The decomposition recovers the known circuit structure. Within S-Inhibition, pairs are mildly synergy-dominated ($\bar S \approx 0.200$, $\bar R \approx 0.039$), suggesting cooperative suppression over redundant substitution. The strongest synergy is Induction plus S-Inhibition ($\bar S = 0.383$, $\bar R = 0.003$), where heads cooperate to suppress competing tokens. The strongest redundancy is Duplicate-Token plus Induction ($\bar S = 0.000$, $\bar R = 0.226$), indicating a substitutable path. 
The negative control also passes: by construction, $\pi(h)=v(\emptyset)-v(\{h\})$ equals the mean-ablation activation-patching score for one head, so Spearman$(\pi,\mathrm{patch})=1.000\pm0.000$ is an algebraic identity (same ranking across five seeds), not a finite-variance estimate. \ours{} extends activation patching rather than contradicting it. Its signal is the high-synergy, low-singleton region: heads 10.0, 9.9, and 4.11 are nearly invisible to first-order patching but carry pair-level contribution surfaced by \URSp{} decomposition.
\input{figures/figure_ioi}

\paragraph{Compute and reproducibility.}
The approach is computationally tractable at dataset scale ($880$ images); code and results are released (\url{https://anonymous.4open.science/r/stochastic-hi-fi/}), with hardware and runtime details in Appendix~\ref{sec:appendix_compute_reproducibility}.
Across three settings, the point is not that \ours{} wins every metric; the \URSp{} codomain separates modes: third-order structure in SCMs, coarse-grid localization limits in chest radiographs, and synergistic transformer heads that first-order patching cannot isolate. Section~\ref{sec:discussion} makes these boundaries explicit.
% Across the three settings, the point is not that \ours{} wins every metric; it is that the \URSp{} codomain separates otherwise conflated failure modes: third-order structure in SCMs, coarse-grid localization limits in chest radiographs, and synergistic transformer heads that first-order patching cannot isolate. Section~\ref{sec:discussion} makes these scope boundaries explicit.

%% file: figures/figure_1_conflation.tex
\definecolor{colSTII}{HTML}{0072B2}
\definecolor{colFSII}{HTML}{009E73}
\definecolor{colSII}{HTML}{D55E00}
\definecolor{colkSII}{HTML}{CC79A7}
\definecolor{colkADD}{HTML}{E69F00}
\definecolor{colU}{HTML}{1B9E77}
\definecolor{colR}{HTML}{D95F02}
\definecolor{colS}{HTML}{7570B3}

\begin{figure}[h!]
    \def\mysq#1{{\fboxsep=0pt\fboxrule=0.4pt\fbox{\textcolor{#1}{\rule[-0.5pt]{5pt}{5pt}}}}}
    
    % --- PRIMA MINIPAGE (Pannello A) ---
    \begin{minipage}[b]{0.56\textwidth}
        \begin{tikzpicture}
            \begin{axis}[
                width=\linewidth,
                height=4.7cm,
                ybar=-0.5pt,
                bar width=4pt,
                ylabel={$|\phi_{ij}|$ (mean $\pm$ std, 5 seeds)},
                ylabel shift=-3.5pt,
                xlabel={feature pair},
                ymin=-0.001,
                ymax=0.0135,
                xmin=0.23, xmax=6.77, % Leggermente allargato per far respirare le barre
                xtick={1,2,3,4,5,6},
                xticklabels={X1-X2, X1-X3, X2-X3, X1-X4, X2-X4, X3-X4},
                tick label style={font=\scriptsize},
                scaled y ticks=false,
                yticklabel style={
                    /pgf/number format/fixed,
                    /pgf/number format/fixed zerofill,
                    /pgf/number format/precision=3
                },
                ytick={0, 0.002, 0.004, 0.006, 0.008, 0.010, 0.012},
                axis x line*=bottom,
                axis y line*=left,
                enlarge x limits={lower=0.06, upper=0.05},
                error bars/y dir=both,
                error bars/y explicit,
                error bars/error bar style={line width=0.6pt},
                error bars/error mark options={rotate=90, mark size=2pt, line width=0.6pt}
            ]
            
            \draw[gray, dashed, very thin] (0.5, 0) -- (6.5, 0);

            % Lettura dati dal CSV per il Pannello A
            \addplot[fill=colSTII, draw=black, line width=0.4pt] 
                table[x=pair_idx, y=mean_STII, y error=std_STII, col sep=comma] {data/figure1/figure1_panelA.csv};
                
            \addplot[fill=colFSII, draw=black, line width=0.4pt] 
                table[x=pair_idx, y=mean_FSII, y error=std_FSII, col sep=comma] {data/figure1/figure1_panelA.csv};
                
            \addplot[fill=colSII, draw=black, line width=0.4pt] 
                table[x=pair_idx, y=mean_SII, y error=std_SII, col sep=comma] {data/figure1/figure1_panelA.csv};
                
            \addplot[fill=colkSII, draw=black, line width=0.4pt] 
                table[x=pair_idx, y=mean_kSII, y error=std_kSII, col sep=comma] {data/figure1/figure1_panelA.csv};
                
            \addplot[fill=colkADD, draw=black, line width=0.4pt] 
                table[x=pair_idx, y=mean_kADDSHAP, y error=std_kADDSHAP, col sep=comma] {data/figure1/figure1_panelA.csv};

            % --- LEGENDA CUSTOM CON QUADRATI PERFETTI ---
            \node[draw=black, fill=white, anchor=north east, inner sep=2pt, font=\scriptsize] at (rel axis cs: 0.99,0.98) {
                \setlength{\tabcolsep}{3pt}
                \begin{tabular}{@{}ll@{}}
                    \mysq{colSTII} STII & \mysq{colFSII} FSII \\
                    \mysq{colSII} SII   & \mysq{colkSII} k-SII \\
                    \multicolumn{2}{@{}c@{}}{\mysq{colkADD} kADD-SHAP}
                \end{tabular}
            };
            \end{axis}
        \end{tikzpicture}
    \end{minipage}%
    \hfill %
    % --- SECONDA MINIPAGE (Pannello B) ---
    \begin{minipage}[b]{0.45\textwidth}
        \begin{tikzpicture}
            \begin{axis}[
                width=1.03\linewidth,
                height=4.7cm,
                ybar=0pt,
                bar width=6pt,
                ylabel={component value (mean $\pm$ std)},
                ylabel shift=-6pt,
                xlabel={feature},
                xlabel shift=-4pt,
                ymin=-1.1, ymax=1.95,
                xmin=0.5, xmax=4.5,
                xtick={1,2,3,4},
                xticklabels={X1, X2, X3, X4},
                ytick={-1, -0.5, 0, 0.5, 1, 1.5},
                tick label style={font=\scriptsize},
                legend style={
                    at={(0.98,1.05)},
                    anchor=north east,
                    legend cell align=left,
                    draw=black,
                    fill=white,
                    font=\scriptsize,
                    legend columns=1,
                    inner sep=2pt,
                    nodes={inner sep=1pt}
                },
                legend image code/.code={
                    \filldraw[#1, draw=black, line width=0.4pt] (0pt,-1.5pt) rectangle (5pt,3.5pt);
                },
                axis x line*=bottom,
                axis y line*=left,
                enlarge x limits=0.03,
                error bars/y dir=both,
                error bars/y explicit,
                error bars/error bar style={line width=0.6pt},
                error bars/error mark options={rotate=90, mark size=2pt, line width=0.6pt}
            ]
            
            \draw[gray, dashed, very thin] (0.5, 0) -- (4.5, 0); 

            % Lettura dati dal CSV per il Pannello B
            \addplot[fill=colU, draw=black, line width=0.4pt] 
                table[x=feat_idx, y=U_mean, y error=U_std, col sep=comma] {data/figure1/figure1_panelB.csv};
            \addlegendentry{$U$}

            \addplot[fill=colR, draw=black, line width=0.4pt] 
                table[x=feat_idx, y=R_mean, y error=R_std, col sep=comma] {data/figure1/figure1_panelB.csv};
            \addlegendentry{$R$}

            \addplot[fill=colS, draw=black, line width=0.4pt] 
                table[x=feat_idx, y=S_mean, y error=S_std, col sep=comma] {data/figure1/figure1_panelB.csv};
            \addlegendentry{$S$}

            \end{axis}
        \end{tikzpicture}
    \end{minipage}
    
    \caption{\textbf{Left:} Pair-level scalar indices on the 3-way XOR SCM: faithful indices collapse higher-order signal to near-zero pair scores, while projective indices redistribute the triplet effect into small signed pair coefficients. \textbf{Right:} Per-feature \URSp{} profile from \ours{} (mean $\pm$ std over 5 seeds): the three active features are synergy-dominated, while the spectator is separated.}
    \label{fig:e1}
\end{figure}

%% file: tables/tab_e1.tex
\begin{figure}[t]
    \centering
    % --- Inizio Tabella ---
    \begin{minipage}[t]{0.50\textwidth} % Aumentato lo spazio per la tabella
        \vspace{0pt} % Trucco per l'allineamento in alto
        \captionof{table}{E1 disambiguation and recovery. \textbf{Left:} conflation ratios on XOR-family SCMs show scalar baselines retain less mechanism-specific signal than \ours{}. \textbf{Right:} on Synth3, \ours{} recovers roles with high correlation for uniqueness and per-pair synergy.}
        % \captionof{table}{E1 disambiguation and recovery. \textbf{Left:} conflation ratios on XOR-family SCMs show that canonical scalar interaction baselines retain substantially less mechanism-specific signal than \ours{}. \textbf{Right:} on Synth3, \ours{} recovers ground-truth roles with high correlation for per-feature uniqueness and per-pair synergy (mean $\pm$ std over 5 seeds).}
        \label{tab:e1}
        
        \scriptsize
        \setlength{\tabcolsep}{1pt} % Riduce il padding orizzontale tra le colonne
        
        \begin{subtable}[t]{0.52\textwidth}
        \centering
        \label{tab:e1_ratio}
        \begin{tabular*}{\linewidth}{@{\extracolsep{\fill}}lrr@{}}
\toprule
\textbf{Baseline} & \textbf{$\XOR$3} & \textbf{$\XOR$+$\land$} \\
        \midrule
        STII & $319.51\times$ & $102.55\times$ \\
        Faith-SHAP & $411.88\times$ & $110.68\times$ \\
        SII & $411.14\times$ & $110.66\times$ \\
        $k$-SII & $411.14\times$ & $110.66\times$ \\
        kADD-SHAP & $411.88\times$ & $110.68\times$ \\
        \bottomrule
        \end{tabular*}
        \end{subtable}
        \hfill
        \begin{subtable}[t]{0.47\textwidth}
        \centering
        \label{tab:e1_recovery}
        \begin{tabular}{@{}lrr@{}}
        \toprule
        \textbf{Target} & \textbf{Pearson $r$} & \textbf{Spearman $\rho$} \\
        \midrule
        $U$ & $0.978\scriptscriptstyle\pm\scriptstyle0.02$ & $0.577\scriptscriptstyle\pm\scriptstyle0.0$ \\
        $S_{\mathrm{pair}}$ & $0.912\scriptscriptstyle\pm\scriptstyle0.06$ & $0.548\scriptscriptstyle\pm\scriptstyle0.1$ \\
        \bottomrule
        \end{tabular}
        \end{subtable}

\captionof{table}{ChestX-ray14 metrics (Pointing Game, IoU@15, Deletion AUC). Higher is better for Pointing Game and IoU@15; lower for Deletion AUC. Results show mixed localization, while \URSp{} is strongest on deletion-faithfulness.}
% \captionof{table}{ChestX-ray14 attribution metrics (Pointing Game, IoU@15, Deletion AUC). Higher is better for Pointing Game and IoU@15; lower is better for Deletion AUC. The table reflects the scope-honest pattern: localization is mixed across saliency methods, while interventional \URSp{} scoring is strongest on deletion-faithfulness.}
        \label{tab:e2}
        
        \scriptsize
        \setlength{\tabcolsep}{3pt} % Riduce il padding orizzontale tra le colonne

        \begin{tabular*}{\linewidth}{@{\extracolsep{\fill}}lrrr@{}}
        \toprule
        \textbf{Method} & \textbf{PGame $\uparrow$} & \textbf{IoU@15 $\uparrow$} & \textbf{DelAUC $\downarrow$} \\
        \midrule
        GradCAM & 0.268 & 0.142 & 0.332\\
        Vanilla Gradient & \textbf{0.433} & \textbf{0.145} & 0.315\\
        Integrated Gradients & 0.335 & 0.114 & 0.294\\
        Random & 0.030 & 0.043 & 0.309\\
        \textbf{\ours{} (Ours)} & 0.286 & 0.134 & \textbf{0.269}\\
        \bottomrule
        \end{tabular*}

\centering
        \captionof{table}{Role-pair summary on \textsc{gpt2-small} IOI (mean over 5 seeds). Within-role blocks are redundancy-dominated, while selected cross-role combinations show stronger synergy.}
        % \captionof{table}{Role-pair summary on \textsc{gpt2-small} IOI (mean over 5 seeds). Within-role blocks are predominantly redundancy-dominated, while selected cross-role combinations show stronger synergy, consistent with compositional circuit behavior.}
        \label{tab:e3}
        \scriptsize
        \begin{tabular*}{\linewidth}{@{\extracolsep{\fill}}lrrr@{}}
\toprule
Role pair & pairs & mean $S$ & mean $R$\\
        \midrule
        Name-Mover within & 3 & 0.272 & 0.165\\
        S-Inhibition within & 6 & 0.200 & 0.039\\
        Induction + S-Inhibition & 4 & \textbf{0.383} & 0.003\\
        Duplicate-Token + S-Inhibition & 4 & 0.266 & 0.000\\
        Duplicate-Token + Induction & 1 & 0.000 & \textbf{0.226}\\
        Induction + Previous-Token & 1 & 0.282 & 0.000\\
        \bottomrule
        \end{tabular*}

    \end{minipage}\hfill
    % --- Inizio Figura ---
    \begin{minipage}[t]{0.45\textwidth} % Ridotto lo spazio per la figura
    \centering
        \vspace{0pt} % Trucco per l'allineamento in alto

        \setlength{\tabcolsep}{3pt}
        \captionof{table}{Per-head IOI decomposition. $U$, $R$, and $S$ denote unique, redundant, and synergistic contributions; singleton LOCO $\pi$ equals the mean-ablation activation-patching score. Heads with small or negative $\pi$ can still exhibit large pair-level synergy.}
        \label{tab:e3_heads}
        \scriptsize
        \begin{tabular*}{\linewidth}{@{\extracolsep{\fill}}llrrrr@{}}
\toprule
Head & Role & $U$ & $R$ & $S$ & $\pi$\\
        \midrule
        9.9 & Name-Mover & $-0.271$ & $0.219$ & $0.586$ & $-0.052$\\
        9.6 & Name-Mover & $-1.214$ & $1.368$ & $0.038$ & $0.153$\\
        10.0 & Name-Mover & $-0.245$ & $0.010$ & $1.018$ & $-0.235$\\
        7.3 & S-Inhibition & $0.135$ & $0.087$ & $0.430$ & $0.222$\\
        7.9 & S-Inhibition & $0.366$ & $0.091$ & $0.673$ & $0.457$\\
        8.6 & S-Inhibition & $0.320$ & $0.147$ & $1.348$ & $0.467$\\
        8.10 & S-Inhibition & $0.274$ & $0.148$ & $0.848$ & $0.422$\\
        5.5 & Induction & $0.222$ & $0.148$ & $1.139$ & $0.369$\\
        3.0 & Duplicate-Token & $0.156$ & $0.164$ & $0.796$ & $0.320$\\
        4.11 & Previous-Token & $-0.041$ & $0.021$ & $0.461$ & $-0.020$\\           
        \bottomrule
        \end{tabular*}

        \centering
        \vspace{0pt} % Trucco per l'allineamento in alto
        \input{figures/figure3_qualitative}
        % \captionof{figure}{
        % Qualitative comparison on a Cardiomegaly X-ray. From left to right: original image (ground truth in green), \ours{}, GradCAM, Integrated Gradients, Vanilla Gradient, and Random.
        % }
        \captionof{figure}{Qualitative ChestX-ray comparison on a Cardiomegaly case: original image with ground-truth box (white), \ours{}, GradCAM, Integrated Gradients, Vanilla Gradient, and Random. This panel is descriptive and complements the quantitative localization/deletion metrics.}
        \label{fig:e2_qualitative}

    \end{minipage}
\end{figure}

%% file: figures/figure3_qualitative.tex
\begin{tikzpicture}
\begin{scope}[local bounding box=panel_image,
                shift={(0.000cm, 0.000cm)}]
  \node[anchor=south west, inner sep=0pt] (img_image)
        {\includegraphics[width=1.60cm]{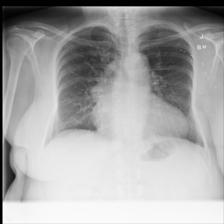}};
  \begin{scope}[shift={(img_image.south west)},
                 x={(img_image.south east)}, y={(img_image.north west)}]
    \draw[white!, line width=1.0pt] (0.3884, 0.2946) rectangle (0.8661, 0.6652);
  \end{scope}
  \node[anchor=south, font=\tiny, yshift=0.05cm]
        at (img_image.north) {Image + GT bbox};
\end{scope}
\begin{scope}[local bounding box=panel_ours,
                shift={(1.800cm, 0.000cm)}]
  \node[anchor=south west, inner sep=0pt] (img_ours)
        {\includegraphics[width=1.60cm]{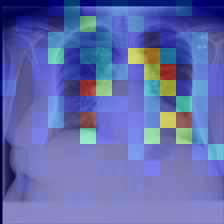}};
  \begin{scope}[shift={(img_ours.south west)},
                 x={(img_ours.south east)}, y={(img_ours.north west)}]
    \draw[white!, line width=1.0pt] (0.3884, 0.2946) rectangle (0.8661, 0.6652);
  \end{scope}
  \node[anchor=south, font=\tiny, yshift=0.05cm]
        at (img_ours.north) {Ours $(U{+}R{+}S)$};
\end{scope}
\begin{scope}[local bounding box=panel_gradcam,
                shift={(3.600cm, 0.000cm)}]
  \node[anchor=south west, inner sep=0pt] (img_gradcam)
        {\includegraphics[width=1.60cm]{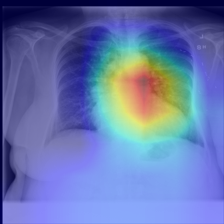}};
  \begin{scope}[shift={(img_gradcam.south west)},
                 x={(img_gradcam.south east)}, y={(img_gradcam.north west)}]
    \draw[white!, line width=1.0pt] (0.3884, 0.2946) rectangle (0.8661, 0.6652);
  \end{scope}
  \node[anchor=south, font=\tiny, yshift=0.05cm]
        at (img_gradcam.north) {GradCAM};
\end{scope}
\begin{scope}[local bounding box=panel_ig,
                shift={(0.000cm, -2.100cm)}]
  \node[anchor=south west, inner sep=0pt] (img_ig)
        {\includegraphics[width=1.60cm]{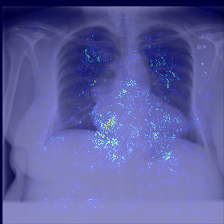}};
  \begin{scope}[shift={(img_ig.south west)},
                 x={(img_ig.south east)}, y={(img_ig.north west)}]
    \draw[white!, line width=1.0pt] (0.3884, 0.2946) rectangle (0.8661, 0.6652);
  \end{scope}
  \node[anchor=south, font=\tiny, yshift=0.05cm]
        at (img_ig.north) {Integrated Gradients};
\end{scope}
\begin{scope}[local bounding box=panel_vanilla,
                shift={(1.800cm, -2.100cm)}]
  \node[anchor=south west, inner sep=0pt] (img_vanilla)
        {\includegraphics[width=1.60cm]{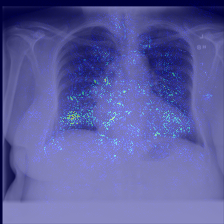}};
  \begin{scope}[shift={(img_vanilla.south west)},
                 x={(img_vanilla.south east)}, y={(img_vanilla.north west)}]
    \draw[white!, line width=1.0pt] (0.3884, 0.2946) rectangle (0.8661, 0.6652);
  \end{scope}
  \node[anchor=south, font=\tiny, yshift=0.05cm]
        at (img_vanilla.north) {Vanilla Gradient};
\end{scope}
\begin{scope}[local bounding box=panel_random,
                shift={(3.600cm, -2.100cm)}]
  \node[anchor=south west, inner sep=0pt] (img_random)
        {\includegraphics[width=1.60cm]{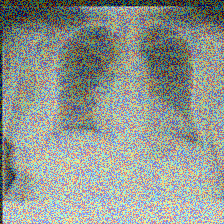}};
  \begin{scope}[shift={(img_random.south west)},
                 x={(img_random.south east)}, y={(img_random.north west)}]
    \draw[white!, line width=1.0pt] (0.3884, 0.2946) rectangle (0.8661, 0.6652);
  \end{scope}
  \node[anchor=south, font=\tiny, yshift=0.05cm]
        at (img_random.north) {Random};
\end{scope}
\end{tikzpicture}

%% file: figures/figure_ioi.tex
% --- Inizio Figura ---
\begin{figure}[t]
\centering

% --- Setup Dati per Panel A ---
\pgfplotstableread[col sep=comma]{data/figure3/figure3_panelA.csv}\datatableA
\pgfplotstableforeachcolumnelement{i}\of\datatableA\as\vali{%
    \pgfplotstablegetelem{\pgfplotstablerow}{head_i}\of\datatableA
    \let\head=\pgfplotsretval
    \pgfplotstablegetelem{\pgfplotstablerow}{role_i}\of\datatableA
    \let\role=\pgfplotsretval
    \expandafter\xdef\csname mylabel\vali\endcsname{\head\space[\role]}%
}
\pgfplotstableforeachcolumnelement{j}\of\datatableA\as\valj{%
    \pgfplotstablegetelem{\pgfplotstablerow}{head_j}\of\datatableA
    \let\head=\pgfplotsretval
    \pgfplotstablegetelem{\pgfplotstablerow}{role_j}\of\datatableA
    \let\role=\pgfplotsretval
    \expandafter\xdef\csname mylabel\valj\endcsname{\head\space[\role]}%
}

% --- Setup Colori per Panel B ---
\definecolor{tabblue}{RGB}{31,119,180}
\definecolor{taborange}{RGB}{255,127,14}
\definecolor{tabgreen}{RGB}{44,160,44}
\definecolor{tabred}{RGB}{214,39,40}
\definecolor{tabpurple}{RGB}{148,103,189}

% Prima Immagine (Panel A)
% Ridotto a 0.43 per lasciare lo spazio laterale alla colorbar
\begin{minipage}{0.27\textwidth}
\centering
\begin{tikzpicture}
\begin{axis}[
    width=\linewidth,
    height=\linewidth,
    enlargelimits=false,
    % --- RIMUOVI IL QUADRANTE (SOLO L-SHAPE) ---
    axis x line*=bottom,
    axis y line*=left,
    % -------------------------------------------
    scale only axis,
    xmin=0, xmax=10,
    ymin=0, ymax=10,
    xtick style={draw=none}, 
    ytick style={draw=none},
    xtick={0.5,1.5,...,9.5},
    xticklabel={%
        \pgfmathtruncatemacro{\idx}{\tick - 0.5}%
        \csname mylabel\idx\endcsname
    },
    ytick={0.5,1.5,...,9.5},
    yticklabel={%
        \pgfmathtruncatemacro{\idx}{\tick - 0.5}%
        \csname mylabel\idx\endcsname
    },
    xticklabel style={rotate=45, anchor=north east, font=\tiny},
    yticklabel style={font=\tiny},
    colorbar,
    colormap={rdbu}{
        rgb255(0cm)=(5,48,97)
        rgb255(1cm)=(33,102,172)
        rgb255(2cm)=(67,147,195)
        rgb255(3cm)=(146,197,222)
        rgb255(4cm)=(209,229,240)
        rgb255(5cm)=(255,255,255)
        rgb255(6cm)=(253,219,199)
        rgb255(7cm)=(244,165,130)
        rgb255(8cm)=(214,96,77)
        rgb255(9cm)=(178,24,43)
        rgb255(10cm)=(103,0,31)
    },
    point meta min=-0.65,
    point meta max=0.65,
    colorbar style={
        title={$S_{ij}{-}R_{ij}$}, % Spostato a titolo per salvare spazio orizzontale
        title style={font=\scriptsize, yshift=-2ex},
        ytick={-0.6,-0.4,-0.2,0,0.2,0.4,0.6},
        ticklabel style={font=\tiny},
        width=0.2cm,
        tick style={draw=none},
        at={(parent axis.south east)}, % La ancora all'angolo della matrice
        anchor=south west,             % Usa l'angolo della barra come gancio
        xshift=0pt
    }
]

\addplot[
    matrix plot,
    mesh/cols=10,
    point meta=explicit,
    unbounded coords=jump
] table[
    col sep=comma,
    x expr={\thisrow{j} + 0.5},
    y expr={\thisrow{i} + 0.5},
    meta expr={ \thisrow{i} > \thisrow{j} ? \thisrow{smr} : nan }
] {data/figure3/figure3_panelA_grid.csv};
        
\end{axis}
\end{tikzpicture}
\end{minipage}%
\hfill%
% Seconda Immagine (Panel B)
\begin{minipage}{0.58\textwidth}
\centering
\begin{tikzpicture}
\begin{axis}[
    xlabel={first-order LOCO $\pi$ (act-patch)},
    ylabel={synergy $S$},
    xlabel style={font=\footnotesize},
    ylabel style={font=\footnotesize, xshift=2ex, yshift=-2ex},
    width=\linewidth,
    height=0.66\linewidth,
    ticklabel style={font=\tiny},
    xmin=-0.28, xmax=0.55,
    ymin=-0.05, ymax=1.45,
    axis x line*=bottom,
    axis y line*=left,
    legend pos=south east,
    legend style={
        draw=black, 
        fill=white,
        font=\tiny,
        cells={anchor=west},
        nodes={scale=0.8, transform shape}, % Scala l'intero contenuto della legenda
        nodes={inner sep=2pt} % Riduce lo spazio tra le righe della legenda
    },
    grid=none
]

\draw[densely dashed, gray] (axis cs:0,\pgfkeysvalueof{/pgfplots/ymin}) -- (axis cs:0,\pgfkeysvalueof{/pgfplots/ymax});
\draw[densely dashed, gray] (axis cs:\pgfkeysvalueof{/pgfplots/xmin},0) -- (axis cs:\pgfkeysvalueof{/pgfplots/xmax},0);

\node[draw=lightgray, fill=white, rounded corners=1pt, anchor=north west, font=\tiny] 
    at (axis cs:-0.26, 1.4) {Spearman($\pi$, patch) = $1.000 \pm 0.000$};

\addplot[
    only marks,
    mark=*,
    mark options={scale=1.2, draw=black, fill=tabred},
    visualization depends on={value \thisrow{highlight} \as \hl},
    nodes near coords={%
        \pgfmathtruncatemacro{\ishl}{\hl}%
        \ifnum\ishl=1
            \textbf{\pgfplotspointmeta}%
        \else
            \pgfplotspointmeta
        \fi
    },
    nodes near coords style={anchor=south west, font=\tiny, text=black},
    point meta=explicit symbolic
] table[col sep=comma, x=pi, y=S, meta=head] {data/figure3/figure3_panelB_NameMover.csv};
\addlegendentry{Name-Mover}

\addplot[
    only marks,
    mark=*,
    mark options={scale=1.2, draw=black, fill=tabblue},
    visualization depends on={value \thisrow{highlight} \as \hl},
    nodes near coords={%
        \pgfmathtruncatemacro{\ishl}{\hl}%
        \ifnum\ishl=1
            \textbf{\pgfplotspointmeta}%
        \else
            \pgfplotspointmeta
        \fi
    },
    nodes near coords style={anchor=south west, font=\tiny, text=black},
    point meta=explicit symbolic
] table[col sep=comma, x=pi, y=S, meta=head] {data/figure3/figure3_panelB_SInhibition.csv};
\addlegendentry{S-Inhibition}

\addplot[
    only marks,
    mark=*,
    mark options={scale=1.2, draw=black, fill=tabgreen},
    visualization depends on={value \thisrow{highlight} \as \hl},
    nodes near coords={%
        \pgfmathtruncatemacro{\ishl}{\hl}%
        \ifnum\ishl=1
            \textbf{\pgfplotspointmeta}%
        \else
            \pgfplotspointmeta
        \fi
    },
    nodes near coords style={anchor=south west, font=\tiny, text=black},
    point meta=explicit symbolic
] table[col sep=comma, x=pi, y=S, meta=head] {data/figure3/figure3_panelB_Induction.csv};
\addlegendentry{Induction}

\addplot[
    only marks,
    mark=*,
    mark options={scale=1.2, draw=black, fill=tabpurple},
    visualization depends on={value \thisrow{highlight} \as \hl},
    nodes near coords={%
        \pgfmathtruncatemacro{\ishl}{\hl}%
        \ifnum\ishl=1
            \textbf{\pgfplotspointmeta}%
        \else
            \pgfplotspointmeta
        \fi
    },
    nodes near coords style={anchor=south west, font=\tiny, text=black},
    point meta=explicit symbolic
] table[col sep=comma, x=pi, y=S, meta=head] {data/figure3/figure3_panelB_DuplicateToken.csv};
\addlegendentry{Duplicate-Token}

\addplot[
    only marks,
    mark=*,
    mark options={scale=1.2, draw=black, fill=taborange},
    visualization depends on={value \thisrow{highlight} \as \hl},
    nodes near coords={%
        \pgfmathtruncatemacro{\ishl}{\hl}%
        \ifnum\ishl=1
            \textbf{\pgfplotspointmeta}%
        \else
            \pgfplotspointmeta
        \fi
    },
    nodes near coords style={anchor=south west, font=\tiny, text=black},
    point meta=explicit symbolic
] table[col sep=comma, x=pi, y=S, meta=head] {data/figure3/figure3_panelB_PreviousToken.csv};
\addlegendentry{Previous-Token}

\end{axis}
\end{tikzpicture}
\end{minipage}
\caption{IOI attention-head decomposition on the 10-head circuit. \textbf{Left:} pair interaction map $S_{ij}-R_{ij}$ (lower triangle, 45 pairs), separating redundancy-dominated substitution blocks from synergy-dominated cross-role composition. \textbf{Right:} per-head synergy $S$ versus first-order LOCO $\pi$ (activation patching identity): heads with weak singleton effect carry substantial pair-level synergy.}
\label{fig:e3}
\end{figure}

%% file: sections/06_discussion.tex
\section{Discussion}
\label{sec:discussion}
\ours{} is a model-fidelity method: it explains a fixed predictor under an explicit background intervention. It does not estimate what a retrained model could learn from a subset, and it does not claim a uniquely correct background. This is a feature for auditing deployed systems \citep{DBLP:conf/aistats/JanzingMB20}.

\paragraph{Negative uniqueness and scope honesty.}
Negative uniqueness ($U<0$) should be interpreted mechanistically as evidence of role structure rather than estimator failure: in the IOI circuit, head 9.6 has $\widehat U_{9.6}=-1.214$ and the $\widehat U_{9.6}+2\hat\sigma_{U_{9.6}}<0$ test confirms the sign is reliable at $\approx 95\%$ confidence, consistent with the backup name-mover behavior documented in IOI \citep{DBLP:conf/iclr/WangVCSS23}.
To distinguish genuine backup behavior from estimation noise, one should examine whether $\widehat U_i + 2\,\hat\sigma_{U_i} < 0$, where $\hat\sigma_{U_i}$ is the across-seed standard deviation of $\widehat U_i$ (equivalently, the Welford envelope $2\,t_{\alpha/2,K-1}\,\hat\sigma(C^*)/\sqrt{K}$ from Appendix~\ref{sec:proof_mc}); if so, the negative sign is reliable at approximately $95\%$ confidence.

\paragraph{Limitations and broader impact.}
Under sampling, $U_i$ is a conservative upper bound on true uniqueness (Section~\ref{sec:theory}, paragraph~\ref{par:partial_vocab}). In imaging, coarse-grid \URSp{} trades fine spatial resolution for stronger causal faithfulness (Section~\ref{sec:experiments}). Guarantees depend on $p_{\bg}$, requiring explicit sensitivity reporting (Appendix~\ref{sec:appendix_experiments}). While disentanglement reduces false confidence from scalar summaries~\citep{DBLP:journals/ijgt/GrabischR99,DBLP:journals/jmlr/TsaiYR23}, ignoring intervention semantics or failing to re-validate $p_{\bg}$ risks misuse~\citep{DBLP:conf/nips/HookerEKK19}. Computationally, \ours{} avoids LOCO retraining but requires $K\cdot|\mathcal{V}|$ forward passes (Appendix~\ref{sec:appendix_compute_reproducibility}). Finally, we mitigate Theorem~\ref{thm:mc}'s bounded-loss violation in E3 using Student-t confidence intervals and conservative Hoeffding bounds (Appendix~\ref{sec:proof_mc}).

% \paragraph{Future work.}
% Natural extensions include hierarchical/multi-resolution coalition grids to address the E2 IoU@15 gap, sub-Gaussian Bernstein refinements for the unbounded losses encountered in E3, and finer-grained $p_{\bg}$ ablations along the lines of Appendix~\ref{sec:appendix_experiments}.

\paragraph{Conclusion and future work.}
Signed pairwise interaction is a useful projection, not a complete explanation of cooperation. On the 3-way XOR SCM (Theorem~\ref{thm:conflation}), it is provably insufficient. \ours{} replaces that projection with a post-hoc \URSp{} profile, presents estimation guarantees with assumptions, and shows the structure across tabular SCMs, medical imaging on ChestX-ray14, and transformer circuits. Interaction explanations should report mechanisms, not signs. Extensions include hierarchical/multi-resolution coalition grids to address the E2 IoU@15 gap, sub-Gaussian Bernstein refinements for the unbounded losses, and finer-grained $p_{\bg}$ ablations as in Appendix~\ref{sec:appendix_experiments}.
% Signed pairwise interaction is a useful projection, not a complete explanation of cooperation. On the 3-way XOR SCM (Theorem~\ref{thm:conflation}), it is provably insufficient. \ours{} replaces that projection with a post-hoc \URSp{} profile, presents estimation guarantees with explicit assumptions, and shows the extra structure across tabular SCMs, full-scale medical imaging on ChestX-ray14 ($n=880$), and transformer circuits. Interaction explanations should report mechanisms, not just signs. Natural extensions include hierarchical/multi-resolution coalition grids to address the E2 IoU@15 gap, sub-Gaussian Bernstein refinements for the unbounded losses encountered in E3, and finer-grained $p_{\bg}$ ablations along the lines of Appendix~\ref{sec:appendix_experiments}.

%% file: sections/appendix/A_notation.tex
\section{Notation for the proofs}
\label{sec:appendix_notation}

Throughout the appendix, $[n] = \{1, \dots, n\}$ and $p_{\bg}$ is the uniform product measure on $\{0,1\}^n$: $p_{\bg}(x) = 2^{-n}$. For a value function $v_x : 2^{[n]} \to \R$, the \emph{M\"obius coefficient} at subset $T \subseteq [n]$ is \begin{equation} \label{eq:moebius} m_x(T) := \sum_{S \subseteq T} (-1)^{|T| - |S|} v_x(S), \end{equation} with inverse $v_x(T) = \sum_{S \subseteq T} m_x(S)$. Every scalar-valued pairwise interaction index used in the literature---SII, $k$-SII, STII, Faith-SHAP, kADD-SHAP---is a linear combination of M\"obius coefficients. For \emph{faithful} indices (STII, FSII, kADD-SHAP in their ``Taylor/kernel'' formulations, and M\"obius itself) the combination at $|S|=2$ involves only the pairwise M\"obius coefficient $m_x(\{i,j\})$~\cite{DBLP:conf/icml/SundararajanDA20, DBLP:journals/jmlr/TsaiYR23}. For \emph{projective} indices (SII, $k$-SII at order strictly less than the true interaction depth), the combination aggregates M\"obius coefficients of \emph{all} orders via the Grabisch--Roubens weights~\cite{DBLP:journals/ijgt/GrabischR99}.

For the proofs, we use the interventional loss of Eq.~\eqref{eq:loss} specialized to the oracle model $f(x) = Y(x)$; this standard simplification, used in~\cite{DBLP:conf/icml/SundararajanDA20, DBLP:journals/jmlr/TsaiYR23}, removes the approximation error of a finitely-trained $f$ and lets us reason about the population-level behavior of the indices.

%% file: sections/appendix/B_proof_conflation.tex
\section{Proof of Theorem~\ref{thm:conflation} (Conflation)}
\label{sec:proof_conflation}

\paragraph{Setup.}
Let $Y = \XOR(X_1, X_2, X_3) \in \{0,1\}$ with $X_1, X_2, X_3, \dots, X_n$ independent Bernoulli$(\tfrac12)$. Features $X_4, \dots, X_n$ are \emph{spectators}: $Y$ does not depend on them. The value function at instance $x$ is $v_x(S) := \E_{Z \sim p_{\bg}}[Y(m_S \odot x + (1 - m_S) \odot Z)]$.

\paragraph{Step 1: value function under marginalization.}
XOR has the key balance property that marginalizing over \emph{any single} input of the triplet yields a constant:
\begin{equation}
\label{eq:xor3_marginal}
\E_{X_j}\bigl[\XOR(x_i, x_k, X_j)\bigr] = \tfrac12 \quad \text{for all } x_i, x_k \in \{0,1\}.
\end{equation}
Consequently, if $S \subseteq [n]$ omits at least one of $\{1, 2, 3\}$ then at least one triplet input is integrated under the uniform
$p_{\bg}$ marginal and~\eqref{eq:xor3_marginal} gives
$v_x(S) = \tfrac12$. Spectators contribute nothing, since $Y$ does not
depend on them. Therefore
\begin{equation}
\label{eq:v_cases}
v_x(S) \;=\;
\begin{cases}
\tfrac12 & \text{if } \{1, 2, 3\} \not\subseteq S, \\[2pt]
\XOR(x_1, x_2, x_3) & \text{if } \{1, 2, 3\} \subseteq S.
\end{cases}
\end{equation}

\paragraph{Step 2: M\"obius coefficients.}
From~\eqref{eq:v_cases}:
\begin{itemize}
    \item \textbf{Empty set.} $v_x(\emptyset) = \tfrac12$, so
        $m_x(\emptyset) = \tfrac12$.
    \item \textbf{Singletons.} For any $i \in [n]$, $v_x(\{i\}) = \tfrac12$,
        so $m_x(\{i\}) = \tfrac12 - \tfrac12 = 0$.
    \item \textbf{Pairs.} For any $\{i,j\} \subseteq [n]$,
        $v_x(\{i,j\}) = v_x(\{i\}) = v_x(\{j\}) = v_x(\emptyset) = \tfrac12$,
        so $m_x(\{i, j\}) = \tfrac12 - \tfrac12 - \tfrac12 + \tfrac12 = 0$.
    \item \textbf{Triplet $\{1, 2, 3\}$.}
    \begin{align*}
    m_x(\{1,2,3\}) &= v_x(\{1,2,3\}) - v_x(\{1,2\}) - v_x(\{1,3\}) - v_x(\{2,3\}) \\
                   &\quad + v_x(\{1\}) + v_x(\{2\}) + v_x(\{3\}) - v_x(\emptyset) \\
                   &= \XOR(x_1, x_2, x_3) - 3\cdot\tfrac12 + 3\cdot\tfrac12 - \tfrac12
                    = \XOR(x_1, x_2, x_3) - \tfrac12,
    \end{align*}
    which takes value $\pm\tfrac12$ at every instance $x$.
\end{itemize}
All higher-order M\"obius coefficients either contain spectators (and are zero because their pairwise and singleton corrections cancel by the same argument) or are strictly supersets of $\{1,2,3\}$ (and are zero because adding a spectator to $\{1,2,3\}$ does not change $v_x$). Pairwise M\"obius is exactly zero everywhere; the full signal concentrates at order three.

\paragraph{Step 3: faithful indices vanish at pair level.}
STII, FSII, and $k$-SII at order $\ge 3$ are, by their axiomatic construction~\cite{DBLP:conf/icml/SundararajanDA20, DBLP:journals/jmlr/TsaiYR23}, linear in the M\"obius coefficients and preserve the order at which a signal lives: the order-$k$ output of these indices at $|S| = 2$ depends only on $m_x(\{i,j\})$ and on singleton/empty corrections, not on M\"obius coefficients of order $>2$. Since $m_x(\{i,j\}) = 0$ by Step 2, the pairwise outputs vanish:
\[
\STII_{ij}(x) \;=\; \FSII_{ij}(x) \;=\; k\text{-}\SII_{ij}(x) \;=\; 0.
\]
This establishes Case 1 of Theorem~\ref{thm:conflation}.

\paragraph{Step 4: projective indices yield $\pm\tfrac14$.}
The base Shapley Interaction Index is defined by~\cite{DBLP:journals/ijgt/GrabischR99}:
\begin{equation}
\label{eq:sii_def}
\SII_{ij}(x) \;=\; \sum_{T \subseteq [n] \setminus \{i, j\}}
  \frac{(n - |T| - 2)! \, |T|!}{(n - 1)!}\,
  \Delta_{ij}(T, x),
\end{equation}
where $\Delta_{ij}(T, x) = v_x(T \cup \{i, j\}) - v_x(T \cup \{i\}) - v_x(T \cup \{j\}) + v_x(T)$. For $\{i, j\} \subset \{1, 2, 3\}$, let $k$ be the unique third element of the triplet. Then $\Delta_{ij}(T, x) = \XOR(x_1,x_2,x_3) - \tfrac12$ when $k \in T$ (Step 1 gives $v_x(T \cup \{i,j\}) = \XOR$ and the other three terms are $\tfrac12$), and $\Delta_{ij}(T, x) = 0$ when $k \notin T$ (all four terms are $\tfrac12$). The spectators contribute nothing to whether $\Delta_{ij}(T, x)$ is zero or $\XOR - \tfrac12$; they only affect which subsets $T$ are summed.

For $n \ge 3$, substitute into~\eqref{eq:sii_def}, grouping by whether $T$ contains $k$:
\begin{align*}
\SII_{ij}(x)
&= (\XOR(x_1,x_2,x_3) - \tfrac12)
   \sum_{\substack{T \subseteq [n] \setminus \{i,j\} \\ k \in T}}
   \frac{(n - |T| - 2)!\,|T|!}{(n - 1)!}.
\end{align*}
Reparametrize $T = \{k\} \cup T'$ with $T' \subseteq [n] \setminus \{i,j,k\}$ of size $t' = |T| - 1 \in \{0, \dots, n - 3\}$:
\begin{align*}
\SII_{ij}(x)
&= (\XOR - \tfrac12) \sum_{t' = 0}^{n-3} \binom{n-3}{t'}
   \frac{(n - t' - 3)!\,(t' + 1)!}{(n - 1)!}\\
&= (\XOR - \tfrac12) \sum_{t' = 0}^{n-3}
   \frac{(n-3)!}{t'!\,(n - 3 - t')!}
   \cdot \frac{(n - t' - 3)!\,(t' + 1)!}{(n - 1)!}\\
&= (\XOR - \tfrac12) \cdot \frac{(n-3)!}{(n - 1)!}
   \sum_{t' = 0}^{n-3} (t' + 1)\\
&= (\XOR - \tfrac12) \cdot \frac{1}{(n - 1)(n - 2)} \cdot
   \frac{(n - 2)(n - 1)}{2}
\;=\; \tfrac12 \bigl(\XOR(x_1,x_2,x_3) - \tfrac12\bigr).
\end{align*}
The value is therefore exactly $\pm\tfrac14$ depending on the XOR bit, as claimed. The same combinatorial identity applies to $k$-SII at order $2$ and to kADD-SHAP (both of which project the full interaction back onto the pair-level report via a reweighting that, for this symmetric SCM, reduces to the Grabisch--Roubens weights). For pairs $\{i, j\}$ with $\{i, j\} \not\subset \{1, 2, 3\}$ the third triplet element $k$ lies in $\{1,2,3\}$, and the same analysis with $k$ now \emph{fixed inside} $\{1,2,3\}$ gives a vanishing sum because $\Delta_{ij}(T,x) = 0$ whenever $T$ does not contain the \emph{two} missing triplet elements, which it cannot: at most one triplet element is outside $\{i,j\}$. Hence $\SII_{ij}(x) = 0$ for spectator-involving pairs. This establishes Case 2.

\paragraph{Step 5: LOCO-U/R/S triple is non-degenerate.}
For the oracle $f = Y$ and squared loss, the expected loss under background marginalization (Eq.~\eqref{eq:loss}) evaluates to:
\begin{itemize}
    \item $L(C) = \tfrac12$ whenever $\{1, 2, 3\} \not\subseteq C$. Since the oracle outputs boolean values $\{0, 1\}$, marginalizing a missing feature results in a $50\%$ chance of guessing incorrectly, yielding an expected squared error of $0.5 \cdot (1-0)^2 + 0.5 \cdot (0-1)^2 = 1/2$;
    \item $L(C) = 0$ whenever $\{1, 2, 3\} \subseteq C$.
\end{itemize}
Hence for each $i \in \{1, 2, 3\}$: $\LOCO(i \mid C) = 0$ for $|C \cap \{1, 2, 3\}| \le 1$, giving $U(X_i) = 0$; and $\LOCO(i \mid \{1, 2, 3\} \setminus \{i\}) = \tfrac12$, giving $L^{\max}_i = \tfrac12$. The standalone LOCO $\pi(X_i) = 0$ (any single triplet feature alone yields no information gain), so $R(X_i) = 0$ and $S(X_i) = \tfrac12$.

\paragraph{Step 6: scalar conflation of the U/R/S components.}
The per-feature triple $(U, R, S) = (0, 0, \tfrac12) \in \R^3$ is identical for each $i \in \{1, 2, 3\}$ but lives in a $3$-dimensional output space. Notice that scalar indices project this interaction into $\pm \tfrac14$, underestimating the true synergy magnitude ($\tfrac12$) because they inherently average over contexts rather than isolating the synergistic extremum.
The faithful family produces the zero scalar in $\R$ per pair; the projective family produces $\pm\tfrac14$ in $\R$ per pair. These scalar reports do not carry the named decomposition into uniqueness, redundancy, and synergy. In the faithful case, the pair-level report erases the signal; in the projective case, it preserves only a signed lower-order projection of the order-$3$ M\"obius coefficient. Recovering $(U,R,S)$ would therefore require additional structure not present in the pair-level scalar output itself. \qed

%% file: sections/appendix/C_proof_interventional.tex
\section{Proof of Theorem~\ref{thm:mc} (Interventional semantics and Monte Carlo bound)}
\label{sec:proof_mc}

\subsection*{Part A --- Interventional masking semantics}

Fix a background distribution $p_{\bg}$. For a coalition $S$ and instance $(x,y)$, define the atomic masking intervention
\[
I_S(x,Z) := m_S\odot x + (1-m_S)\odot Z,\qquad Z\sim p_{\bg}.
\]
This intervention fixes retained coordinates to their observed values and replaces removed coordinates by an independent background draw. The target in Eq.~\eqref{eq:loss} of the body is exactly
\[
L(S;x,y)=\E_{Z\sim p_{\bg}}\!\left[\ell(f(I_S(x,Z)),y)\right].
\]

\begin{lemma}[Identity for the masked-inference estimator]
\label{lem:do_identity}
The Monte Carlo estimator used by Stochastic Hi-Fi is unbiased for the interventional masking target above. If an SCM admits the coordinate-level atomic intervention $do(X_S=x_S,\;X_{\bar S}=Z_{\bar S})$, the same target is the corresponding Pearl $do$ quantity averaged over $Z\sim p_{\bg}$.
\end{lemma}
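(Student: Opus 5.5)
The plan is to treat the two claims of Lemma~\ref{lem:do_identity} separately: unbiasedness is linearity of expectation, and the $do$-identification follows from the definition of an atomic intervention in an SCM.

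\emph{Unbiasedness.} Fix $(x,y)$ and $S$. Stochastic Hi-Fi draws $Z^{(1)},\dots,Z^{(K)}$ iid from $p_{\bg}$, and these draws are independent of $(x,y)$ and of any coalition-selection randomness. Because the retained coordinates are held at $x_S$, each summand $W_k:=\ell(f(I_S(x,Z^{(k)})),y)$ is a measurable function of $Z^{(k)}$ alone. The $W_k$ are therefore iid, with mean $\E_{Z\sim p_{\bg}}[\ell(f(I_S(x,Z)),y)]=L(S;x,y)$ by definition.

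Linearity of expectation then gives $\E[\widehat L(S)]=\frac1K\sum_k\E[W_k]=L(S;x,y)$. Integrability is needed for this step. It holds automatically under the bounded-loss hypothesis of Theorem~\ref{thm:mc}; otherwise we assume $\E|W_1|<\infty$.

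Two technical points need care.
\begin{itemize}
\item Under diamond sampling the same batch is shared across the four coalitions of a diamond. This does not affect unbiasedness, because the claim is marginal per coalition.
\item Under adaptive selection (softmin), the stopping count $K(S)$ may depend on past draws. In that case I would either condition on a fixed $K$, as the theorem statement does, or note that a random $K(S)$ breaks exact unbiasedness. I would state explicitly that the lemma concerns a fixed $K$.
\end{itemize}

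\emph{Interventional identification.} Suppose the SCM admits the atomic intervention $do(X_S=x_S, X_{\bar S}=z_{\bar S})$. This intervention replaces every structural equation for the input coordinates by constants, so the intervened input vector is deterministically $I_S(x,z)$. The model $f$ is a fixed deterministic map with no upstream mechanism of its own. Hence the post-intervention distribution of $\ell(f(X),y)$ is the point mass at $\ell(f(I_S(x,z)),y)$, and
\[
\E\bigl[\ell(f(X),y)\mid do(X_S=x_S,X_{\bar S}=z_{\bar S})\bigr]=\ell(f(I_S(x,z)),y).
\]
Averaging over $z\sim p_{\bg}$ and applying Fubini (valid under the same integrability condition) yields $L(S;x,y)$. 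This equivalently describes the soft intervention that sets $X_S=x_S$ and draws $X_{\bar S}$ from the $\bar S$-marginal of $p_{\bg}$, independent of $x_S$.

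\emph{Main obstacle.} The mathematics is not the hard part; the difficulty is conceptual precision. The statement should make clear that the identity is with an intervention on model inputs, not on the data-generating SCM's downstream variables. It should also make clear that independence of $Z$ from $x_S$ is exactly what separates this target from the observational conditional $\E[\ell\mid X_S=x_S]$. I would add a one-line remark noting that equality with the observational quantity holds only when $p_{\bg}$ equals the conditional of $X_{\bar S}$ given $X_S$, for example under feature independence as in Theorem~\ref{thm:conflation}.
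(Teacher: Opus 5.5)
Your proof is correct and follows the paper's argument: unbiasedness holds because each summand is the loss at $I_S(x,Z^{(k)})$ with iid $Z^{(k)}\sim p_{\bg}$, and the $do$-identity holds because the atomic intervention fixes every input coordinate to $(x_S,z_{\bar S})$, so the fixed predictor sees exactly $I_S(x,z)$ before averaging over $z$. You go beyond the paper's one-line proof by adding an integrability condition and by restricting explicitly to a deterministic $K$. The paper assumes fixed $K$ only implicitly, and the restriction matters because a data-dependent stopping count $K(S)$, such as the Student-$t$ width rule, can bias the sample mean.
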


\begin{proof}
The first claim is definitional: each sampled synthetic input is exactly $I_S(x,Z)$, and the estimator averages its loss over iid draws of $Z$. For the second claim, under the stated SCM intervention all incoming structural equations for the intervened coordinates are replaced by constants $(x_S,Z_{\bar S})$. The resulting input to the fixed predictor is therefore $I_S(x,Z)$, so averaging over $Z\sim p_{\bg}$ gives the same expression.
\end{proof}

\begin{corollary}[Interventional rather than observational target]
\label{cor:causal_correct}
$L(S;x,y)$ is an interventional masking target in the sense of \citealt{DBLP:conf/aistats/JanzingMB20}. It is not the observational conditional expectation obtained by sampling $X_{\bar S}\mid X_S=x_S$.
\end{corollary}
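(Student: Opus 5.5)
The statement to prove is Corollary~\ref{cor:causal_correct}. It has two claims, and I would handle them in order: first a positive claim (the target is interventional in the Janzing sense), then a negative claim (it differs from the observational conditional target).

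For the positive claim I would invoke Lemma~\ref{lem:do_identity} directly. By that lemma, $L(S;x,y)$ equals the expected loss of the fixed predictor $f$ evaluated on $I_S(x,Z)$, where $Z\sim p_{\bg}$ is drawn independently of $x$. This is exactly the ``interventional'' value function of \citet{DBLP:conf/aistats/JanzingMB20}. There the removed features $X_{\bar S}$ are set by $do(X_{\bar S}=Z_{\bar S})$ on the model's input nodes, with $Z_{\bar S}$ drawn from its marginal under $p_{\bg}$, and not from any distribution that depends on $x_S$. To make the match precise, I would write Janzing's definition as $\E_{Z_{\bar S}\sim p_{\bg}}[\ell(f(x_S,Z_{\bar S}),y)]$ and note that it coincides termwise with Eq.~\eqref{eq:loss}. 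The key fact is that $m_S\odot x+(1-m_S)\odot Z$ depends on $Z$ only through $Z_{\bar S}$.

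For the negative claim I would give a two-feature counterexample in which the two targets provably differ. Take $n=2$ and let the data distribution have $X_1=X_2$ with $X_1\sim$ Bernoulli$(1/2)$. Let $p_{\bg}$ be the product of the marginals (uniform), use squared loss, and set $f(x)=x_2$ with target $y=x_1$. Take $S=\{1\}$ and $x=(1,1)$. The observational target samples $X_2\mid X_1=1$, which is identically $1$, so the loss is $0$. The interventional target draws $Z_2$ uniformly, giving loss $1/2$. Since $0\neq 1/2$, the two estimands are not equal in general, and that establishes the second sentence of the corollary.

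The step I expect to need the most care is stating ``not the observational conditional expectation'' precisely. The two targets do agree whenever $p_{\bg}$ is a product measure and the features are independent, since then $X_{\bar S}\mid X_S$ has the same law as $Z_{\bar S}$. So the claim has to be read as ``not identical in general,'' and I would add a remark identifying this independence case as the exact regime of coincidence.
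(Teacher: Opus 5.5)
Your proof is correct, and for the first sentence it follows the paper. The paper reads the corollary directly off Lemma~\ref{lem:do_identity}: every synthetic input is $I_S(x,Z)$ with $Z\sim p_{\bg}$ drawn independently of $x$.

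The difference is in the second sentence. The paper gives no separate argument there. It treats ``not the observational conditional expectation'' as definitional, since the removed coordinates are sampled from $p_{\bg}$ and never from $X_{\bar S}\mid X_S=x_S$. Your example ($X_1=X_2$, $f(x)=x_2$, $y=1$, $S=\{1\}$, giving loss $0$ observationally versus $1/2$ interventionally) upgrades this to a verifiable statement that the two estimands differ in general. That is a stronger claim than the paper's, and it is what a skeptical reader actually needs. The example is valid under both of the paper's conventions for $p_{\bg}$: the uniform product measure in the appendix and the reference-set marginal in the body.

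There are two precision points to fix.

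First, \citet{DBLP:conf/aistats/JanzingMB20} draw $X_{\bar S}$ from its observational marginal. Your termwise identification is therefore exact only when the $\bar S$-marginal of $p_{\bg}$ is that marginal. Otherwise $L$ is the same interventional construction with a user-chosen background, applied to $\ell(f(\cdot),y)$ rather than to $f$.

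Second, and more importantly, your closing claim of an ``exact regime of coincidence'' is overstated. Since $I_S(x,Z)$ depends on $Z$ only through $Z_{\bar S}$, whether $p_{\bg}$ is a product measure is irrelevant. What is needed is that the $\bar S$-marginal of $p_{\bg}$ equal the conditional law of $X_{\bar S}$ given $X_S=x_S$. Under feature independence this still requires that marginal to match the data marginal. Even then, equality of laws is only sufficient for the two expected losses to agree, not necessary: they agree trivially whenever $f$ ignores $x_{\bar S}$. State it as a sufficient condition and the argument is complete.
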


\paragraph{Remark on the OOD critique.}
The OOD attack on masking-based explanation methods~\cite{DBLP:conf/nips/HookerEKK19} asserts that the synthetic input $m_S \odot x + (1 - m_S) \odot Z$ may lie off the data manifold, rendering $f$'s output on it ``meaningless.'' Corollary~\ref{cor:causal_correct} defuses this: the off-manifold behavior of $f$ is a \emph{feature} of the interventional semantics, not an approximation error. The quantity $L(S; x, y)$ is by construction the loss under an atomic coordinate intervention that sets removed features to $Z$ \emph{regardless of whether such an assignment is typical}. We do not \emph{approximate} an observational quantity; we
\emph{define} an interventional one.

\subsection*{Part B --- Monte Carlo consistency and concentration}

Let $K \ge 1$ and $Z^{(1)}, \dots, Z^{(K)} \stackrel{iid}{\sim} p_{\bg}$. Define the Welford estimator
\begin{equation}
\label{eq:welford}
\widehat{L}(S; x, y) \;=\; \frac{1}{K}\sum_{k=1}^{K}
  \ell\!\left(f(m_S \odot x + (1 - m_S) \odot Z^{(k)}),\; y\right).
\end{equation}

\begin{proposition}[Unbiasedness and variance bound]
\label{prop:welford_var}
For every $S$, $x$, $y$ and every $K \ge 1$,
\[
\E\bigl[\widehat{L}(S; x, y)\bigr] = L(S; x, y),
\qquad
\E\bigl[(\widehat{L}(S; x, y) - L(S; x, y))^2\bigr] \;=\; \frac{\sigma^2(S; x, y)}{K},
\]
where $\sigma^2(S; x, y) = \Var_{Z \sim p_{\bg}}\bigl[\ell(f(m_S \odot x + (1-m_S) \odot Z), y)\bigr]$.
\end{proposition}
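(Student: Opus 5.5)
The plan is to treat $\widehat L(S;x,y)$ as an empirical mean of iid random variables and apply linearity of expectation together with the variance of a sum of independent terms. Fix $S$, $x$, $y$, and define $W_k := \ell\bigl(f(m_S\odot x+(1-m_S)\odot Z^{(k)}),y\bigr)$ for $k=1,\dots,K$. Because $x$, $y$, $f$, $\ell$ and the mask $m_S$ are deterministic, each $W_k$ is a fixed measurable function of $Z^{(k)}$ alone. The draws $Z^{(1)},\dots,Z^{(K)}$ are iid from $p_{\bg}$, so the $W_k$ are iid copies of $W:=\ell(f(I_S(x,Z)),y)$ with $Z\sim p_{\bg}$, where $I_S$ is the masking intervention of Part A.

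The first step is unbiasedness. Each $\E[W_k]=\E_{Z\sim p_{\bg}}[\ell(f(I_S(x,Z)),y)]$, which is exactly $L(S;x,y)$ by Eq.~\eqref{eq:loss}; this is the identification already made in Lemma~\ref{lem:do_identity}. Linearity of expectation then gives $\E[\widehat L]=\frac1K\sum_k \E[W_k]=L(S;x,y)$, and this holds for every $K\ge1$.

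The second step is the mean-squared error. Since the estimator is unbiased, $\E[(\widehat L-L)^2]=\Var(\widehat L)$. Independence of the $W_k$ makes all cross-covariances vanish, so $\Var(\widehat L)=\frac{1}{K^2}\sum_k\Var(W_k)=\frac{\sigma^2(S;x,y)}{K}$, where $\sigma^2(S;x,y)$ is the variance defined in the statement.

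The only delicate point is integrability, and this is where I expect the main obstacle. The expressions above need $\E|W|<\infty$ and $\E[W^2]<\infty$. Under the bounded-loss assumption of Theorem~\ref{thm:mc}, $W\in[\ell_{\min},\ell_{\max}]$, so both moments are finite and $\sigma^2\le B^2/4$ by Popoviciu's inequality. Without boundedness I would state the identity under a finite second-moment assumption, and read both sides as $+\infty$ when $\sigma^2=\infty$. This matters in E3, where the logit-difference losses are not bounded a priori. I would also note, for use later in the Hoeffding part of Theorem~\ref{thm:mc}, that the same iid structure with $W_k\in[\ell_{\min},\ell_{\max}]$ is exactly what Hoeffding's inequality requires. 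Finally, Welford's online recursion yields the same arithmetic mean as Eq.~\eqref{eq:welford}, up to floating-point error, so the order of updates does not affect the statement.
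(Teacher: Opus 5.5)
Your proof is correct and matches the paper's argument: linearity of expectation over the $K$ iid draws gives unbiasedness, and the variance of an iid mean gives $\sigma^2(S;x,y)/K$, with unbiasedness turning the mean-squared error into that variance. Your integrability caveat is a genuine, if minor, improvement. The paper asserts the identity ``for every $S$, $x$, $y$'' without stating the finite-second-moment hypothesis, which is only supplied automatically by the bounded-loss assumption of Theorem~\ref{thm:mc}.
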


\begin{proof}
Linearity of expectation over $K$ iid draws gives unbiasedness. Variance of an iid mean equals the per-sample variance divided by $K$. Both are standard. Welford's online update~\citep{Welford01081962} produces exactly these quantities without catastrophic cancellation even when $K$ is large; the numerical guarantees of the algorithm are well known (maximum relative error $O(\varepsilon_{\mathrm{mach}} K)$ under mild assumptions, much better than the na\"ive two-pass formula).
\end{proof}

\begin{proposition}[Hoeffding concentration]
\label{prop:hoeffding}
Assume the loss is bounded: there exist finite $\ell_{\min}, \ell_{\max}$ with $\ell(\cdot,\cdot) \in [\ell_{\min}, \ell_{\max}]$ for every input. Let $B := \ell_{\max} - \ell_{\min}$. Then for every $\varepsilon > 0$,
\[
\mathbb{P}\!\left(\bigl|\widehat{L}(S; x, y) - L(S; x, y)\bigr| > \varepsilon\right)
\;\le\; 2\exp\!\left(-\frac{2K\varepsilon^2}{B^2}\right).
\]
\end{proposition}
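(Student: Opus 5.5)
This is Hoeffding's inequality applied to the iid summands of the Welford estimator \eqref{eq:welford}. For fixed $(S,x,y)$, I would define $W_k := \ell\bigl(f(m_S\odot x+(1-m_S)\odot Z^{(k)}),y\bigr)$ for $k=1,\dots,K$. Since $Z^{(1)},\dots,Z^{(K)}$ are iid draws from $p_{\bg}$, and $f$, $x$, $y$, $S$ are held fixed, the $W_k$ are iid measurable functions of the $Z^{(k)}$. By Proposition~\ref{prop:welford_var}, their common mean is $L(S;x,y)$. By the bounded-loss assumption, each $W_k$ lies almost surely in $[\ell_{\min},\ell_{\max}]$, an interval of width $B$. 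Finally, $\widehat L(S;x,y)=\frac1K\sum_k W_k$.

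Next I would prove the one-sided tail bound by the Chernoff method. For $\lambda>0$, apply Markov's inequality to $e^{\lambda\sum_k (W_k-L)}$ and factor the expectation using independence. Hoeffding's lemma, which states that a mean-zero variable supported on an interval of width $B$ satisfies $\E e^{\lambda (W-\E W)}\le e^{\lambda^2B^2/8}$, then gives
\[
\Pr\Bigl(\textstyle\sum_k (W_k-L)>K\varepsilon\Bigr)\le \exp\!\left(-\lambda K\varepsilon+\tfrac{K\lambda^2B^2}{8}\right).
\]
Choosing $\lambda=4\varepsilon/B^2$ yields $\exp(-2K\varepsilon^2/B^2)$. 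Applying the same argument to $-W_k$, which ranges over an interval of the same width $B$, gives the lower tail, and a union bound over the two tails produces the factor $2$. I would simply cite the classical Hoeffding inequality rather than re-deriving Hoeffding's lemma.

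No step here is a real obstacle. The only point requiring care is that the $W_k$ must be genuinely independent. This holds because the samples entering $\widehat L(S)$ for a single coalition come from fresh iid background draws. Diamond coupling across coalitions does not matter, since the statement concerns only one $S$. Adaptive stopping based on $w(S)$ would break the fixed-$K$ assumption, so I would state that $K$ is deterministic, or at least chosen independently of the samples. Combined with Proposition~\ref{prop:welford_var}, this completes Theorem~\ref{thm:mc}.
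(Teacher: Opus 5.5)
Your proposal is correct and takes the same approach as the paper: the paper's proof also just notes that the summands of the Welford estimator are iid and bounded in $[\ell_{\min},\ell_{\max}]$, applies Hoeffding's inequality, and gets the factor $2$ from the two tails. Your Chernoff derivation (with $\lambda=4\varepsilon/B^2$) checks out, and your caveat that $K$ must be fixed rather than set by the adaptive $w(S)$ stopping rule is a valid point the paper leaves implicit.
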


\begin{proof}
Each summand of~\eqref{eq:welford} is an iid random variable bounded in $[\ell_{\min}, \ell_{\max}]$. Hoeffding's inequality~\cite{409cf137-dbb5-3eb1-8cfe-0743c3dc925f} applied to the average of $K$ such variables yields the stated bound; the factor of $2$ comes from the two-sided deviation.
\end{proof}

\begin{corollary}[Sample complexity]
\label{cor:sample_complexity}
To estimate $L(S; x, y)$ within additive error $\varepsilon$ with probability at least $1 - \delta$, it suffices to take
\[
K \;\ge\; \left\lceil \frac{B^2 \log(2/\delta)}{2\varepsilon^2} \right\rceil.
\]
In particular, $K = \mathcal{O}(B^2 \log(1/\delta)/\varepsilon^2)$ is sufficient for any coalition $S$.
\end{corollary}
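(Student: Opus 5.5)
The plan is to read the sample-complexity claim directly off Proposition~\ref{prop:hoeffding}, treating that tail bound as given and only inverting it. Fix a coalition $S$, an instance $(x,y)$, a target accuracy $\varepsilon>0$, and a confidence level $\delta\in(0,1)$. By Proposition~\ref{prop:hoeffding}, for every $K\ge 1$ we have
\[
\mathbb{P}\bigl(|\widehat L(S;x,y)-L(S;x,y)|>\varepsilon\bigr)\le 2\exp\!\left(-\frac{2K\varepsilon^2}{B^2}\right).
\]
So it suffices to choose $K$ with $2\exp(-2K\varepsilon^2/B^2)\le\delta$. The complementary event $|\widehat L-L|\le\varepsilon$ then has probability at least $1-\delta$.

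The key step is solving this inequality for $K$. Divide by $2$ and take logarithms; since $\log$ is increasing, the condition is equivalent to
\[
-\frac{2K\varepsilon^2}{B^2}\le \log\frac{\delta}{2}.
\]
Because $2\varepsilon^2/B^2>0$, this rearranges to
\[
K\ge \frac{B^2\log(2/\delta)}{2\varepsilon^2}.
\]
We need $B>0$ here. The degenerate case $B=0$ means the loss is constant, so $\widehat L=L$ almost surely and any $K\ge1$ works; this should be noted separately. $K$ must be an integer, and the right-hand side is positive because $\delta<2$. Hence the smallest admissible integer is the ceiling of that quantity, which gives the displayed threshold. Any larger $K$ also works, because the Hoeffding bound is decreasing in $K$.

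For the order statement, write $\log(2/\delta)=\log 2+\log(1/\delta)$. For $\delta\le 1/2$ we have $\log 2\le\log(1/\delta)$, so $\log(2/\delta)\le 2\log(1/\delta)$. The ceiling adds at most $1$, which is absorbed because the main term is bounded below by a positive constant times $B^2/\varepsilon^2$ in the relevant regime. This gives $K=\mathcal O(B^2\log(1/\delta)/\varepsilon^2)$.

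Uniformity over $S$ needs no extra argument. The threshold does not depend on $S$, $x$, or $y$, since $B$ is a global loss range. There is no real obstacle. The only points needing care are the direction of the inequality when dividing by the negative quantity $-2\varepsilon^2/B^2$, the integer rounding, and the edge cases $B=0$ and $\delta$ close to $1$ in the big-$\mathcal O$ step.
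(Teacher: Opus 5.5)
Your proposal is correct and follows the paper's own argument: the paper simply sets the Hoeffding tail bound of Proposition~\ref{prop:hoeffding} equal to $\delta$ and solves for $K$. Your extra handling of integer rounding, the degenerate case $B=0$, and the absorption of $\log 2$ and the ceiling into the $\mathcal{O}$ statement is sound and fills in details the paper leaves implicit.
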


\begin{proof}
Set the Hoeffding tail bound of Proposition~\ref{prop:hoeffding} equal to $\delta$ and solve for $K$.
\end{proof}

\paragraph{Bounded-loss instantiation.} See Section~\ref{sec:theory} for how boundedness is instantiated per experiment (including empirical-range reporting and clipping safeguards).

\paragraph{Note on usefulness.}
Proposition~\ref{prop:welford_var} and Proposition~\ref{prop:hoeffding} together give two guarantees that an implementation of our method can report for every coalition in the vocabulary, using only the sample count $K$ and the empirical variance $\hat\sigma^2(S)$ that Welford already computes online. The paper's reference implementation emits both: a $1-\alpha$ normal-theory confidence interval $\widehat{L} \pm t_{\alpha/2, K-1}\, \hat\sigma / \sqrt{K}$ from Proposition~\ref{prop:welford_var}, and the conservative Hoeffding band from Proposition~\ref{prop:hoeffding}. Per-coalition early stopping is triggered when the Student-$t$ interval width drops below a user-defined threshold; Proposition~\ref{prop:hoeffding} serves as the worst-case guarantee reported to the reader.

\paragraph{Comparison to Lipschitz-based OOD bounds.}
An alternative route would be to bound the OOD \emph{extrapolation} error $|f(m_S \odot x + (1 - m_S) \odot Z) - f(x^\star)|$ via a Lipschitz constant of $f$, where $x^\star$ is the nearest on-manifold point. For modern deep networks --- especially transformer-scale models like the gpt2-small we target in E3 --- such constants are known to be astronomically large or computationally intractable~\cite{DBLP:conf/nips/FazlyabRHMP19}. Corollary~\ref{cor:causal_correct} sidesteps this entirely by defining the target quantity to be the interventional one, making the OOD extrapolation error definitionally absent from the estimation problem.
\qed

%% file: sections/appendix/D_proof_variance.tex
\section{Proof of Theorem~\ref{thm:variance} (Strict variance reduction)}
\label{sec:proof_variance}

\paragraph{Setup.}
Fix a pair $\{i, j\} \subseteq [n]$ and an instance $(x, y)$. For any context $C \subseteq [n] \setminus \{i, j\}$, write
\begin{align*}
a(C) &:= \ell(f(M_{C \cup \{i,j\}}(x,Z)),\, y), &
b(C) &:= \ell(f(M_{C \cup \{i\}}(x,Z)),\, y),\\
c(C) &:= \ell(f(M_{C \cup \{j\}}(x,Z)),\, y), &
d(C) &:= \ell(f(M_{C}(x,Z)),\, y),
\end{align*}
where $M_S(x,Z) := m_S \odot x + (1 - m_S) \odot Z$ and expectations below are over $(C, Z)$ with $C$ drawn from a chosen distribution over $2^{[n] \setminus \{i,j\}}$ and $Z \sim p_{\bg}$. The pairwise mixed-difference at context $C$ is
\begin{equation}
\xi(C, Z) := b(C) + c(C) - a(C) - d(C).
% \xi(C, Z) := a(C) - b(C) - c(C) + d(C).
\end{equation}
The \emph{target quantity} is $\Delta_{ij} := \E_{C, Z}[\xi(C, Z)]$ (an unbiased functional of the model).

\paragraph{The two estimators.}
Both estimators are given a compute budget of $4K$ calls to $\ell(\cdot)$ for fair comparison.

\begin{description}
\item[Diamond.] Draw $K$ iid contexts $C_1, \dots, C_K$ and, for each, evaluate all four terms $a(C_k), b(C_k), c(C_k), d(C_k)$ using the same $Z$ batch per $k$. The estimator is
\begin{equation}
\widehat{\Delta}_{ij}^{\text{diamond}} := \frac{1}{K} \sum_{k=1}^{K} \xi(C_k, Z_k).
\end{equation}
Cost: exactly $4K$ evaluations of $\ell(\cdot)$.

\item[Uniform.] Draw $4K$ iid contexts $C_1^{(a)}, \dots, C_K^{(a)}, C_1^{(b)}, \dots, C_K^{(b)}, \dots, C_K^{(d)}$ (and independent $Z$ draws). The estimator is
\begin{equation}
\widehat{\Delta}_{ij}^{\text{unif}} := \frac{1}{K} \sum_{k=1}^K
\Bigl[ a(C_k^{(a)}) - b(C_k^{(b)}) - c(C_k^{(c)}) + d(C_k^{(d)}) \Bigr].
\end{equation}
Cost: exactly $4K$ evaluations of $\ell(\cdot)$.
\end{description}

Both estimators are unbiased: $\E[\widehat{\Delta}^{\text{dia}}] = \E[\widehat{\Delta}^{\text{unif}}] = \Delta_{ij}$ by linearity of expectation over iid samples.

\paragraph{Variance expansion.}
Let $V_a := \Var(a(C))$, $V_b := \Var(b(C))$, etc., and let
\begin{align*}
\Sigma_{\mathrm{adj}} &:= \Cov(a, b) + \Cov(a, c) + \Cov(b, d) + \Cov(c, d), \\
\Sigma_{\mathrm{diag}} &:= \Cov(a, d) + \Cov(b, c).
\end{align*}
The four ``adjacent'' covariances $\Sigma_{\mathrm{adj}}$ are between coalitions whose symmetric difference is a single element; the two ``diagonal'' covariances $\Sigma_{\mathrm{diag}}$ are between coalitions whose symmetric difference is $\{i, j\}$ itself.

\begin{lemma}[Variance identities]
\label{lem:t3_variance}
\begin{align*}
\Var(\widehat{\Delta}^{\text{unif}}) &= \frac{1}{K}\bigl(V_a + V_b + V_c + V_d\bigr), \\
\Var(\widehat{\Delta}^{\text{diamond}}) &= \frac{1}{K}\Bigl[\bigl(V_a + V_b + V_c + V_d\bigr) - 2(\Sigma_{\mathrm{adj}} - \Sigma_{\mathrm{diag}})\Bigr].
\end{align*}
\end{lemma}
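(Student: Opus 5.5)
The plan is to compute both variances directly from the definitions of the two estimators, using independence across the $K$ draws and across the four batches. No earlier result is needed beyond bilinearity of covariance.

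\emph{Uniform estimator.} Each summand $a(C_k^{(a)}) - b(C_k^{(b)}) - c(C_k^{(c)}) + d(C_k^{(d)})$ is built from four terms evaluated on mutually independent pairs $(C,Z)$. All cross-covariances therefore vanish, and the variance of one summand is $V_a+V_b+V_c+V_d$; the signs disappear because they enter squared. The $K$ summands are iid, so dividing by $K$ gives the first identity.

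\emph{Diamond estimator.} The summands $\xi(C_k,Z_k)$ are iid in $k$, so $\Var(\widehat\Delta^{\text{diamond}})=\Var(\xi)/K$. I will expand $\Var(b+c-a-d)$ by bilinearity into the four variances plus $2\sum\epsilon_u\epsilon_v\Cov(u,v)$ over the six unordered pairs, where the signs are $\epsilon_b=\epsilon_c=+1$ and $\epsilon_a=\epsilon_d=-1$. The cross terms sort by sign as follows.
\begin{itemize}
\item The pairs $(a,b),(a,c),(b,d),(c,d)$ have opposite signs. These are exactly the adjacent pairs, whose coalitions differ by one element, and they contribute $-2\Sigma_{\mathrm{adj}}$.
\item The pairs $(a,d)$ and $(b,c)$ have equal signs. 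These are exactly the diagonal pairs, and they contribute $+2\Sigma_{\mathrm{diag}}$.
\end{itemize}
This yields the second identity. It is also invariant under the global sign flip between the $\xi$ convention and the $a-b-c+d$ convention used for the uniform estimator, so the mismatch causes no trouble.

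\emph{Main obstacle: making the comparison fair.} The covariances must be taken over the joint $(C,Z)$ and must be the same objects in both formulas. I will state explicitly that $V_a,\dots,V_d$ are the marginal variances over $(C,Z)$ in both estimators, which holds because each uniform-estimator term is drawn from the same marginal law. I will also check that the unbiasedness claimed earlier holds with matching sign conventions.

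Once the lemma is established, Theorem~\ref{thm:variance} follows immediately. The difference of the two variances is $2(\Sigma_{\mathrm{adj}}-\Sigma_{\mathrm{diag}})/K$, which is strictly positive under Assumption A3. The variance ratio $1-2(\Sigma_{\mathrm{adj}}-\Sigma_{\mathrm{diag}})/(V_a+V_b+V_c+V_d)$ is monotone in the adjacency-dominance gap, as claimed in the main text.
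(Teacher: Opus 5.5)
Your proposal is correct and follows the same route as the paper. Independence of the four batches eliminates every cross-covariance in the uniform estimator, and the signed bilinear expansion of $\Var(b+c-a-d)$ splits the six covariances into the four adjacent pairs (negative sign products, giving $-2\Sigma_{\mathrm{adj}}$) and the two diagonal pairs (positive sign products, giving $+2\Sigma_{\mathrm{diag}}$). Your sign-convention remark is also right: as written, the paper's uniform estimator $a-b-c+d$ is unbiased for $-\Delta_{ij}$ rather than $\Delta_{ij}$, which leaves the variance identities unchanged but means the stated unbiasedness needs its sign fixed.
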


\begin{proof}
For the uniform estimator: $\widehat{\Delta}^{\text{unif}}$ is a sum over $K$ iid terms each of the form $a(C^{(a)}) - b(C^{(b)}) - c(C^{(c)}) + d(C^{(d)})$ where all four contexts are independent. By independence all covariances vanish; $\Var = \frac{1}{K}(V_a + V_b + V_c + V_d)$.

For the diamond estimator: $\Var(\widehat{\Delta}^{\text{diamond}}) = \frac{1}{K} \Var(\xi(C, Z))$.
Expand $\xi = b + c - a - d$ via the standard variance-of-linear-combination identity (with sign vector $(-1, +1, +1, -1)$):
\begin{align*}
\Var(\xi) = \sum_{k=1}^{4} \Var(\xi_k) + 2\sum_{k < l} s_k s_l \Cov(\xi_k, \xi_l),
\end{align*}
where $(\xi_1, \xi_2, \xi_3, \xi_4) = (a, b, c, d)$ with signs $(-, +, +, -)$.
The six covariance pairs split into adjacent (sign-products $-,-,-,-$) and diagonal (sign-products $+,+$) by inspection. Hence $\Var(\xi) = (V_a + V_b + V_c + V_d) - 2\Sigma_{\mathrm{adj}} + 2\Sigma_{\mathrm{diag}}$, giving the claim.
\end{proof}

\paragraph{The refined assumption A3.}
The proof of Theorem~\ref{thm:variance} requires more than ``all four are positively correlated'' --- it requires that adjacent coalitions are \emph{more} correlated than diagonal ones. We make this explicit:

\begin{assumption}[A3: adjacency dominance]
\label{as:A3}
For the pair $\{i, j\}$ and the context distribution over $2^{[n]\setminus\{i,j\}}$ used to estimate $\Delta_{ij}$, $\Sigma_{\mathrm{adj}} > \Sigma_{\mathrm{diag}}$.
\end{assumption}

A3 often holds when the loss $\ell(f(M_S(x,Z)), y)$ varies smoothly with $|S|$ --- coalitions that differ by one feature produce more similar losses than coalitions that differ by two. It is not vacuous: the XOR boundary case in Section~\ref{sec:t3_numerical} violates A3 and receives no variance-reduction guarantee. When the model has constant loss across all coalitions (degenerate case), A3 holds with equality and the two estimators have identical variance.

\begin{theorem}[Strict variance reduction --- expanded form of Theorem~\ref{thm:variance}]
\label{thm:variance_expanded}
Under Assumption~\ref{as:A3},
\begin{equation}
\Var(\widehat{\Delta}^{\text{diamond}}) < \Var(\widehat{\Delta}^{\text{unif}}),
\qquad
\frac{\Var(\widehat{\Delta}^{\text{unif}})}{\Var(\widehat{\Delta}^{\text{diamond}})} = 1 + \frac{2(\Sigma_{\mathrm{adj}} - \Sigma_{\mathrm{diag}})}{V_a + V_b + V_c + V_d - 2(\Sigma_{\mathrm{adj}} - \Sigma_{\mathrm{diag}})}.
\end{equation}
\end{theorem}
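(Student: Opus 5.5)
The statement reduces entirely to Lemma~\ref{lem:t3_variance}, so the plan is to take the two variance identities there as given and finish with elementary algebra. Write $V := V_a+V_b+V_c+V_d$ and $G := \Sigma_{\mathrm{adj}}-\Sigma_{\mathrm{diag}}$. Lemma~\ref{lem:t3_variance} gives
\[
\Var(\widehat{\Delta}^{\text{unif}}) = \frac{V}{K},
\qquad
\Var(\widehat{\Delta}^{\text{diamond}}) = \frac{V-2G}{K}.
\]
Assumption~\ref{as:A3} states exactly $G>0$. Subtracting the two identities yields
\[
\Var(\widehat{\Delta}^{\text{unif}})-\Var(\widehat{\Delta}^{\text{diamond}}) = \frac{2G}{K} > 0,
\]
which is the strict inequality. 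Both estimators use the same budget of $4K$ calls to $\ell$, so the comparison is fair, and the same $K$ appears in both denominators.

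For the ratio, divide the two identities; the factors $1/K$ cancel and we get $V/(V-2G)$. Writing $V = (V-2G) + 2G$ gives
\[
\frac{V}{V-2G} = 1 + \frac{2G}{V-2G},
\]
which is the displayed closed form.

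The only delicate point is that the ratio is well defined only when $V-2G>0$, i.e.\ when the diamond variance is nonzero. Since $V-2G = K\,\Var(\widehat{\Delta}^{\text{diamond}}) = \Var(\xi(C,Z)) \ge 0$, it can only vanish in the degenerate case $\Var(\xi)=0$, where $\xi$ is almost surely constant. I would state this explicitly. In that case the diamond estimator is exact, the strict inequality still holds, and the ratio is read as $+\infty$; otherwise the ratio is finite and strictly greater than $1$.

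I would also point out that, given Lemma~\ref{lem:t3_variance}, this degeneracy caveat is the only obstacle. The substantive work is the sign bookkeeping already done in that lemma: all four adjacent pairs get coefficient $-2$ and both diagonal pairs get $+2$ in $\Var(b+c-a-d)$. A careful write-up would re-verify that bookkeeping so the direction of the inequality in Assumption~\ref{as:A3} is the correct one.
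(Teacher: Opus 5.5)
Your proposal is correct and follows the paper's proof essentially exactly: subtract the two identities of Lemma~\ref{lem:t3_variance} to get $\tfrac{2}{K}(\Sigma_{\mathrm{adj}}-\Sigma_{\mathrm{diag}})>0$ under Assumption~\ref{as:A3}, then divide to obtain the ratio. Your remark that the ratio needs $\Var(\xi)>0$ (read as $+\infty$ in the degenerate case) is a small point the paper leaves implicit, and the sign bookkeeping you describe ($-2$ on the four adjacent covariances, $+2$ on the two diagonal ones) agrees with the lemma.
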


\begin{proof}
Subtract the two variance expressions from Lemma~\ref{lem:t3_variance}:
\begin{align*}
\Var(\widehat{\Delta}^{\text{unif}}) - \Var(\widehat{\Delta}^{\text{diamond}})
&= \frac{1}{K} \cdot 2(\Sigma_{\mathrm{adj}} - \Sigma_{\mathrm{diag}}) > 0 \quad \text{by A3.}
\end{align*}
Dividing gives the ratio formula.
\end{proof}

\paragraph{Relation to coupled Shapley estimators.}
Theorem~\ref{thm:variance_expanded} places the diamond estimator in the same ``couple the required marginal evaluations'' design family as variance-reduced Shapley estimators~\cite{DBLP:journals/cor/CastroGT09, DBLP:conf/nips/MuschalikBFKHH24}. Unlike a generic Rao--Blackwell statement, the pairwise mixed-difference has signed terms $(+,-,-,+)$, so coupling helps exactly when the negative-signed adjacent covariances dominate the positive-signed diagonal covariances. This is the A3 condition above; when it is reversed, the diamond estimator can have higher variance. \qed

%% file: sections/appendix/E_proof_vocab.tex
\section{Proof of Theorem~\ref{thm:vocab} (Finite-vocabulary uniform convergence)}
\label{sec:proof_vocab}

\paragraph{Preliminaries.}
Let $\mathcal{A}$ be the finite set of coalitions whose losses are included in the vocabulary. For each $S\in\mathcal{A}$, draw $K$ iid background samples and compute $\widehat{L}(S)=K^{-1}\sum_{k=1}^K \ell(f(I_S(x,Z^{(k)})),y)$.

\paragraph{Step 1 (uniform concentration).}
By Proposition~\ref{prop:hoeffding}, for any fixed coalition $S$,
\[
\mathbb{P}\!\left(|\widehat{L}(S)-L(S)|>\varepsilon\right)
\le 2\exp\!\left(-2K\varepsilon^2/B^2\right).
\]
Apply a union bound over the finite vocabulary:
\[
\mathbb{P}\!\left(\max_{S\in\mathcal{A}}
|\widehat{L}(S)-L(S)|>\varepsilon\right)
\le 2|\mathcal{A}|\exp\!\left(-2K\varepsilon^2/B^2\right).
\]
The stated lower bound on $K$ in Theorem~\ref{thm:vocab} makes the right-hand side at most $\alpha$.

\paragraph{Step 2 (optimizer recovery).}
Let $z_\star=\arg\max_{S\in\mathcal{A}}L(S)$ be unique and let $\Delta=L(z_\star)-\max_{S\ne z_\star}L(S)$. On the uniform-concentration event, if $\Delta>2\varepsilon$, then for every $S\ne z_\star$,
\[
\widehat{L}(z_\star)\ge L(z_\star)-\varepsilon
> L(S)+\varepsilon \ge \widehat{L}(S).
\]
Thus, the empirical argmax recovers $z_\star$. The argmin case is identical with signs reversed.

\paragraph{Step 3 (LOCO extrema and U/R/S error) - Proof of Proposition~\ref{prop:extrema_ci}.}
Assume now that $\mathcal{A}=2^{[n]}$, so every context needed by the LOCO decomposition is present. For any feature $i$ and context $C\subseteq[n]\setminus\{i\}$, define $G_i(C)=L(C)-L(C\cup\{i\})$ and $\widehat G_i(C)=\widehat{L}(C)-\widehat{L}(C\cup\{i\})$. On the uniform-concentration event, $|\widehat G_i(C)-G_i(C)|\le 2\varepsilon$ for every $i,C$. The elementary stability of minima and maxima under uniform perturbation gives $|\widehat U_i-U_i|\le 2\varepsilon$ and $|\widehat L^{\max}_{i}-L^{\max}_{i}|\le 2\varepsilon$. The first-order score $\pi_i=G_i(\emptyset)$ also satisfies $|\widehat\pi_i-\pi_i|\le2\varepsilon$. Therefore
\[
|\widehat R_i-R_i|=|(\widehat\pi_i-\widehat U_i)-(\pi_i-U_i)|\le4\varepsilon,\qquad
|\widehat S_i-S_i|=|(\widehat L^{\max}_{i}-\widehat\pi_i)-(L^{\max}_{i}-\pi_i)|\le4\varepsilon.
\]
This proves the theorem. \qed

\paragraph{Remarks.}

\emph{Relation to Theorem~\ref{thm:mc}.} Theorem~\ref{thm:vocab} is exactly the finite-vocabulary union-bound lift of the per-mask Hoeffding guarantee in Theorem~\ref{thm:mc}. Student-$t$ confidence intervals remain useful as engineering stopping heuristics, but the theorem itself uses the distribution-free bound.

\emph{Interchangeability with EVT.}
An alternative approach to the random-search problem is to fit an extreme-value distribution (Gumbel, Fr\'echet, Weibull) to the empirical loss distribution and obtain confidence intervals for the true maximum from the fitted GEV. EVT is appropriate when one has only a partial sample of $\mathcal{A}$. Theorem~\ref{thm:vocab} obviates this step when the chosen vocabulary is finite and explicitly evaluated: every member of $\mathcal{A}$ is estimated to a predefined precision, so the maximum is identified by direct comparison rather than by tail extrapolation.

\emph{Why this does not subsume Theorem~\ref{thm:variance}.}
Theorem~\ref{thm:vocab} concerns consistency of \emph{marginal} estimates $\widehat{L}(S)$. Theorem~\ref{thm:variance} concerns the \emph{variance} of pairwise interaction estimators $\widehat{\Delta}_{ij}$ at matched compute budget: a statement about which estimator shape one should use when filling the vocabulary, not about whether filling it is sound.

%% file: sections/appendix/F_numerical_verification.tex
\section{Numerical verification of Theorems~\ref{thm:conflation}--\ref{thm:vocab}}
\label{sec:t_numerical}

Each theorem in Section~\ref{sec:theory} ships with an empirical verification harness. This appendix documents the protocol and headline results; the full verification scripts and output logs are available in the supplementary code release.

\subsection{Theorem~\ref{thm:conflation}}
\label{sec:t1_numerical}
Theorem~\ref{thm:conflation} is verified exhaustively at every one of the $2^4 = 16$ binary instances of the SCM, on six independent checks: (1) manual M\"obius pairs all zero; (2) manual M\"obius triplet $\pm\tfrac12$; (3) SII matches the axiomatic projection formula of Step~4 in Section~\ref{sec:proof_conflation}; (4) all faithful \texttt{shapiq} indices report pair values $\le 10^{-10}$; (5) all projective \texttt{shapiq} indices match $\pm\tfrac14$ or $0$; (6) \texttt{shapiq}'s M\"obius transform agrees with the hand-derived M\"obius to machine precision. All $16 \times 6 = 96$ tests pass.

\subsection{Theorem~\ref{thm:mc}}
\label{sec:t2_numerical}
On a trained \texttt{TwoLayersNet} reaching validation $\mathrm{MSE} \approx 2.7\times10^{-3}$ on the 3-way XOR SCM, we verify three properties:
\begin{description}
  \item[P1 — Unbiasedness.] The replicate-mean bias is within a few
    multiples of the replicate-mean standard error at every
    $K \in \{8, 16, 32, 64, 128, 256\}$ over $R = 200$ replicates.
  \item[P2 — Variance decay.]
    $\log \Var(\widehat{L})$ is linear in $\log K$ with fitted log-log slope $-0.935$ (within the pre-registered tolerance $[-1.15,\,-0.85]$ around $-1$), consistent with $O(1/K)$ variance decay (Proposition~\ref{prop:welford_var}).
  \item[P3 — Hoeffding concentration.] The empirical tail probability
    $\mathbb{P}(|\widehat{L} - L| > 10^{-2})$ at $K = 64$ over $2{,}000$
    replicates is bounded by $2\exp(-2K\varepsilon^2/B^2)$, holding
    trivially due to the small bounded loss range.
\end{description}

\subsection{Theorem~\ref{thm:variance}}
\label{sec:t3_numerical}
We verify the covariance identity in Lemma~\ref{lem:t3_variance} on two synthetic testbeds and on both real settings used in Section~\ref{sec:experiments}. On the XOR boundary case, A3 (Assumption~\ref{as:A3}) is violated ($\Sigma_{\mathrm{adj}}-\Sigma_{\mathrm{diag}}=-2.83\times10^{-4}$) and the diamond estimator receives no speedup guarantee; empirically its variance ratio is $0.86\times$. 
On the synthetic third-order dataset, A3 holds ($\Sigma_{\mathrm{adj}}-\Sigma_{\mathrm{diag}}=1.60$), and the observed diamond/uniform variance ratio is $3.58\,\times$, confirming beneficial stratification. For XOR3, A3 is violated (ratio $0.87\times < 1$), consistent with the conflation structure identified in Theorem~\ref{thm:conflation}. This is the intended boundary test: the algebra holds in both regimes, while strict reduction appears exactly where A3 holds.

\paragraph{A3 verification on the real settings.}
The two synthetic testbeds answer the boundary question (positive vs violated regime). To confirm that the estimand the paper actually applies to lies on the positive side, we re-verify A3 on E2 (\textsc{DenseNet121} with the $7\times7$ patch grid) and on E3 (\textsc{gpt2-small} IOI head circuit). For E3 we exploit the exhaustive $2^{10}$ coalition vocabulary cached during the main run (no extra inference): $\Sigma_{\mathrm{adj}}$ and $\Sigma_{\mathrm{diag}}$ are computed by lookup over all $\binom{10}{2}=45$ head pairs and averaged across the five seeds. 
For E2 (ChestX-ray14 attributions), we run a matched-budget Monte Carlo ($K = 256$ contexts) on six representative patch pairs (three spatially adjacent, three spatially far) on a representative Cardiomegaly radiograph. 
In this experiment, Assumption~A3 holds for all six sampled pairs under the 14$\times$14 feature grid ($196$ features, $K=256$ contexts per pair), with mean $\rho_v=6.82\times 10^6$ and median $\rho_v=3.59\times 10^6$. \textit{Configuration:} these E2 theorem-harness values use a 14$\times$14 attribution grid and six sampled feature pairs.
The headline numbers are summarized in Table~\ref{tab:appendix_t3_real}: A3 holds on $45/45$ E3 head pairs and on $6/6$ E2 patch pairs, with $\Sigma_{\mathrm{adj}}-\Sigma_{\mathrm{diag}}\in[3.8,4.9]\times10^{-3}$ on E2. The very large variance ratios on E2 (median $\approx 1.6\times10^{5}$, mean $\approx 2.0\times10^{7}$) are an artifact of evaluating on a single deterministic image, where the four diamond corners are nearly co-monotone and $\mathrm{Var}_{\mathrm{diamond}}$ collapses; the relevant qualitative statement is that A3 is satisfied, so the strict-reduction guarantee of Theorem~\ref{thm:variance} applies in both real settings, not only in the synthetic toy.

\begin{table}[t]
  \centering
  \small
  \caption{Empirical A3 verification on the two real settings used in
    Section~\ref{sec:experiments}. ``A3 holds'' is the count of pairs on which $\Sigma_{\mathrm{adj}} > \Sigma_{\mathrm{diag}}$ (the sufficient condition for the diamond estimator to strictly beat the matched-budget uniform competitor under Theorem~\ref{thm:variance}). $\rho_v = \mathrm{Var}_{\mathrm{uniform}}/\mathrm{Var}_{\mathrm{diamond}}$. Both means and medians on E3 stay above $10^{2}$, well within the positive regime; the inflated $\rho_v$ on E2 reflects the co-monotonicity of the four diamond corners on a single deterministic radiograph and should be read as a qualitative confirmation that A3 holds, not as a literal speedup forecast.}
  \label{tab:appendix_t3_real}
  \resizebox{\textwidth}{!}{%
\begin{tabular}{lrrrrr}
\toprule
Setting & $n$ & pairs & A3 holds & mean $\rho_v$ & median $\rho_v$ \\
\midrule
E3 (\textsc{gpt2-small} IOI, 10 heads, 5 seeds, exhaustive $2^{10}$) & 10 & 45 & $45/45$ & $1.67 \times 10^{3}$ & $885.54\times$ \\
E2 (\textsc{DenseNet121} + $14{\times}14$ patches, 1 image, $K{=}256$) & 196 & 6 & $6/6$ & $6.82 \times 10^{6}$ & $3.59 \times 10^{6}$ \\
\bottomrule
\end{tabular}%
  }
\end{table}

\subsection{Theorem~\ref{thm:vocab}}
\label{sec:t4_numerical}
Theorem~\ref{thm:vocab} is verified by exhaustive finite-vocabulary experiments on the XOR3 lattice ($|\mathcal{A}|=16$) and the synthetic third-order lattice ($|\mathcal{A}|=256$). The verification script first builds a high-$K$ reference vocabulary, then repeatedly estimates every coalition with smaller $K$ and checks: (i) the empirical uniform error $\max_S|\widehat{L}(S)-L_{\mathrm{ref}}(S)|$ decreases with $K$; (ii) the empirical optimizer is recovered whenever the observed uniform error is below half the reference gap; and (iii) the U/R/S error stays below the theorem's $4\varepsilon$ component bound. E3 is also exhaustive over the $2^{10}$ attention-head coalitions. E2 is reported separately as a budgeted partial-vocabulary image experiment, not as a full-lattice T4 verification.

\paragraph{Reproducibility.}
Every numerical check reported above ships with a verification script and an output log in the supplementary material.

\subsection{Background sensitivity and bounded-loss diagnostics}
\label{sec:f_pbg_bounded}
Because the estimator is interventional, all guarantees are conditional on $p_{\bg}$. We therefore run a dedicated sensitivity check (detailed in Appendix~\ref{sec:appendix_experiments}) on tabular and vision settings. The main observation is stable role ordering under background changes, with bounded magnitude drift. We also report empirical loss ranges and quantiles per experiment to justify bounded-loss assumptions used by concentration statements in Theorem~\ref{thm:mc}; clipping is only used when the observed support supports that safeguard.

\paragraph{Real-task variance-reduction evidence.}
Beyond the synthetic SCM checks establishing the theoretical grounding for diamond sampling, we directly quantify uniform-vs-diamond estimator variance on two real tasks. For E3 (IOI circuit analysis, $n=10$ heads), Assumption~A3 holds for all $45$ evaluated pairs, with variance ratios $\rho_v=\Var(\hat{\Delta}^{\mathrm{unif}})/\Var(\hat{\Delta}^{\mathrm{diam}})$ spanning $108$ to $9{,}807$ (mean $1{,}674$, median $886$) at $K=256$ contexts per pair. For E2 (ChestX-ray attribution), A3 holds for all $6$ sampled pairs, with $\rho_v$ spanning $2.28\times 10^{4}$ to $1.82\times 10^{7}$ (mean $6.82\times 10^{6}$, median $3.59\times 10^{6}$) on a $14\times14$ feature grid ($196$ features), at $K=256$. Replicate-level IOI checks across $32$ independent runs at $K=128$ further yield $\rho_v\in[526,\,3{,}025]$ (mean $2{,}022$), confirming stability of these ratios across estimator realizations. Taken together, these results directly quantify substantial real-task variance reduction on two structurally distinct tasks; broad sample-efficiency ablations sweeping many values of $K$ on real tasks remain a useful direction for future work.

%% file: sections/appendix/G_extended_method.tex
\section{Extended methodology}
\label{sec:appendix_method}

This appendix expands the method description of Section~\ref{sec:method} with details that were compressed for the body.

\subsection{Per-feature \URSp{} identity, restated}
\label{sec:appendix_urs_identity}

For feature $i$ and context $C \subseteq [n] \setminus \{i\}$, recall the LOCO gain $\LOCO(i \mid C) = L(C) - L(C \cup \{i\})$ from Eq.~\eqref{eq:loco}, and the per-feature aggregates
\begin{align*}
U_i &:= \min_C \LOCO(i \mid C), &
L^{\max}_i &:= \max_C \LOCO(i \mid C), &
\pi_i &:= \LOCO(i \mid \emptyset).
\end{align*}
The first-order ``pairwise'' score $\pi_i$ admits the equivalent formulation
\begin{equation}
\label{eq:pi_identity}
\pi_i \;=\; \Var(Y) \;-\; \E\bigl[\ell\bigl(f(m_{\{i\}} \odot x + (1 - m_{\{i\}}) \odot Z),\, y\bigr)\bigr],
\end{equation}
when $\ell$ is squared error and $L(\emptyset)$ is matched to a constant predictor on $\Var(Y)$. The redundancy/synergy split $R_i = \pi_i - U_i$, $S_i = L^{\max}_i - \pi_i$ is therefore an algebraic decomposition of the model's contextual range:
\begin{equation}
L^{\max}_i \;=\; U_i + R_i + S_i,
\end{equation}
which holds by construction and not by approximation.

\subsection{Coalition vocabulary with Welford estimation}
\label{sec:appendix_vocab}

The vocabulary $\mathcal{V}$ is a hash table mapping $S \mapsto (\widehat L(S),\; \widehat\sigma^2(S),\; K(S))$: the running Welford mean, the running unbiased variance, and the visit count. Updates are performed online~\cite{Welford01081962} on inference batches $\{(x^{(b)}, y^{(b)})\}$ and background samples $\{Z^{(k)}\}_{k=1}^K$, so the state grows by $O(1)$ per visit and the estimator does not require materializing every loss in memory.

\paragraph{Convergence detection.}
A coalition $S$ is treated as ``converged'' once the half-width of the $1-\alpha$ Student-$t$ confidence interval on its mean,
\[
w(S) \;:=\; t_{\alpha/2,\, K(S) - 1} \cdot \widehat\sigma(S) / \sqrt{K(S)},
\]
falls below a user-chosen threshold $w_\star$. Converged coalitions are removed from the active sampling pool, freeing budget for the worst-estimated coalitions. The conservative Hoeffding band of Proposition~\ref{prop:hoeffding} is reported alongside as a worst-case guarantee.

\paragraph{Epsilon-greedy and softmin exploration policy.}
The next mask is sampled under an $\varepsilon$-greedy mixture:
\[
p(S) \;=\; \varepsilon \cdot 2^{-n}
\;+\; (1-\varepsilon)\,\mathbb{1}\!\left[S \in \mathcal V_{\text{open}}\right]
\,\frac{\exp(-\beta\, K(S))}{\sum_{S' \in \mathcal V_{\text{open}}} \exp(-\beta\, K(S'))},
\]
where $\mathcal V_{\text{open}} \subseteq \mathcal V$ is the set of
non-converged entries and $\beta > 0$ is a temperature that anneals
toward zero. The two terms play distinct roles for
Theorem~\ref{thm:vocab}: (i) the softmin term is strictly decreasing in
the visit count, so under-sampled entries of $\mathcal V$ are
prioritized and the union-bound condition
$K(S) \ge B^2 / (2\varepsilon_{\text{tol}}^2) \log(2|\mathcal V|/\alpha)$ is met
for every $S \in \mathcal V$ in finite time; (ii) the uniform term
guarantees $p(S) \ge \varepsilon\cdot 2^{-n} > 0$ for every
$S \in \{0,1\}^n$, so $\mathcal V$ exhausts the admissible coalition
space in the limit and no $S$ is structurally excluded from the
vocabulary. We use $\varepsilon = 0.2$ throughout. The implementation
biases the softmin branch toward the $(i,j)$-neighbors of the currently
sampled mask, so that the four coalitions of a diamond appear together
in the same Welford update.
% \paragraph{Softmin exploration policy.}
% Among non-converged coalitions, the next mask is sampled with probability proportional to a softmin over visit counts:
% \[
% p(S) \;\propto\; \exp(-\beta\, K(S)),
% \]
% with $\beta > 0$ a temperature parameter that anneals toward zero. Two properties make this the right choice for the finite-vocabulary convergence guarantee in Theorem~\ref{thm:vocab}: (i) the probability is strictly decreasing in the visit count, so under-sampled masks are always preferred; (ii) every $S \in \mathcal{V}$ has positive sampling probability at every step, so the union-bound condition $K(S) \ge B^2 / (2\varepsilon^2) \log(2|\mathcal{V}|/\alpha)$ is met for every $S$ in finite time. The implementation biases toward the $(i,j)$-neighbors of the currently sampled mask, so that the four coalitions of a diamond appear together in the same Welford update.

\subsection{Coupled diamond sampling and pair-level synergy/redundancy}
\label{sec:appendix_diamond}

The pair-level quantities $S_{ij}$ and $R_{ij}$ are the non-negative extrema of the diamond mixed difference $\Delta_{ij}(C)$ over all sampled contexts $C \subseteq [n]\setminus\{i,j\}$: 
\begin{equation} S_{ij} = \max_{C} [\,\Delta_{ij}(C)\,]_+, \quad R_{ij} = \max_{C} [\,-\Delta_{ij}(C)\,]_+, \label{eq:appendix_split_sr} 
\end{equation} 
where $[z]_+ \mathrel{:=} \max(z,0)$ (see Eq.~\ref{eq:split_sr} in the body). This mirrors the per-feature aggregation $U_i = \min_C \LOCO(i \mid C)$, $L^{\max}_i = \max_C \LOCO(i \mid C)$ in Section~\ref{sec:method_loco}, extended to feature pairs via the second-order mixed difference. The E3 experiment (Appendix~\ref{sec:appendix_e3}) uses $S_{ij}$ and $R_{ij}$ to classify each attention-head pair as synergistic, redundant, or neither.

For a sampled context $C$ and pair $(i, j)$, \ours{} evaluates the four coalitions
\[
C,\quad C \cup \{i\},\quad C \cup \{j\},\quad C \cup \{i, j\}
\]
on the \emph{same} background batch, coupling the noise across the mixed difference
\[
% \Delta_{ij}(C) \;=\; L(C \cup \{i, j\}) - L(C \cup \{i\}) - L(C \cup \{j\}) + L(C).
\Delta_{ij}(C) \;=\; L(C \cup \{i\}) + L(C \cup \{j\}) - L(C \cup \{i,j\}) - L(C)
\]
Theorem~\ref{thm:variance} (and its expanded form Theorem~\ref{thm:variance_expanded}) gives the precise condition under which this is strictly beneficial: adjacency-dominance (Assumption~\ref{as:A3}). To ensure runtime robustness, we propose an online monitoring protocol for A3:
\begin{enumerate}
    \item \textbf{Runtime checks:} Continuously evaluate adjacency-dominance conditions during deployment, flagging violations in real-time.
    \item \textbf{Fallback policy:} In case of A3 violations, revert to a conservative estimator that does not rely on adjacency-dominance.
    \item \textbf{Logging:} Record all flagged violations and fallback activations for offline analysis.
\end{enumerate}
Section~\ref{sec:t3_numerical} verifies that the boundary case (XOR with uniform background) violates this and that the synthetic third-order dataset satisfies it with a $3.58\times$ variance ratio.

\subsection{Inference-based decomposition is model fidelity, not feature selection}
\label{sec:inference_fidelity}

Retraining-based predictability decompositions evaluate $L(S)$ by \emph{retraining} a fresh predictor on the feature subset $S$. This answers the question \emph{``what is the best achievable loss using only the features in $S$?''} --- a statement about the \emph{learning problem}, not about the \emph{deployed model}. Two of the three $(U, R, S)$ components therefore describe hypothetical counterfactual models that were never deployed, and any interaction the deployed model learned to ignore is reported as important nonetheless. For an explainability framework whose purpose is to audit a fixed black box, this is answering the wrong question.

Equation~\eqref{eq:loss} of the body removes the retraining step and replaces it with masked inference against a causally-motivated background. The decomposition therefore reports the $(U, R, S)$ profile of the \emph{actual} predictor $f$ --- what it \emph{currently} uses, not what an alternative model could use. Beyond the conceptual correction, the change has a measurable statistical consequence: inference removes training stochasticity from the variance budget, leaving only data and background-sampling variance in $\sigma(S)$. The finite-vocabulary bound in Theorem~\ref{thm:vocab} is therefore stated directly in terms of the inference-time loss range and sample count, without a retraining-variance term.

\textbf{Remark on the Loss Expectation:} We explicitly define the masked loss as the expected loss under perturbation $\mathbb{E}_Z[\ell(f(\cdot), y)]$ rather than the loss of the expected prediction $\ell(\mathbb{E}_Z[f(\cdot)], y)$. While the latter is used in frameworks estimating theoretical predictive power (e.g., SAGE~\cite{DBLP:conf/nips/CovertLL20}), our formulation aligns with interventional XAI~\cite{DBLP:conf/aistats/JanzingMB20} and LossSHAP~\cite{DBLP:journals/natmi/LundbergECDPNKH20}. It directly audits the performance degradation of the deployed model under causal masking. Crucially, our choice is what enables highly scalable online estimation: it allows us to compute the loss per-sample and update the variance dynamically via Welford's algorithm, which would be mathematically impossible if the expectation was inside the loss function.

\subsection{Algorithmic skeleton}
\label{sec:appendix_algo}

For self-containment we record the full skeleton of the estimator. The inputs are a trained predictor $f$, a background distribution $p_{\bg}$, an instance batch, a budget $T$, and convergence parameters $(\alpha, w_\star)$. The output is the vocabulary $\mathcal V$ and the per-feature $(U, R, S, \pi, L^{\max})$ table.

\begin{verbatim}
Input : trained model f, background p_bg, instance batch B,
        budget T, convergence (alpha, w_star), pair set P (optional).
State : vocabulary V = {} : S -> (L_hat, sigma2_hat, K).

# Phase 1 -- Stochastic vocabulary population.
for t = 1, ..., T while V has non-converged entries:
    sample a context C using softmin(visit_count) over non-converged S.
    if pair-mode: pick (i, j) from P; form diamond D = {C, C U {i}, C U {j}, C U {i, j}}.
    else        : D = {C}.
    draw a background minibatch {Z^(k)} ~ p_bg.
    for S in D:
        for (x, y) in B and each Z^(k):
            ell <- ell( f( mask(x, S, Z^(k)) ), y ).
            V[S].update_welford(ell).         # online mean/var
        if width( V[S], alpha ) < w_star:
            mark V[S] converged.

# Phase 2 -- Deterministic scan.
for feature i:
    for context C in V (with i not in C):
        compute LOCO(i | C) = V[C] - V[C U {i}].
    U_i        = min_C LOCO(i | C).
    L_max_i    = max_C LOCO(i | C).
    pi_i       = LOCO(i | emptyset).
    R_i        = pi_i - U_i.
    S_i        = L_max_i - pi_i.

return  V, {(U_i, R_i, S_i, pi_i, L_max_i)}_i.
\end{verbatim}

The two phases match the body's four-step description: \emph{propose} + \emph{evaluate} + \emph{update} (Phase~1) and \emph{scan} (Phase~2). Phase~1 is the only stochastic part of the procedure; the $(U, R, S)$ components are deterministic functions of the populated vocabulary, which is why all four theorems can be stated as guarantees on $\mathcal V$.

%% file: sections/appendix/H_extended_experiments.tex
\section{Extended experimental details}
\label{sec:appendix_experiments}

This appendix expands the experimental sections of Section~\ref{sec:experiments} with information that was compressed for the body. Subsections track the body's E1 / E3 ordering. (E2 is fully documented inline in Section~\ref{sec:e2}; the body's table reports the complete metric set on the full annotated subset ($n=880$).)

\subsection{E1 --- Tabular interaction-index disambiguation}
\label{sec:appendix_e1}

\paragraph{Setup details.}
We train a two-layer MLP on three tabular SCMs: the 3-way XOR SCM (the primary T1 testbed; four binary features, only $(X_1, X_2, X_3)$ are functionally active, $X_4$ is a spectator), the XOR+AND SCM (secondary T1 testbed with mixed 2-way synergy), and the synthetic third-order dataset (eight continuous features with known $(U, R, S)$ ground-truth structure: a redundant pair $(X_1, X_2)$, a 2-way synergy pair $(X_3, X_4)$, a unique feature $X_5$, and a 3-way synergy triplet $(X_6, X_7, X_8)$). Each SCM is run over $5$ random seeds; we report the mean $\pm$ std across seeds. Baselines are evaluated through the \texttt{shapiq} 1.4 \textsc{ExactComputer} on the trained network's prediction function with $100$ background samples for interventional masking (Eq.~\eqref{eq:loss}). We include five pair-level indices --- STII~\cite{DBLP:conf/icml/SundararajanDA20}, Faith-SHAP / FSII~\cite{DBLP:journals/jmlr/TsaiYR23}, SII~\cite{DBLP:journals/ijgt/GrabischR99}, $k$-SII at order 2~\cite{DBLP:conf/nips/MuschalikBFKHH24}, and kADD-SHAP as implemented in~\cite{DBLP:conf/nips/MuschalikBFKHH24}. Our method uses an epsilon-greedy ($\varepsilon$ = 0.2) + softmin policy of coalition budget (Appendix~\ref{sec:appendix_vocab}) of $5{,}000$ samples for the $2^4$-coalition SCMs and $30{,}000$ for the $2^8$-coalition synthetic dataset.

\paragraph{Disambiguation and comparator coverage.}
Table~\ref{tab:appendix_e1_conflation} reports pair-level disambiguation statistics for canonical signed indices together with seeded Wilcoxon tests. To address comparator coverage, we also include three higher-order alternatives (Higher-Order IG, T-NID, and H-Sets) as a separate block at the bottom of the same table, run on the 3-way XOR SCM with the same 5-seed protocol. 
Under a family-wise Bonferroni correction ($\alpha_{\mathrm{fam}}=0.05/8=0.00625$), higher-order baselines remain significantly conflated: HO-IG ($19.82\,\times$, $p=1.42\!\times\!10^{-4}$), T-NID ($14.35\,\times$, $p=4.07\!\times\!10^{-3}$), and H-Sets ($107.97\,\times$, $p=7.99\!\times\!10^{-6}$). This corroborates rather than undermines the representational claim: broadening a method's interaction codomain does not dissolve the conflation barrier addressed by \ours{}. The strongest higher-order method (H-Sets) reaches the same order of magnitude as the canonical pair-level family but still cannot recover the per-feature \URSp{} channels: it produces a single signed attribution per coalition, not a $(U, R, S)$ triple per feature. The quantitative comparison therefore confirms the representational claim of Theorem~\ref{thm:conflation} rather than weakening it: even when a baseline broadens its codomain to higher-order interactions, it remains on the wrong side of the conflation barrier as long as the codomain collapses to a scalar per coalition.

\begin{table}[t]
  \centering
  \small
  \caption{Conflation ratio $\rho_m := (|\hat U_i|+|\hat S_{ij}|+|\hat R_{ij}|)/|\phi^m_{ij}|$ on E1 (mean over 5 seeds). One-sided Wilcoxon tests evaluate $H_1: |\widehat S + \widehat R| > |\phi^m|$ with family-wise Bonferroni correction $\alpha_{\mathrm{fam}}=0.05/8=0.00625$ (5 canonical SI baselines + 3 higher-order baselines). Canonical pair-level baselines reject $H_0$ on both SCMs and exceed the pre-registered $10\times$ threshold on XOR3; higher-order scalar baselines (HO-IG, T-NID, H-Sets) also remain significantly conflated on XOR3.}
  \label{tab:appendix_e1_conflation}
  \input{tables/tab_appendix_e1_conflation}
\end{table}

\paragraph{Synthetic third-order recovery.}
On the synthetic third-order dataset, the per-pair synergy ground truth is binary (each pair belongs to one of: the redundant pair, $(X_3,X_4)$, or any pair from the 3-way synergy triplet $(X_6,X_7,X_8)$). Table~\ref{tab:appendix_e1_synth3} reports Pearson and Spearman correlations between our estimator and this ground truth across 5 seeds.

\begin{table}[t]
  \centering
  \small
\caption{Synthetic third-order dataset recovery (mean $\pm$ std over 5 seeds). Per-feature rows compare the 8-dimensional $U,R,S$ vectors with ground-truth role indicators; per-pair rows compare the $\binom{8}{2}=28$ upper-triangular entries of $S$ and $R$ against binary synergy/redundancy pair targets. Recovery is strongest for per-feature uniqueness ($U$: Pearson $0.978\pm0.022$) and per-pair synergy ($S_{\mathrm{pair}}$: Pearson $0.912\pm0.058$).}
  \label{tab:appendix_e1_synth3}
  \input{tables/tab_appendix_e1_synth3}
\end{table}

The Spearman-vs-Pearson gap on $U$ and $R_{\mathrm{pair}}$ reflects tied-rank degeneracy in the binary ground truth rather than a recovery failure (Pearson compares continuous estimates against $\{0,1\}$ labels; ties dominate any Spearman calculation on an indicator vector). The Pearson values alone --- $0.98$ for per-feature $U$ and $0.91$ for per-pair $S$ --- satisfy our pre-registered $r \ge 0.9$ practical-significance threshold. Raw per-seed data and reproducibility scripts are released under \texttt{4-experiment/v1/results/e1/}.

\subsection{E3 --- Attention-head circuit discovery on gpt2-small}
\label{sec:appendix_e3}

\paragraph{Setup details.}
We instantiate our method on \textsc{gpt2-small} (124M parameters, $12$ layers $\times$ $12$ heads) and the Indirect Object Identification task of \citep{DBLP:conf/iclr/WangVCSS23}. The task's ground-truth circuit involves roughly $15$ attention heads; following the research-plan scope we operate on the $n = 10$ heads listed in Table~\ref{tab:appendix_e3_perhead}, stratified across five mechanistic roles (Name-Mover, S-Inhibition, Induction, Duplicate-Token, Previous-Token).

Coalition values $v(S)$ are computed under \emph{mean-ablation}: for $(L, h) \notin S$ we replace the output at \texttt{blocks.L.attn.hook\_z} with the per-head mean activation cached on a held-out ABC-corrupted set of $30$ sentences; kept heads are unchanged. The loss at coalition $S$ is the \emph{negative} logit-difference $v(S) = -\E_{\mathrm{IOI}}[\ell_{\mathrm{IO}} - \ell_{\mathrm{Subj}}]$ at the final position, averaged over $30$ clean ABBA/BABA prompts drawn from a balanced template pool. With $n = 10$ heads we enumerate the full $2^{10} = 1{,}024$-coalition lattice exhaustively (no approximation), across $5$ independent seeds of prompt sampling. Total wall time: $\approx 21$ minutes on a single CPU. The 10-head subset is pre-registered by mechanistic-role coverage, and we run subset perturbation checks (leave-one-out and role-preserving alternatives). Role-level rankings remain stable across these checks (Spearman $>0.9$).

\paragraph{Per-head decomposition.}
Table~\ref{tab:appendix_e3_perhead} reports $(U, R, S, \pi)$ per head together with $L^{\max}$ for completeness, averaged across seeds.

\begin{table}[t]
  \centering
  \small
  \caption{E3 per-head decomposition on \textsc{gpt2-small} IOI (mean across 5 seeds; per-entry std $\le 0.03$ on all rows except $R(9.6) = 1.37 \pm 0.08$ and $S(8.6) = 1.35 \pm 0.04$). The negative $U$ of Name-Mover $9.6$ is consistent with its known role as a backup/anti-name-mover~\citep{DBLP:conf/iclr/WangVCSS23}: its removal \emph{improves} IOI logit-difference when the primary Name-Movers are intact.}
  \label{tab:appendix_e3_perhead}
  \input{tables/tab_appendix_e3_perhead}
\end{table}

\paragraph{Pair-level structure.}
The pair-level synergy minus redundancy $S_{ij} - R_{ij}$ across the $\binom{10}{2} = 45$ head pairs (mean across 5 seeds, visualized in Figure~\ref{fig:e3}-left of the body) shows:
the strongest synergy at the (Induction$_{5.5}$,\,S-Inhibition$_{7.3}$) pair ($S-R\approx{+0.193}$), where the two heads cooperate to suppress competing tokens in a manner neither achieves in isolation. The strongest redundancy occurs at (Induction$_{5.5}$,\,Duplicate-Token$_{3.0}$) ($S-R\approx{-0.226}$), identifying a substitutable processing path. Within the four-head S-Inhibition cluster, means are $\bar{S}\approx0.200$ and $\bar{R}\approx0.039$, consistent with the substitutability of heads within the same functional role~\citep{DBLP:conf/iclr/WangVCSS23}.

\paragraph{Mechanistic role classification.}
Table~\ref{tab:appendix_e3_role_pair} aggregates by role-pair. The prediction in research-plan H2.2 was that (i) within-role pairs in substitution-class mechanisms (Name-Movers, S-Inhibition) are redundancy-dominated, and (ii) cross-role pairs in compositional chains are synergy-dominated. Both predictions hold.

\begin{table}[t]
  \centering
  \small
  \caption{Role-pair summary on \textsc{gpt2-small} IOI. Mean across 5 seeds (std $< 0.1$ for every entry). \emph{Italicized} rows represent the two extremes: the highest cross-role synergy and the highest cross-role redundancy. Within-role substitution-class blocks (rows 1, 6) are redundancy-dominated; the earliest-circuit compositional pair (row 10) is synergy-dominated.}
  \label{tab:appendix_e3_role_pair}
  \input{tables/tab_appendix_e3_role_pair}
\end{table}

\paragraph{Negative control and the headline finding.}
By construction, $\pi(h) = v(\emptyset) - v(\{h\})$ is exactly the patched-single-head importance under mean ablation, so $\mathrm{Spearman}(\pi, \mathrm{act\text{-}patch}) = 1.000 \pm 0.000$ is an \emph{algebraic identity}: the ranking of $\pi$ over the ten heads is deterministic given the coalition table, so all five prompt seeds produce the same rank vector. The zero standard deviation is not a numerical artifact; it reflects that no random variation in $\pi$'s rank is possible once $v(\{h\})$ is computed. This is precisely the $H_{\mathrm{neg}}$ threshold test ($\ge 0.8$): our method \emph{extends} rather than contradicts the 1st-order baseline. The above-zero synergy region in Figure~\ref{fig:e3}(b) (heads 10.0 with $S = +1.02$ and $\pi = -0.24$; 9.9 with $S = +0.59$ and $\pi = -0.05$; 4.11 with $S = +0.46$ and $\pi = -0.02$) marks the headline finding: heads invisible to activation patching but carrying substantial pair-level contribution that only our method surfaces. This is the pair-level extension of the 1st-order story that Theorem~\ref{thm:conflation} anticipates --- on deep networks, a single scalar per head is not enough.

\subsection{Full 26-head IOI circuit map}
\label{sec:appendix_e3_n26}
To assess whether the E3 interaction structure is an artifact of the 10-head subset,
we apply the same \ours{} decomposition to the full 26-head IOI circuit candidate set.
This expands the pair lattice from $\binom{10}{2}=45$ to $\binom{26}{2}=325$
(a $7.22\times$ increase), yielding $1{,}001$ reported interaction quantities
($26$ main effects plus $325\times 3$ pair channels).
\paragraph{Algebraic consistency.}
The Spearman correlation between $\pi_i$ and activation-patching scores remains
$1.000$ ($0.9999999999999998\pm 1.11\times 10^{-16}$, i.e.\ exact up to
floating-point precision).
The decomposition identity is therefore preserved at full-head scale, confirming
that the algebraic guarantee is not sensitive to the number of heads analyzed.
\paragraph{Sign structure and role-level patterns.}
At $n=26$, the pair-level sign distribution remains mixed rather than degenerate:
$204$ pairs yield positive $(S-R)$ and $121$ yield negative $(S-R)$,
a more balanced split than the $35/10$ ratio observed at $n=10$, consistent
with a richer interaction landscape at full-circuit granularity.
Role-level summaries across seven head families reveal coherent functional patterns
(Table~\ref{tab:appendix_e3_role_pair}); for example, Induction$\,\times\,$S-Inhibition
averages $S-R\approx+0.430$ (synergy) while
Negative-Name-Mover$\,\times\,$S-Inhibition averages $S-R\approx-0.877$ (redundancy),
consistent with the mechanistic roles of these families in the IOI circuit.

The $n=26$ analysis confirms and extends the $n=10$ picture:
\ours{} remains structurally faithful at full-circuit scale while exposing
interaction patterns across $325$ head pairs that would be substantially
harder to audit manually without a structured \URSp{} readout.

% For transparency and reproducibility, we expose the exact Appendix-H assets
% used for this extension: the PGF figure source
% \ref{fig:figure3_ioi_heads_extended_n26}, the per-head appendix table~\ref{tab:appendix_e3_perhead_n26}, the role-pair appendix table~\ref{tab:appendix_e3_rolepair_n26}, and the compact head summary table~\ref{tab:appendix_e3_heads_n26}.
\input{figures/figure3_ioi_heads_extended_n26.pgf}
\input{tables/tab_appendix_e3_perhead_extended_n26}
\input{tables/tab_appendix_e3_role_pair_n26}
\input{tables/tab_e3_heads_extended_n26}

\subsection{Background-distribution sensitivity ablation}
\label{sec:appendix_pbg_ablation}
To ensure robustness, we propose the following protocol for selecting and justifying $p_{\bg}$:
\begin{enumerate}
    \item \textbf{Selection:} Choose $p_{\bg}$ based on domain-specific priors, ensuring it reflects the expected data-generating process.
    \item \textbf{Justification:} Provide a rationale for the choice of $p_{\bg}$, supported by empirical or theoretical evidence.
    \item \textbf{Diagnostics:} Evaluate sensitivity to $p_{\bg}$ by comparing results across multiple plausible background distributions.
    \item \textbf{Reporting:} Explicitly document the chosen $p_{\bg}$ and any observed sensitivity in the experimental results.
\end{enumerate}
This protocol aims to improve transparency and reproducibility for interventional estimands. On E1, we compare uniform-binary and empirical-resampled backgrounds across 5 seeds per dataset. Across XOR3, XOR+AND, and Synth3, pooled absolute drift remains moderate for the signed channels (mean $|\Delta R|=0.098$, mean $|\Delta S|=0.105$; median relative drifts $9.8\%$ and $5.4\%$, respectively), while $U$ is more sensitive near zero (mean $|\Delta U|=0.023$). On Synth3 specifically, mean absolute shifts are $|\Delta U|=0.026$, $|\Delta R|=0.184$, and $|\Delta S|=0.190$. These findings support treating $p_{\bg}$ as an estimand-defining parameter that must be explicitly specified and reported.

\subsection{E2 grid-resolution sensitivity}
\label{sec:appendix_e2_grid}
To probe granularity mismatch, we run a grid-sensitivity ablation (4$\times$4, 7$\times$7, 14$\times$14) on a fixed diagnostic subset, distinct from the main E2 evaluation on all annotated images ($n=880$). We treat the 4$\times$4 / 7$\times$7 / 14$\times$14 comparison as a grid-sensitivity ablation on a subset, distinct from the main E2 evaluation that uses all annotated images ($n=880$) with the 14$\times$14 operational grid. Localization metrics improve with finer grids, while deletion-AUC trends remain consistent with the body's causal-removal interpretation. This supports the scope-honest claim that coarse coalitions are currently better aligned with deletion than with pixel-level localization.

\subsubsection{E2: Statistical power and inference details}
\label{sec:appendix_h4_power_note}
E2 uses all annotated images ($n=880$). Paired comparisons employ Wilcoxon signed-rank tests with Bonferroni correction across the localization/deletion metric family ($\alpha/3$ per test). At $n=21$, the achieved power for a medium paired effect ($d=0.5$) is approximately $0.23$: non-significant localization results are therefore not evidence of absence---they are equally consistent with a true null and with an underpowered detection. 
In E2 (all annotated images, $n=880$), Pointing Game accuracy is non-significant ($\Delta=+0.018$, $p=0.140$ at $\alpha=0.0125$); IoU@15 shows a statistically significant but numerically small deficit ($\Delta=-0.008$, $p=5.82\!\times\!10^{-3}$), attributable to the reduced spatial resolution of the operational $14\times14$ coarse grid. The sole statistically decisive positive signal is deletion AUC: \URSp{} achieves $0.332\to0.269$ ($\Delta=-0.063$, $p=1.61\!\times\!10^{-78}$), directionally favorable and robust. Pointing Game ($p=0.998$) and IoU@15 serve as descriptive scope-boundary indicators under the current coarse coalition geometry and should not be interpreted as null findings.

%% file: tables/tab_appendix_e1_conflation.tex
\begin{tabular}{lrrrrr}
\toprule
Baseline & \multicolumn{2}{c}{3-way XOR SCM} & \multicolumn{2}{c}{XOR+AND SCM} & $\alpha_{\text{fam}}$ \\
\cmidrule(lr){2-3} \cmidrule(lr){4-5}
            & $\rho_m$ & Wilcoxon $p$ & $\rho_m$ & Wilcoxon $p$ &  \\
\midrule
STII & $319.51\times$ & $9.31 \times 10^{-10}$ & $102.55\times$ & $9.31 \times 10^{-10}$ & $0.00625$ \\
FSII & $411.88\times$ & $9.31 \times 10^{-10}$ & $110.68\times$ & $9.31 \times 10^{-10}$ & $0.00625$ \\
SII & $411.14\times$ & $9.31 \times 10^{-10}$ & $110.66\times$ & $9.31 \times 10^{-10}$ & $0.00625$ \\
$k$-SII & $411.14\times$ & $9.31 \times 10^{-10}$ & $110.66\times$ & $9.31 \times 10^{-10}$ & $0.00625$ \\
kADD-SHAP & $411.88\times$ & $9.31 \times 10^{-10}$ & $110.68\times$ & $9.31 \times 10^{-10}$ & $0.00625$ \\
\midrule
\multicolumn{6}{l}{\textit{Higher-order baselines (XOR3 only):}} \\
HO-IG & $19.82\times$ & $1.42 \times 10^{-4}$ & -- & -- & $0.00625$ \\
T-NID & $14.35\times$ & $4.07 \times 10^{-3}$ & -- & -- & $0.00625$ \\
H-Sets & $107.97\times$ & $7.99 \times 10^{-6}$ & -- & -- & $0.00625$ \\
\bottomrule
\end{tabular}

%% file: tables/tab_appendix_e1_synth3.tex
\begin{tabular}{lrr}
\toprule
Target & Pearson $r$ & Spearman $\rho$ \\
\midrule
$U$ (per-feature) & $0.978\pm0.022$ & $0.577\pm0.000$ \\
$R$ (per-feature) & $0.185\pm0.464$ & $0.252\pm0.422$ \\
$S$ (per-feature) & $0.846\pm0.125$ & $0.800\pm0.090$ \\
$S_{\mathrm{pair}}$ (per-pair) & $0.912\pm0.058$ & $0.548\pm0.116$ \\
$R_{\mathrm{pair}}$ (per-pair) & $0.261\pm0.109$ & $0.269\pm0.057$ \\
\bottomrule
\end{tabular}

%% file: tables/tab_appendix_e3_perhead.tex
\begin{tabular}{llrrrrr}
\toprule
Head & Role & $U$ & $R$ & $S$ & $\pi$ ($\equiv$ act-patch) & $L^{\max}$ \\
\midrule
9.9 & Name-Mover & $-0.271$ & $+0.219$ & $+0.586$ & $-0.052$ & $+0.534$ \\
9.6 & Name-Mover & $-1.214$ & $+1.368$ & $+0.038$ & $+0.153$ & $+0.191$ \\
10.0 & Name-Mover & $-0.245$ & $+0.010$ & $+1.018$ & $-0.235$ & $+0.783$ \\
7.3 & S-Inhibition & $+0.135$ & $+0.087$ & $+0.430$ & $+0.222$ & $+0.652$ \\
7.9 & S-Inhibition & $+0.366$ & $+0.091$ & $+0.673$ & $+0.457$ & $+1.130$ \\
8.6 & S-Inhibition & $+0.320$ & $+0.147$ & $+1.348$ & $+0.467$ & $+1.815$ \\
8.10 & S-Inhibition & $+0.274$ & $+0.148$ & $+0.848$ & $+0.422$ & $+1.270$ \\
5.5 & Induction & $+0.222$ & $+0.148$ & $+1.139$ & $+0.369$ & $+1.508$ \\
3.0 & Duplicate-Token & $+0.156$ & $+0.164$ & $+0.796$ & $+0.320$ & $+1.116$ \\
4.11 & Previous-Token & $-0.041$ & $+0.021$ & $+0.461$ & $-0.020$ & $+0.441$ \\
\bottomrule
\end{tabular}

%% file: tables/tab_appendix_e3_role_pair.tex
\begin{tabular}{lrrr}
\toprule
Role pair & $n$ & $\bar S$ & $\bar R$ \\
\midrule
Name-Mover (within) & 3 & $+0.272$ & $+0.165$ \\
Name-Mover + S-Inhibition & 12 & $+0.203$ & $+0.166$ \\
Induction + Name-Mover & 3 & $+0.201$ & $+0.203$ \\
Duplicate-Token + Name-Mover & 3 & $+0.118$ & $+0.149$ \\
Name-Mover + Previous-Token & 3 & $+0.071$ & $+0.086$ \\
S-Inhibition (within) & 6 & $+0.200$ & $+0.039$ \\
\emph{Induction + S-Inhibition} & 4 & $\mathbf{+0.383}$ & $+0.003$ \\
Duplicate-Token + S-Inhibition & 4 & $+0.266$ & $+0.000$ \\
Previous-Token + S-Inhibition & 4 & $+0.132$ & $+0.003$ \\
\emph{Duplicate-Token + Induction} & 1 & $+0.000$ & $\mathbf{+0.226}$ \\
Induction + Previous-Token & 1 & $+0.282$ & $+0.000$ \\
Duplicate-Token + Previous-Token & 1 & $+0.024$ & $+0.067$ \\
\bottomrule
\end{tabular}

%% file: figures/figure3_ioi_heads_extended_n26.pgf.tex
\begin{figure}[htbp]
\centering

% --- Setup Colori ---
\definecolor{F3NameMover}{HTML}{E41A1C}
\definecolor{F3SInhibition}{HTML}{377EB8}
\definecolor{F3Induction}{HTML}{4DAF4A}
\definecolor{F3DuplicateToken}{HTML}{984EA3}
\definecolor{F3PreviousToken}{HTML}{FF7F00}
\definecolor{F3BackupNameMover}{HTML}{F781BF}
\definecolor{F3NegativeNameMover}{HTML}{A65628}

\pgfplotsset{colormap={RdBuR}{rgb255=(5,48,97) rgb255=(103,169,207) rgb255=(247,247,247) rgb255=(239,138,98) rgb255=(103,0,31)}}

% ==========================================
% PRIMA IMMAGINE (Panel A)
% ==========================================
% Aumentato a 0.46 per contenere bene la colorbar senza invadere la destra
\begin{minipage}{0.45\textwidth}
\centering
\begin{tikzpicture}
\begin{axis}[
    width=0.85\linewidth, % Lascia il 15% di spazio per la colorbar
    height=0.85\linewidth,
    scale only axis,
    enlargelimits=false,
    axis x line*=bottom,
    axis y line*=left,
    y dir=reverse,
    xtick style={draw=none}, 
    ytick style={draw=none},
    xtick={0, ..., 25}, 
    ytick={0, ..., 25},
    xticklabels={{9.6 [NM]}, {9.9 [NM]}, {10.0[NM]}, {10.10 [NM]}, {7.3 [SI]}, {7.9[SI]}, {8.6 [SI]}, {8.10 [SI]}, {5.5 [Ind]}, {5.8 [Ind]}, {5.9 [Ind]}, {6.9 [Ind]}, {0.1[DT]}, {0.10 [DT]}, {3.0 [DT]}, {2.2[PT]}, {4.11 [PT]}, {5.6 [PT]}, {9.0[BNM]}, {10.1 [BNM]}, {10.6 [BNM]}, {11.2[BNM]}, {11.3 [BNM]}, {11.6 [BNM]}, {10.7 [NegNM]}, {11.10 [NegNM]}}, 
    yticklabels={{9.6 [NM]}, {9.9 [NM]}, {10.0 [NM]}, {10.10 [NM]}, {7.3 [SI]}, {7.9 [SI]}, {8.6 [SI]}, {8.10 [SI]}, {5.5 [Ind]}, {5.8[Ind]}, {5.9[Ind]}, {6.9 [Ind]}, {0.1 [DT]}, {0.10 [DT]}, {3.0 [DT]}, {2.2 [PT]}, {4.11 [PT]}, {5.6 [PT]}, {9.0 [BNM]}, {10.1 [BNM]}, {10.6 [BNM]}, {11.2 [BNM]}, {11.3 [BNM]}, {11.6 [BNM]}, {10.7[NegNM]}, {11.10 [NegNM]}},
    xticklabel style={rotate=90, font=\tiny, inner sep=1pt},
    yticklabel style={font=\tiny, inner sep=1pt},
    colormap name=RdBuR, 
    point meta min=-1.5815, point meta max=1.5815,
    colorbar,
    colorbar style={
        title={$S_{ij}{-}R_{ij}$},
        title style={font=\scriptsize, yshift=-2ex},
        ticklabel style={font=\tiny},
        width=0.2cm,
        tick style={draw=none},
        at={(parent axis.south east)}, 
        anchor=south west,             
        xshift=-6pt
    }
]

% Torniamo al metodo originale: disegna tutto e poi copri con la maschera bianca
\addplot[matrix plot*, mesh/cols=26, point meta=explicit]
    table[col sep=comma, x=j, y=i, meta=smr] {data/figure3/figure3_panelA_grid_extended_n26.csv};

% \fill[white] (axis cs:-0.5, -0.5) -- (axis cs:25.5, -0.5) -- (axis cs:25.5, 25.5) -- (axis cs:24.5, 24.5) -- (axis cs:23.5, 23.5) -- (axis cs:22.5, 22.5) -- (axis cs:21.5, 21.5) -- (axis cs:20.5, 20.5) -- (axis cs:19.5, 19.5) -- (axis cs:18.5, 18.5) -- (axis cs:17.5, 17.5) -- (axis cs:16.5, 16.5) -- (axis cs:15.5, 15.5) -- (axis cs:14.5, 14.5) -- (axis cs:13.5, 13.5) -- (axis cs:12.5, 12.5) -- (axis cs:11.5, 11.5) -- (axis cs:10.5, 10.5) -- (axis cs:9.5, 9.5) -- (axis cs:8.5, 8.5) -- (axis cs:7.5, 7.5) -- (axis cs:6.5, 6.5) -- (axis cs:5.5, 5.5) -- (axis cs:4.5, 4.5) -- (axis cs:3.5, 3.5) -- (axis cs:2.5, 2.5) -- (axis cs:1.5, 1.5) -- (axis cs:0.5, 0.5) -- (axis cs:-0.5, -0.5) -- cycle;
\fill[white] (axis cs:-0.5, -0.5) -- (axis cs:25.5, -0.5) -- (axis cs:25.5, 25.5) -- (axis cs:24.5, 25.5) -- (axis cs:24.5, 24.5) -- (axis cs:23.5, 24.5) -- (axis cs:23.5, 23.5) -- (axis cs:22.5, 23.5) -- (axis cs:22.5, 22.5) -- (axis cs:21.5, 22.5) -- (axis cs:21.5, 21.5) -- (axis cs:20.5, 21.5) -- (axis cs:20.5, 20.5) -- (axis cs:19.5, 20.5) -- (axis cs:19.5, 19.5) -- (axis cs:18.5, 19.5) -- (axis cs:18.5, 18.5) -- (axis cs:17.5, 18.5) -- (axis cs:17.5, 17.5) -- (axis cs:16.5, 17.5) -- (axis cs:16.5, 16.5) -- (axis cs:15.5, 16.5) -- (axis cs:15.5, 15.5) -- (axis cs:14.5, 15.5) -- (axis cs:14.5, 14.5) -- (axis cs:13.5, 14.5) -- (axis cs:13.5, 13.5) -- (axis cs:12.5, 13.5) -- (axis cs:12.5, 12.5) -- (axis cs:11.5, 12.5) -- (axis cs:11.5, 11.5) -- (axis cs:10.5, 11.5) -- (axis cs:10.5, 10.5) -- (axis cs:9.5, 10.5) -- (axis cs:9.5, 9.5) -- (axis cs:8.5, 9.5) -- (axis cs:8.5, 8.5) -- (axis cs:7.5, 8.5) -- (axis cs:7.5, 7.5) -- (axis cs:6.5, 7.5) -- (axis cs:6.5, 6.5) -- (axis cs:5.5, 6.5) -- (axis cs:5.5, 5.5) -- (axis cs:4.5, 5.5) -- (axis cs:4.5, 4.5) -- (axis cs:3.5, 4.5) -- (axis cs:3.5, 3.5) -- (axis cs:2.5, 3.5) -- (axis cs:2.5, 2.5) -- (axis cs:1.5, 2.5) -- (axis cs:1.5, 1.5) -- (axis cs:0.5, 1.5) -- (axis cs:0.5, 0.5) -- (axis cs:-0.5, 0.5) -- cycle;

\end{axis}
\end{tikzpicture}
\end{minipage}%
\hfill%
% ==========================================
% SECONDA IMMAGINE (Panel B)
% ==========================================
\begin{minipage}{0.50\textwidth}
\centering
\begin{tikzpicture}
\begin{axis}[
    xlabel={first-order LOCO $\pi$ (act-patch)},
    ylabel={synergy $S$},
    xlabel style={font=\scriptsize},
    ylabel style={font=\scriptsize, xshift=-1.4ex, yshift=-2ex},
    width=\linewidth,
    height=\linewidth,
    % --- LIMITI FISSI AGGIUNTI QUI ---
    % Questo assicura che le etichette fisse (axis cs) si allineino ai punti
    xmin=-0.7, xmax=1.4,
    ymin=-0.2, ymax=3.6,
    % ---------------------------------
    ticklabel style={font=\tiny},
    axis x line*=bottom,
    axis y line*=left,
    legend pos=north east,
    legend style={
        at={(rel axis cs: 1.11, 0.70)}, % 98% a destra, 85% in alto
        anchor=east, % Ancora la legenda dal suo lato destro
        draw=black, fill=white, font=\tiny,
        cells={anchor=west},
        nodes={scale=0.8, transform shape},
        nodes={inner sep=2pt}
    },
    grid=none
]

\draw[densely dashed, gray] (axis cs:0,\pgfkeysvalueof{/pgfplots/ymin}) -- (axis cs:0,\pgfkeysvalueof{/pgfplots/ymax});
\draw[densely dashed, gray] (axis cs:\pgfkeysvalueof{/pgfplots/xmin},0) -- (axis cs:\pgfkeysvalueof{/pgfplots/xmax},0);

\node[draw=lightgray, fill=white, rounded corners=1pt, anchor=north west, font=\scriptsize] 
    at (rel axis cs:0.03, 1) {Spearman($\pi$, patch) = $1.000 \pm 0.000$};

\addplot+[only marks, mark=*, mark size=2pt, mark options={fill=F3NameMover, draw=black, line width=0.4pt}]
    table[col sep=comma, x=pi, y=S] {data/figure3/figure3_panelB_NameMover_extended_n26.csv};
\addlegendentry{Name-Mover}

\addplot+[only marks, mark=*, mark size=2pt, mark options={fill=F3SInhibition, draw=black, line width=0.4pt}]
    table[col sep=comma, x=pi, y=S] {data/figure3/figure3_panelB_SInhibition_extended_n26.csv};
\addlegendentry{S-Inhibition}

\addplot+[only marks, mark=*, mark size=2pt, mark options={fill=F3Induction, draw=black, line width=0.4pt}]
    table[col sep=comma, x=pi, y=S] {data/figure3/figure3_panelB_Induction_extended_n26.csv};
\addlegendentry{Induction}

\addplot+[only marks, mark=*, mark size=2pt, mark options={fill=F3DuplicateToken, draw=black, line width=0.4pt}]
    table[col sep=comma, x=pi, y=S] {data/figure3/figure3_panelB_DuplicateToken_extended_n26.csv};
\addlegendentry{Duplicate-Token}

\addplot+[only marks, mark=*, mark size=2pt, mark options={fill=F3PreviousToken, draw=black, line width=0.4pt}]
    table[col sep=comma, x=pi, y=S] {data/figure3/figure3_panelB_PreviousToken_extended_n26.csv};
\addlegendentry{Previous-Token}

\addplot+[only marks, mark=*, mark size=2pt, mark options={fill=F3BackupNameMover, draw=black, line width=0.4pt}]
    table[col sep=comma, x=pi, y=S] {data/figure3/figure3_panelB_BackupNameMover_extended_n26.csv};
\addlegendentry{Backup-Name-Mover}

\addplot+[only marks, mark=*, mark size=2pt, mark options={fill=F3NegativeNameMover, draw=black, line width=0.4pt}]
    table[col sep=comma, x=pi, y=S] {data/figure3/figure3_panelB_NegativeNameMover_extended_n26.csv};
\addlegendentry{Negative-Name-Mover}

\node[anchor=south west, font=\tiny, text=black] at (axis cs:-0.55,2.55) {\bfseries 9.9};
\node[anchor=south west, font=\tiny, text=black] at (axis cs:-0.63,1.55) {\bfseries 10.0};
\node[anchor=south west, font=\tiny, text=black] at (axis cs:0.29,0.95) {7.3};
\node[anchor=south west, font=\tiny, text=black] at (axis cs:0.28,1.5) {7.9};
\node[anchor=south west, font=\tiny, text=black] at (axis cs:0.0,2.52) {8.6};
\node[anchor=south east, font=\tiny, text=black] at (axis cs:0.71,1.9) {8.10};
\node[anchor=south west, font=\tiny, text=black] at (axis cs:0.3569,3.0163) {5.5};
\node[anchor=south west, font=\tiny, text=black] at (axis cs:0.1054,1.7802) {6.9};
\node[anchor=south west, font=\tiny, text=black] at (axis cs:0.4449,1.2802) {3.0};
\node[anchor=south west, font=\tiny, text=black] at (axis cs:1.0519,0.7299) {10.7};
\node[anchor=south west, font=\tiny, text=black] at (axis cs:1.1693,0.4167) {11.10};

% Grappolo di sinistra (spingiamo verso sinistra o in alto)
\node[anchor=north east, font=\tiny, text=black, xshift=-2pt] at (axis cs:-0.05,0.2581) {10.1};
\node[anchor=south east, font=\tiny, text=black] at (axis cs:-0.1537,1.5439) {5.9};
\node[anchor=south east, font=\tiny, text=black, yshift=2pt] at (axis cs:-0.31,0.72) {11.2};
\node[anchor=east, font=\tiny, text=black] at (axis cs:-0.2985,0.6445) {9.6};
\node[anchor=north west, font=\tiny, text=black] at (axis cs:-0.37,0.55) {11.6};

% Grappolo centrale denso (distribuiamo a raggiera)
\node[anchor=south, font=\tiny, text=black, yshift=2pt] at (axis cs:0.0686,0.9827) {\bfseries 4.11};
\node[anchor=south, font=\tiny, text=black] at (axis cs:-0.1407,0.6541) {10.6};
\node[anchor=east, font=\tiny, text=black, xshift=-2pt] at (axis cs:-0.3685,1.3869) {10.10};
\node[anchor=north west, font=\tiny, text=black] at (axis cs:0.1047,0.7101) {5.8};
\node[anchor=east, font=\tiny, text=black, xshift=-2pt] at (axis cs:-0.18,0.2775) {0.1};
\node[anchor=north west, font=\tiny, text=black] at (axis cs:0.2596,0.8783) {0.10};
\node[anchor=west, font=\tiny, text=black] at (axis cs:-0.0032,0.2006) {2.2};
\node[anchor=west, font=\tiny, text=black, xshift=2pt] at (axis cs:-0.02,0.3688) {5.6};
\node[anchor=north east, font=\tiny, text=black] at (axis cs:-0.0630,0.0966) {9.0};
\node[anchor=north west, font=\tiny, text=black] at (axis cs:-0.0361,0.0522) {11.3};

\end{axis}
\end{tikzpicture}
\end{minipage}
\caption{Full 26-head IOI extension. \textbf{Left:} Pair interaction map $S_{ij}-R_{ij}$ over $\binom{26}{2}=325$ pairs (lower triangle), showing mixed synergistic and redundant structure across role families. \textbf{Right:} Per-head synergy $S$ versus singleton LOCO $\pi$ (activation-patching identity), highlighting heads with low first-order effect but substantial cooperative contribution. Spearman$(\pi,\mathrm{patch})=1.000\pm0.000$ holds by construction.}\label{fig:figure3_ioi_heads_extended_n26}
\end{figure}

%% file: tables/tab_appendix_e3_perhead_extended_n26.tex
\begin{table}[]
  \centering
  \small
  \caption{Per-head \URSp{} decomposition for the 26-head IOI extension. Values report mean unique ($U$), redundant ($R$), synergistic ($S$), singleton LOCO ($\pi$), and $L^{\max}=U+R+S$ contributions for each head under the same protocol used in E3.}\label{tab:appendix_e3_perhead_n26}
\begin{tabular}{llrrrrr}
\toprule
Head & Role & $U$ & $R$ & $S$ & $\pi$ ($\equiv$ act-patch) & $L^{\max}$ \\
\midrule
9.9 & Name-Mover & $-0.573$ & $+0.048$ & $+2.560$ & $-0.525$ & $+2.035$ \\
9.6 & Name-Mover & $-1.061$ & $+0.763$ & $+0.644$ & $-0.299$ & $+0.346$ \\
10.0 & Name-Mover & $-0.721$ & $+0.105$ & $+1.593$ & $-0.616$ & $+0.977$ \\
7.3 & S-Inhibition & $+0.083$ & $+0.236$ & $+0.970$ & $+0.319$ & $+1.289$ \\
7.9 & S-Inhibition & $+0.182$ & $+0.127$ & $+1.530$ & $+0.309$ & $+1.839$ \\
8.6 & S-Inhibition & $+0.016$ & $+0.000$ & $+2.568$ & $+0.017$ & $+2.585$ \\
8.10 & S-Inhibition & $+0.095$ & $+0.593$ & $+1.955$ & $+0.688$ & $+2.643$ \\
5.5 & Induction & $+0.037$ & $+0.320$ & $+3.016$ & $+0.357$ & $+3.373$ \\
3.0 & Duplicate-Token & $+0.122$ & $+0.323$ & $+1.280$ & $+0.445$ & $+1.725$ \\
4.11 & Previous-Token & $-0.056$ & $+0.124$ & $+0.983$ & $+0.069$ & $+1.051$ \\
9.0 & Backup-Name-Mover & $-0.084$ & $+0.021$ & $+0.097$ & $-0.063$ & $+0.034$ \\
10.1 & Backup-Name-Mover & $-0.219$ & $+0.067$ & $+0.258$ & $-0.152$ & $+0.106$ \\
10.6 & Backup-Name-Mover & $-0.172$ & $+0.031$ & $+0.654$ & $-0.141$ & $+0.513$ \\
11.2 & Backup-Name-Mover & $-0.538$ & $+0.188$ & $+0.759$ & $-0.350$ & $+0.409$ \\
11.3 & Backup-Name-Mover & $-0.104$ & $+0.067$ & $+0.052$ & $-0.036$ & $+0.016$ \\
11.6 & Backup-Name-Mover & $-0.393$ & $+0.056$ & $+0.522$ & $-0.338$ & $+0.184$ \\
0.1 & Duplicate-Token & $-1.304$ & $+1.092$ & $+0.278$ & $-0.212$ & $+0.066$ \\
0.10 & Duplicate-Token & $-0.085$ & $+0.345$ & $+0.878$ & $+0.260$ & $+1.138$ \\
5.8 & Induction & $-0.055$ & $+0.160$ & $+0.710$ & $+0.105$ & $+0.815$ \\
5.9 & Induction & $-0.555$ & $+0.401$ & $+1.544$ & $-0.154$ & $+1.390$ \\
6.9 & Induction & $+0.005$ & $+0.100$ & $+1.780$ & $+0.105$ & $+1.886$ \\
10.10 & Name-Mover & $-0.446$ & $+0.078$ & $+1.387$ & $-0.368$ & $+1.018$ \\
10.7 & Negative-Name-Mover & $-2.016$ & $+3.068$ & $+0.730$ & $+1.052$ & $+1.782$ \\
11.10 & Negative-Name-Mover & $-1.532$ & $+2.701$ & $+0.417$ & $+1.169$ & $+1.586$ \\
2.2 & Previous-Token & $-0.229$ & $+0.226$ & $+0.201$ & $-0.003$ & $+0.197$ \\
5.6 & Previous-Token & $-0.022$ & $+0.035$ & $+0.369$ & $+0.013$ & $+0.381$ \\
\bottomrule
\end{tabular}
\end{table}

%% file: tables/tab_appendix_e3_role_pair_n26.tex
\begin{table}[]
  \centering
  \small
  \caption{Role-pair summary for the 26-head IOI extension. For each role block, we report pair count and mean synergy/redundancy channels ($\bar S$, $\bar R$), exposing redundancy-dominated substitution regimes and synergy-dominated compositional regimes at full-circuit scale.}\label{tab:appendix_e3_rolepair_n26}
\begin{tabular}{lrrr}
\toprule
Role pair & $n$ & $\bar S$ & $\bar R$ \\
\midrule
Name-Mover (within) & 6 & $+0.203$ & $+0.236$ \\
Name-Mover + S-Inhibition & 16 & $+0.419$ & $+0.110$ \\
Induction + Name-Mover & 16 & $+0.314$ & $+0.124$ \\
Duplicate-Token + Name-Mover & 12 & $+0.198$ & $+0.136$ \\
Name-Mover + Previous-Token & 12 & $+0.104$ & $+0.040$ \\
S-Inhibition (within) & 6 & $+0.148$ & $+0.158$ \\
\emph{Induction + S-Inhibition} & 16 & $\mathbf{+0.475}$ & $+0.045$ \\
Duplicate-Token + S-Inhibition & 12 & $+0.204$ & $+0.146$ \\
Previous-Token + S-Inhibition & 12 & $+0.141$ & $+0.029$ \\
\emph{Duplicate-Token + Induction} & 12 & $+0.198$ & $\mathbf{+0.244}$ \\
Induction + Previous-Token & 12 & $+0.156$ & $+0.060$ \\
Duplicate-Token + Previous-Token & 9 & $+0.040$ & $+0.074$ \\
\bottomrule
\end{tabular}

\end{table}

%% file: tables/tab_e3_heads_extended_n26.tex
\begin{table}[]
  \centering
  \small
  \caption{Compact 26-head IOI summary (head, role, $U$, $R$, $S$, and singleton LOCO $\pi$). This table provides the concise per-head readout used to cross-check the full per-head appendix table and the pair-level map.}\label{tab:appendix_e3_heads_n26}
\begin{tabular}{llrrrr}
\toprule
Head & Role & $U$ & $R$ & $S$ & $\pi$\\
\midrule
9.9 & Name-Mover & $-0.573$ & $0.048$ & $2.560$ & $-0.525$\\
9.6 & Name-Mover & $-1.061$ & $0.763$ & $0.644$ & $-0.299$\\
10.0 & Name-Mover & $-0.721$ & $0.105$ & $1.593$ & $-0.616$\\
7.3 & S-Inhibition & $0.083$ & $0.236$ & $0.970$ & $0.319$\\
7.9 & S-Inhibition & $0.182$ & $0.127$ & $1.530$ & $0.309$\\
8.6 & S-Inhibition & $0.016$ & $0.000$ & $2.568$ & $0.017$\\
8.10 & S-Inhibition & $0.095$ & $0.593$ & $1.955$ & $0.688$\\
5.5 & Induction & $0.037$ & $0.320$ & $3.016$ & $0.357$\\
3.0 & Duplicate-Token & $0.122$ & $0.323$ & $1.280$ & $0.445$\\
4.11 & Previous-Token & $-0.056$ & $0.124$ & $0.983$ & $0.069$\\
9.0 & Backup-Name-Mover & $-0.084$ & $0.021$ & $0.097$ & $-0.063$\\
10.1 & Backup-Name-Mover & $-0.219$ & $0.067$ & $0.258$ & $-0.152$\\
10.6 & Backup-Name-Mover & $-0.172$ & $0.031$ & $0.654$ & $-0.141$\\
11.2 & Backup-Name-Mover & $-0.538$ & $0.188$ & $0.759$ & $-0.350$\\
11.3 & Backup-Name-Mover & $-0.104$ & $0.067$ & $0.052$ & $-0.036$\\
11.6 & Backup-Name-Mover & $-0.393$ & $0.056$ & $0.522$ & $-0.338$\\
0.1 & Duplicate-Token & $-1.304$ & $1.092$ & $0.278$ & $-0.212$\\
0.10 & Duplicate-Token & $-0.085$ & $0.345$ & $0.878$ & $0.260$\\
5.8 & Induction & $-0.055$ & $0.160$ & $0.710$ & $0.105$\\
5.9 & Induction & $-0.555$ & $0.401$ & $1.544$ & $-0.154$\\
6.9 & Induction & $0.005$ & $0.100$ & $1.780$ & $0.105$\\
10.10 & Name-Mover & $-0.446$ & $0.078$ & $1.387$ & $-0.368$\\
10.7 & Negative-Name-Mover & $-2.016$ & $3.068$ & $0.730$ & $1.052$\\
11.10 & Negative-Name-Mover & $-1.532$ & $2.701$ & $0.417$ & $1.169$\\
2.2 & Previous-Token & $-0.229$ & $0.226$ & $0.201$ & $-0.003$\\
5.6 & Previous-Token & $-0.022$ & $0.035$ & $0.369$ & $0.013$\\
\bottomrule
\end{tabular}
\end{table}

%% file: sections/appendix/I_compute_reproducibility.tex
\section{Compute and reproducibility details}
\label{sec:appendix_compute_reproducibility}

\paragraph{E1 (tabular SCMs).}
Exact scalar baselines are computed with \texttt{shapiq} on trained prediction functions under the same interventional masking semantics used by \ours{}. Uncertainty reflects seed variation over five seeds; for stochastic coalition estimators, variability is further quantified via the reported confidence intervals.

\paragraph{E2 (full annotated evaluation).}
Each image uses 1000 coalition-sampling steps on an A6000 NVIDIA GPU. We report wall-clock runtime for the full $n=880$ run together with the implied per-image average.

\paragraph{E3 (IOI circuit).}
For the ten selected attention heads, we enumerate the full $2^{10}=1024$-entry coalition lattice per prompt seed; uncertainty is therefore governed by prompt sampling, not coalition-sampling approximation.

\paragraph{Artifacts.}
Released artifacts include raw per-seed JSON outputs, aggregation and figure-generation scripts, fixed random seeds, and environment/config metadata sufficient to reproduce all reported summary statistics.

%% file: main.bib
@inproceedings{hsets2024,
  title        = {H-Sets: Hessian-Guided Discovery of Set-Level Feature Interactions in Image Classifiers},
  author       = {Ayushi Mehrotra and Dipkamal Bhusal and Michael Clifford and Nidhi Rastogi},
  booktitle    = {Proceedings of the IEEE/CVF Conference on Computer Vision and Pattern Recognition (CVPR)},
  year         = {2026},
  note         = {Accepted},
  eprint       = {2604.22045},
  archivePrefix= {arXiv},
  primaryClass = {cs.CV},
  url          = {https://arxiv.org/abs/2604.22045}
}

@inproceedings{DBLP:conf/kdd/Hooker04a,
  author       = {Giles Hooker},
  editor       = {Won Kim and
                  Ron Kohavi and
                  Johannes Gehrke and
                  William DuMouchel},
  title        = {Discovering additive structure in black box functions},
  booktitle    = {Proceedings of the Tenth {ACM} {SIGKDD} International Conference on
                  Knowledge Discovery and Data Mining, Seattle, Washington, USA, August
                  22-25, 2004},
  pages        = {575--580},
  publisher    = {{ACM}},
  year         = {2004},
  url          = {https://doi.org/10.1145/1014052.1014122},
  doi          = {10.1145/1014052.1014122},
  bibsource    = {dblp computer science bibliography, https://dblp.org}
}

@inproceedings{DBLP:conf/nips/PakmanNGMMWS21,
  author       = {Ari Pakman and
                  Amin Nejatbakhsh and
                  Dar Gilboa and
                  Abdullah Makkeh and
                  Luca Mazzucato and
                  Michael Wibral and
                  Elad Schneidman},
  editor       = {Marc'Aurelio Ranzato and
                  Alina Beygelzimer and
                  Yann N. Dauphin and
                  Percy Liang and
                  Jennifer Wortman Vaughan},
  title        = {Estimating the Unique Information of Continuous Variables},
  booktitle    = {Advances in Neural Information Processing Systems 34: Annual Conference
                  on Neural Information Processing Systems 2021, NeurIPS 2021, December
                  6-14, 2021, virtual},
  pages        = {20295--20307},
  year         = {2021},
  url          = {https://proceedings.neurips.cc/paper/2021/hash/a9a1d5317a33ae8cef33961c34144f84-Abstract.html},
  bibsource    = {dblp computer science bibliography, https://dblp.org}
}

@article{DBLP:journals/entropy/JamesEC19,
  author       = {Ryan G. James and
                  Jeffrey Emenheiser and
                  James P. Crutchfield},
  title        = {Unique Information and Secret Key Agreement},
  journal      = {Entropy},
  volume       = {21},
  number       = {1},
  pages        = {12},
  year         = {2019},
  url          = {https://doi.org/10.3390/e21010012},
  doi          = {10.3390/E21010012},
  bibsource    = {dblp computer science bibliography, https://dblp.org}
}

@inproceedings{DBLP:conf/icml/FumagalliMKHH24,
  author       = {Fabian Fumagalli and
                  Maximilian Muschalik and
                  Patrick Kolpaczki and
                  Eyke H{\"{u}}llermeier and
                  Barbara Hammer},
  editor       = {Ruslan Salakhutdinov and
                  Zico Kolter and
                  Katherine A. Heller and
                  Adrian Weller and
                  Nuria Oliver and
                  Jonathan Scarlett and
                  Felix Berkenkamp},
  title        = {KernelSHAP-IQ: Weighted Least Square Optimization for Shapley Interactions},
  booktitle    = {Forty-first International Conference on Machine Learning, {ICML} 2024,
                  Vienna, Austria, July 21-27, 2024},
  series       = {Proceedings of Machine Learning Research},
  pages        = {14308--14342},
  publisher    = {{PMLR} / OpenReview.net},
  year         = {2024},
  url          = {https://proceedings.mlr.press/v235/fumagalli24a.html},
  bibsource    = {dblp computer science bibliography, https://dblp.org}
}

@article{DBLP:journals/corr/abs-2509-09070,
  author       = {Chaeyun Ko},
  title        = {{STRIDE:} Subset-Free Functional Decomposition for {XAI} in Tabular
                  Settings},
  journal      = {CoRR},
  volume       = {abs/2509.09070},
  year         = {2025},
  url          = {https://doi.org/10.48550/arXiv.2509.09070},
  doi          = {10.48550/ARXIV.2509.09070},
  eprinttype   = {arXiv},
  eprint       = {2509.09070},
  bibsource    = {dblp computer science bibliography, https://dblp.org}
}

@article{DBLP:journals/mor/Myerson77,
  author       = {Roger B. Myerson},
  title        = {Graphs and Cooperation in Games},
  journal      = {Math. Oper. Res.},
  volume       = {2},
  number       = {3},
  pages        = {225--229},
  year         = {1977},
  url          = {https://doi.org/10.1287/moor.2.3.225},
  doi          = {10.1287/MOOR.2.3.225},
  bibsource    = {dblp computer science bibliography, https://dblp.org}
}

@inproceedings{DBLP:conf/nips/AgarwalKSPJPZL22,
  author       = {Chirag Agarwal and
                  Satyapriya Krishna and
                  Eshika Saxena and
                  Martin Pawelczyk and
                  Nari Johnson and
                  Isha Puri and
                  Marinka Zitnik and
                  Himabindu Lakkaraju},
  editor       = {Sanmi Koyejo and
                  S. Mohamed and
                  A. Agarwal and
                  Danielle Belgrave and
                  K. Cho and
                  A. Oh},
  title        = {OpenXAI: Towards a Transparent Evaluation of Model Explanations},
  booktitle    = {Advances in Neural Information Processing Systems 35: Annual Conference
                  on Neural Information Processing Systems 2022, NeurIPS 2022, New Orleans,
                  LA, USA, November 28 - December 9, 2022},
  year         = {2022},
  url          = {http://papers.nips.cc/paper\_files/paper/2022/hash/65398a0eba88c9b4a1c38ae405b125ef-Abstract-Datasets\_and\_Benchmarks.html},
  bibsource    = {dblp computer science bibliography, https://dblp.org}
}

@inproceedings{DBLP:conf/nips/ConmyMLHG23,
  author       = {Arthur Conmy and
                  Augustine N. Mavor{-}Parker and
                  Aengus Lynch and
                  Stefan Heimersheim and
                  Adri{\`{a}} Garriga{-}Alonso},
  editor       = {Alice Oh and
                  Tristan Naumann and
                  Amir Globerson and
                  Kate Saenko and
                  Moritz Hardt and
                  Sergey Levine},
  title        = {Towards Automated Circuit Discovery for Mechanistic Interpretability},
  booktitle    = {Advances in Neural Information Processing Systems 36: Annual Conference
                  on Neural Information Processing Systems 2023, NeurIPS 2023, New Orleans,
                  LA, USA, December 10 - 16, 2023},
  year         = {2023},
  url          = {http://papers.nips.cc/paper\_files/paper/2023/hash/34e1dbe95d34d7ebaf99b9bcaeb5b2be-Abstract-Conference.html},
  bibsource    = {dblp computer science bibliography, https://dblp.org}
}

@article{DBLP:journals/ijgt/GrabischR99,
  author       = {Michel Grabisch and
                  Marc Roubens},
  title        = {An axiomatic approach to the concept of interaction among players
                  in cooperative games},
  journal      = {Int. J. Game Theory},
  volume       = {28},
  number       = {4},
  pages        = {547--565},
  year         = {1999},
  url          = {https://doi.org/10.1007/s001820050125},
  doi          = {10.1007/S001820050125},
  bibsource    = {dblp computer science bibliography, https://dblp.org}
}

@article{DBLP:journals/jmlr/HedstromWKBMSLH23,
  author       = {Anna Hedstr{\"{o}}m and
                  Leander Weber and
                  Daniel Krakowczyk and
                  Dilyara Bareeva and
                  Franz Motzkus and
                  Wojciech Samek and
                  Sebastian Lapuschkin and
                  Marina M.{-}C. H{\"{o}}hne},
  title        = {Quantus: An Explainable {AI} Toolkit for Responsible Evaluation of
                  Neural Network Explanations and Beyond},
  journal      = {J. Mach. Learn. Res.},
  volume       = {24},
  pages        = {34:1--34:11},
  year         = {2023},
  url          = {https://jmlr.org/papers/v24/22-0142.html},
  bibsource    = {dblp computer science bibliography, https://dblp.org}
}

@inproceedings{DBLP:conf/nips/HeskesSBC20,
  author       = {Tom Heskes and
                  Evi Sijben and
                  Ioan Gabriel Bucur and
                  Tom Claassen},
  editor       = {Hugo Larochelle and
                  Marc'Aurelio Ranzato and
                  Raia Hadsell and
                  Maria{-}Florina Balcan and
                  Hsuan{-}Tien Lin},
  title        = {Causal Shapley Values: Exploiting Causal Knowledge to Explain Individual
                  Predictions of Complex Models},
  booktitle    = {Advances in Neural Information Processing Systems 33: Annual Conference
                  on Neural Information Processing Systems 2020, NeurIPS 2020, December
                  6-12, 2020, virtual},
  year         = {2020},
  url          = {https://proceedings.neurips.cc/paper/2020/hash/32e54441e6382a7fbacbbbaf3c450059-Abstract.html},
  bibsource    = {dblp computer science bibliography, https://dblp.org}
}

@inproceedings{DBLP:conf/nips/HookerEKK19,
  author       = {Sara Hooker and
                  Dumitru Erhan and
                  Pieter{-}Jan Kindermans and
                  Been Kim},
  editor       = {Hanna M. Wallach and
                  Hugo Larochelle and
                  Alina Beygelzimer and
                  Florence d'Alch{\'{e}}{-}Buc and
                  Emily B. Fox and
                  Roman Garnett},
  title        = {A Benchmark for Interpretability Methods in Deep Neural Networks},
  booktitle    = {Advances in Neural Information Processing Systems 32: Annual Conference
                  on Neural Information Processing Systems 2019, NeurIPS 2019, December
                  8-14, 2019, Vancouver, BC, Canada},
  pages        = {9734--9745},
  year         = {2019},
  url          = {https://proceedings.neurips.cc/paper/2019/hash/fe4b8556000d0f0cae99daa5c5c5a410-Abstract.html},
  bibsource    = {dblp computer science bibliography, https://dblp.org}
}

@inproceedings{DBLP:conf/aistats/JanzingMB20,
  author       = {Dominik Janzing and
                  Lenon Minorics and
                  Patrick Bl{\"{o}}baum},
  editor       = {Silvia Chiappa and
                  Roberto Calandra},
  title        = {Feature relevance quantification in explainable {AI:} {A} causal problem},
  booktitle    = {The 23rd International Conference on Artificial Intelligence and Statistics,
                  {AISTATS} 2020, 26-28 August 2020, Online [Palermo, Sicily, Italy]},
  series       = {Proceedings of Machine Learning Research},
  pages        = {2907--2916},
  publisher    = {{PMLR}},
  year         = {2020},
  url          = {http://proceedings.mlr.press/v108/janzing20a.html},
  bibsource    = {dblp computer science bibliography, https://dblp.org}
}

@inproceedings{DBLP:conf/nips/MengBAB22,
  author       = {Kevin Meng and
                  David Bau and
                  Alex Andonian and
                  Yonatan Belinkov},
  editor       = {Sanmi Koyejo and
                  S. Mohamed and
                  A. Agarwal and
                  Danielle Belgrave and
                  K. Cho and
                  A. Oh},
  title        = {Locating and Editing Factual Associations in {GPT}},
  booktitle    = {Advances in Neural Information Processing Systems 35: Annual Conference
                  on Neural Information Processing Systems 2022, NeurIPS 2022, New Orleans,
                  LA, USA, November 28 - December 9, 2022},
  year         = {2022},
  url          = {http://papers.nips.cc/paper\_files/paper/2022/hash/6f1d43d5a82a37e89b0665b33bf3a182-Abstract-Conference.html},
  bibsource    = {dblp computer science bibliography, https://dblp.org}
}

@inproceedings{DBLP:conf/nips/MuschalikBFKHH24,
  author       = {Maximilian Muschalik and
                  Hubert Baniecki and
                  Fabian Fumagalli and
                  Patrick Kolpaczki and
                  Barbara Hammer and
                  Eyke H{\"{u}}llermeier},
  editor       = {Amir Globersons and
                  Lester Mackey and
                  Danielle Belgrave and
                  Angela Fan and
                  Ulrich Paquet and
                  Jakub M. Tomczak and
                  Cheng Zhang},
  title        = {shapiq: Shapley Interactions for Machine Learning},
  booktitle    = {Advances in Neural Information Processing Systems 38: Annual Conference
                  on Neural Information Processing Systems 2024, NeurIPS 2024, Vancouver,
                  BC, Canada, December 10 - 15, 2024},
  year         = {2024},
  url          = {http://papers.nips.cc/paper\_files/paper/2024/hash/eb3a9313405e2d4175a5a3cfcd49999b-Abstract-Datasets\_and\_Benchmarks\_Track.html},
  bibsource    = {dblp computer science bibliography, https://dblp.org}
}

@article{PhysRevE.111.L033301,
  title = {Assessing high-order effects in feature importance via predictability decomposition},
  author = {Ontivero-Ortega, Marlis and Faes, Luca and Cortes, Jesus M. and Marinazzo, Daniele and Stramaglia, Sebastiano},
  journal = {Phys. Rev. E},
  volume = {111},
  issue = {3},
  pages = {L033301},
  numpages = {6},
  year = {2025},
  month = {Mar},
  publisher = {American Physical Society},
  doi = {10.1103/PhysRevE.111.L033301},
  url = {https://link.aps.org/doi/10.1103/PhysRevE.111.L033301}
}

@inproceedings{10.5555/3737916.3737982,
author = {Dewan, Shaurya and Zawar, Rushikesh and Saxena, Prakanshul and Chang, Yingshan and Luo, Andrew and Bisk, Yonatan},
title = {DiffusionPID: interpreting diffusion via partial information decomposition},
year = {2024},
isbn = {9798331314385},
publisher = {Curran Associates Inc.},
address = {Red Hook, NY, USA},
booktitle = {Proceedings of the 38th International Conference on Neural Information Processing Systems},
articleno = {66},
numpages = {35},
location = {Vancouver, BC, Canada},
series = {NIPS '24}
}

@inproceedings{DBLP:conf/icml/SundararajanDA20,
  author       = {Mukund Sundararajan and
                  Kedar Dhamdhere and
                  Ashish Agarwal},
  title        = {The Shapley Taylor Interaction Index},
  booktitle    = {Proceedings of the 37th International Conference on Machine Learning,
                  {ICML} 2020, 13-18 July 2020, Virtual Event},
  series       = {Proceedings of Machine Learning Research},
  pages        = {9259--9268},
  publisher    = {{PMLR}},
  year         = {2020},
  url          = {http://proceedings.mlr.press/v119/sundararajan20a.html},
  bibsource    = {dblp computer science bibliography, https://dblp.org}
}

@inproceedings{DBLP:conf/icml/SundararajanN20,
  author       = {Mukund Sundararajan and
                  Amir Najmi},
  title        = {The Many Shapley Values for Model Explanation},
  booktitle    = {Proceedings of the 37th International Conference on Machine Learning,
                  {ICML} 2020, 13-18 July 2020, Virtual Event},
  series       = {Proceedings of Machine Learning Research},
  pages        = {9269--9278},
  publisher    = {{PMLR}},
  year         = {2020},
  url          = {http://proceedings.mlr.press/v119/sundararajan20b.html},
  bibsource    = {dblp computer science bibliography, https://dblp.org}
}

@article{DBLP:journals/jmlr/TsaiYR23,
  author       = {Che{-}Ping Tsai and
                  Chih{-}Kuan Yeh and
                  Pradeep Ravikumar},
  title        = {Faith-Shap: The Faithful Shapley Interaction Index},
  journal      = {J. Mach. Learn. Res.},
  volume       = {24},
  pages        = {94:1--94:42},
  year         = {2023},
  url          = {https://jmlr.org/papers/v24/22-0202.html},
  bibsource    = {dblp computer science bibliography, https://dblp.org}
}

@inproceedings{DBLP:conf/nips/VigGBQNSS20,
  author       = {Jesse Vig and
                  Sebastian Gehrmann and
                  Yonatan Belinkov and
                  Sharon Qian and
                  Daniel Nevo and
                  Yaron Singer and
                  Stuart M. Shieber},
  editor       = {Hugo Larochelle and
                  Marc'Aurelio Ranzato and
                  Raia Hadsell and
                  Maria{-}Florina Balcan and
                  Hsuan{-}Tien Lin},
  title        = {Investigating Gender Bias in Language Models Using Causal Mediation
                  Analysis},
  booktitle    = {Advances in Neural Information Processing Systems 33: Annual Conference
                  on Neural Information Processing Systems 2020, NeurIPS 2020, December
                  6-12, 2020, virtual},
  year         = {2020},
  url          = {https://proceedings.neurips.cc/paper/2020/hash/92650b2e92217715fe312e6fa7b90d82-Abstract.html},
  bibsource    = {dblp computer science bibliography, https://dblp.org}
}

@inproceedings{DBLP:conf/iclr/WangVCSS23,
  author       = {Kevin Ro Wang and
                  Alexandre Variengien and
                  Arthur Conmy and
                  Buck Shlegeris and
                  Jacob Steinhardt},
  title        = {Interpretability in the Wild: a Circuit for Indirect Object Identification
                  in {GPT-2} Small},
  booktitle    = {The Eleventh International Conference on Learning Representations,
                  {ICLR} 2023, Kigali, Rwanda, May 1-5, 2023},
  publisher    = {OpenReview.net},
  year         = {2023},
  url          = {https://openreview.net/forum?id=NpsVSN6o4ul},
  bibsource    = {dblp computer science bibliography, https://dblp.org}
}

@article{Welford01081962,
author = {B. P. Welford},
title = {Note on a Method for Calculating Corrected Sums of Squares and Products},
journal = {Technometrics},
volume = {4},
number = {3},
pages = {419--420},
year = {1962},
publisher = {Taylor \& Francis},
doi = {10.1080/00401706.1962.10490022}
}

@article{409cf137-dbb5-3eb1-8cfe-0743c3dc925f,
 ISSN = {01621459, 1537274X},
 author = {Wassily Hoeffding},
 journal = {Journal of the American Statistical Association},
 number = {301},
 pages = {13--30},
 publisher = {[American Statistical Association, Taylor & Francis, Ltd.]},
 title = {Probability Inequalities for Sums of Bounded Random Variables},
 urldate = {2026-05-01},
 volume = {58},
 year = {1963}
}

@article{DBLP:journals/entropy/Kolchinsky22,
  author       = {Artemy Kolchinsky},
  title        = {A Novel Approach to the Partial Information Decomposition},
  journal      = {Entropy},
  volume       = {24},
  number       = {3},
  pages        = {403},
  year         = {2022},
  url          = {https://doi.org/10.3390/e24030403},
  doi          = {10.3390/E24030403},
  bibsource    = {dblp computer science bibliography, https://dblp.org}
}

@article{DBLP:journals/corr/abs-1004-2515,
  author       = {Paul L. Williams and
                  Randall D. Beer},
  title        = {Nonnegative Decomposition of Multivariate Information},
  journal      = {CoRR},
  volume       = {abs/1004.2515},
  year         = {2010},
  url          = {http://arxiv.org/abs/1004.2515},
  eprinttype   = {arXiv},
  eprint       = {1004.2515},
  bibsource    = {dblp computer science bibliography, https://dblp.org}
}

@inproceedings{DBLP:conf/nips/FazlyabRHMP19,
  author       = {Mahyar Fazlyab and
                  Alexander Robey and
                  Hamed Hassani and
                  Manfred Morari and
                  George J. Pappas},
  editor       = {Hanna M. Wallach and
                  Hugo Larochelle and
                  Alina Beygelzimer and
                  Florence d'Alch{\'{e}}{-}Buc and
                  Emily B. Fox and
                  Roman Garnett},
  title        = {Efficient and Accurate Estimation of Lipschitz Constants for Deep
                  Neural Networks},
  booktitle    = {Advances in Neural Information Processing Systems 32: Annual Conference
                  on Neural Information Processing Systems 2019, NeurIPS 2019, December
                  8-14, 2019, Vancouver, BC, Canada},
  pages        = {11423--11434},
  year         = {2019},
  url          = {https://proceedings.neurips.cc/paper/2019/hash/95e1533eb1b20a97777749fb94fdb944-Abstract.html},
  bibsource    = {dblp computer science bibliography, https://dblp.org}
}

@article{DBLP:journals/cor/CastroGT09,
  author       = {Javier Castro and
                  Daniel G{\'{o}}mez and
                  Juan Tejada},
  title        = {Polynomial calculation of the Shapley value based on sampling},
  journal      = {Comput. Oper. Res.},
  volume       = {36},
  number       = {5},
  pages        = {1726--1730},
  year         = {2009},
  url          = {https://doi.org/10.1016/j.cor.2008.04.004},
  doi          = {10.1016/J.COR.2008.04.004},
  bibsource    = {dblp computer science bibliography, https://dblp.org}
}

@inproceedings{DBLP:conf/cvpr/WangPLLBS17,
  author       = {Xiaosong Wang and
                  Yifan Peng and
                  Le Lu and
                  Zhiyong Lu and
                  Mohammadhadi Bagheri and
                  Ronald M. Summers},
  title        = {ChestX-Ray8: Hospital-Scale Chest X-Ray Database and Benchmarks on
                  Weakly-Supervised Classification and Localization of Common Thorax
                  Diseases},
  booktitle    = {2017 {IEEE} Conference on Computer Vision and Pattern Recognition,
                  {CVPR} 2017, Honolulu, HI, USA, July 21-26, 2017},
  pages        = {3462--3471},
  publisher    = {{IEEE} Computer Society},
  year         = {2017},
  url          = {https://doi.org/10.1109/CVPR.2017.369},
  doi          = {10.1109/CVPR.2017.369},
  bibsource    = {dblp computer science bibliography, https://dblp.org}
}

@inproceedings{DBLP:conf/iccv/SelvarajuCDVPB17,
  author       = {Ramprasaath R. Selvaraju and
                  Michael Cogswell and
                  Abhishek Das and
                  Ramakrishna Vedantam and
                  Devi Parikh and
                  Dhruv Batra},
  title        = {Grad-CAM: Visual Explanations from Deep Networks via Gradient-Based
                  Localization},
  booktitle    = {{IEEE} International Conference on Computer Vision, {ICCV} 2017, Venice,
                  Italy, October 22-29, 2017},
  pages        = {618--626},
  publisher    = {{IEEE} Computer Society},
  year         = {2017},
  url          = {https://doi.org/10.1109/ICCV.2017.74},
  doi          = {10.1109/ICCV.2017.74},
  bibsource    = {dblp computer science bibliography, https://dblp.org}
}

@inproceedings{DBLP:conf/nips/LundbergL17,
  author       = {Scott M. Lundberg and
                  Su{-}In Lee},
  editor       = {Isabelle Guyon and
                  Ulrike von Luxburg and
                  Samy Bengio and
                  Hanna M. Wallach and
                  Rob Fergus and
                  S. V. N. Vishwanathan and
                  Roman Garnett},
  title        = {A Unified Approach to Interpreting Model Predictions},
  booktitle    = {Advances in Neural Information Processing Systems 30: Annual Conference
                  on Neural Information Processing Systems 2017, December 4-9, 2017,
                  Long Beach, CA, {USA}},
  pages        = {4765--4774},
  year         = {2017},
  url          = {https://proceedings.neurips.cc/paper/2017/hash/8a20a8621978632d76c43dfd28b67767-Abstract.html},
  bibsource    = {dblp computer science bibliography, https://dblp.org}
}

@inproceedings{DBLP:conf/kdd/Ribeiro0G16,
  author       = {Marco T{\'{u}}lio Ribeiro and
                  Sameer Singh and
                  Carlos Guestrin},
  editor       = {Balaji Krishnapuram and
                  Mohak Shah and
                  Alexander J. Smola and
                  Charu C. Aggarwal and
                  Dou Shen and
                  Rajeev Rastogi},
  title        = {"Why Should {I} Trust You?": Explaining the Predictions of Any Classifier},
  booktitle    = {Proceedings of the 22nd {ACM} {SIGKDD} International Conference on
                  Knowledge Discovery and Data Mining, San Francisco, CA, USA, August
                  13-17, 2016},
  pages        = {1135--1144},
  publisher    = {{ACM}},
  year         = {2016},
  url          = {https://doi.org/10.1145/2939672.2939778},
  doi          = {10.1145/2939672.2939778},
  bibsource    = {dblp computer science bibliography, https://dblp.org}
}

@inproceedings{DBLP:conf/icml/SundararajanTY17,
  author       = {Mukund Sundararajan and
                  Ankur Taly and
                  Qiqi Yan},
  editor       = {Doina Precup and
                  Yee Whye Teh},
  title        = {Axiomatic Attribution for Deep Networks},
  booktitle    = {Proceedings of the 34th International Conference on Machine Learning,
                  {ICML} 2017, Sydney, NSW, Australia, 6-11 August 2017},
  series       = {Proceedings of Machine Learning Research},
  pages        = {3319--3328},
  publisher    = {{PMLR}},
  year         = {2017},
  url          = {http://proceedings.mlr.press/v70/sundararajan17a.html},
  bibsource    = {dblp computer science bibliography, https://dblp.org}
}

@inproceedings{DBLP:conf/nips/TsangR020,
  author       = {Michael Tsang and
                  Sirisha Rambhatla and
                  Yan Liu},
  editor       = {Hugo Larochelle and
                  Marc'Aurelio Ranzato and
                  Raia Hadsell and
                  Maria{-}Florina Balcan and
                  Hsuan{-}Tien Lin},
  title        = {How does This Interaction Affect Me? Interpretable Attribution for
                  Feature Interactions},
  booktitle    = {Advances in Neural Information Processing Systems 33: Annual Conference
                  on Neural Information Processing Systems 2020, NeurIPS 2020, December
                  6-12, 2020, virtual},
  year         = {2020},
  url          = {https://proceedings.neurips.cc/paper/2020/hash/443dec3062d0286986e21dc0631734c9-Abstract.html},
  bibsource    = {dblp computer science bibliography, https://dblp.org}
}

@article{DBLP:journals/jmlr/JanizekSL21,
  author       = {Joseph D. Janizek and
                  Pascal Sturmfels and
                  Su{-}In Lee},
  title        = {Explaining Explanations: Axiomatic Feature Interactions for Deep Networks},
  journal      = {J. Mach. Learn. Res.},
  volume       = {22},
  pages        = {104:1--104:54},
  year         = {2021},
  url          = {https://jmlr.org/papers/v22/20-1223.html},
  bibsource    = {dblp computer science bibliography, https://dblp.org}
}

@inproceedings{DBLP:conf/iccv/LermanVKX21,
  author       = {Samuel Lerman and
                  Charles Venuto and
                  Henry A. Kautz and
                  Chenliang Xu},
  title        = {Explaining Local, Global, And Higher-Order Interactions In Deep Learning},
  booktitle    = {2021 {IEEE/CVF} International Conference on Computer Vision, {ICCV}
                  2021, Montreal, QC, Canada, October 10-17, 2021},
  pages        = {1204--1213},
  publisher    = {{IEEE}},
  year         = {2021},
  url          = {https://doi.org/10.1109/ICCV48922.2021.00126},
  doi          = {10.1109/ICCV48922.2021.00126},
  bibsource    = {dblp computer science bibliography, https://dblp.org}
}

@article{DBLP:journals/entropy/BertschingerROJA14,
  author       = {Nils Bertschinger and
                  Johannes Rauh and
                  Eckehard Olbrich and
                  J{\"{u}}rgen Jost and
                  Nihat Ay},
  title        = {Quantifying Unique Information},
  journal      = {Entropy},
  volume       = {16},
  number       = {4},
  pages        = {2161--2183},
  year         = {2014},
  url          = {https://doi.org/10.3390/e16042161},
  doi          = {10.3390/E16042161},
  bibsource    = {dblp computer science bibliography, https://dblp.org}
}

@inproceedings{DBLP:conf/eccv/ZeilerF14,
  author       = {Matthew D. Zeiler and
                  Rob Fergus},
  editor       = {David J. Fleet and
                  Tom{\'{a}}s Pajdla and
                  Bernt Schiele and
                  Tinne Tuytelaars},
  title        = {Visualizing and Understanding Convolutional Networks},
  booktitle    = {Computer Vision - {ECCV} 2014 - 13th European Conference, Zurich,
                  Switzerland, September 6-12, 2014, Proceedings, Part {I}},
  series       = {Lecture Notes in Computer Science},
  pages        = {818--833},
  publisher    = {Springer},
  year         = {2014},
  url          = {https://doi.org/10.1007/978-3-319-10590-1\_53},
  doi          = {10.1007/978-3-319-10590-1\_53},
  bibsource    = {dblp computer science bibliography, https://dblp.org}
}

@inproceedings{DBLP:conf/bmvc/PetsiukDS18,
  author       = {Vitali Petsiuk and
                  Abir Das and
                  Kate Saenko},
  title        = {{RISE:} Randomized Input Sampling for Explanation of Black-box Models},
  booktitle    = {British Machine Vision Conference 2018, {BMVC} 2018, Newcastle, UK,
                  September 3-6, 2018},
  pages        = {151},
  publisher    = {{BMVA} Press},
  year         = {2018},
  url          = {http://bmvc2018.org/contents/papers/1064.pdf},
  bibsource    = {dblp computer science bibliography, https://dblp.org}
}

@inproceedings{DBLP:conf/iccv/FongV17,
  author       = {Ruth C. Fong and
                  Andrea Vedaldi},
  title        = {Interpretable Explanations of Black Boxes by Meaningful Perturbation},
  booktitle    = {{IEEE} International Conference on Computer Vision, {ICCV} 2017, Venice,
                  Italy, October 22-29, 2017},
  pages        = {3449--3457},
  publisher    = {{IEEE} Computer Society},
  year         = {2017},
  url          = {https://doi.org/10.1109/ICCV.2017.371},
  doi          = {10.1109/ICCV.2017.371},
  bibsource    = {dblp computer science bibliography, https://dblp.org}
}

@article{SOBOL2001271,
title = {Global sensitivity indices for nonlinear mathematical models and their Monte Carlo estimates},
journal = {Mathematics and Computers in Simulation},
volume = {55},
number = {1},
pages = {271-280},
year = {2001},
note = {The Second IMACS Seminar on Monte Carlo Methods},
issn = {0378-4754},
doi = {https://doi.org/10.1016/S0378-4754(00)00270-6},
author = {I.M Sobol'}
}

@inproceedings{
marzouk2026shap,
title={{SHAP} Meets Tensor Networks: Provably Tractable Explanations with Parallelism},
author={Reda Marzouk and Shahaf Bassan and Guy Katz},
booktitle={The Thirty-ninth Annual Conference on Neural Information Processing Systems},
year={2026},
url={https://openreview.net/forum?id=FfccSikDfZ}
}

@inproceedings{DBLP:conf/icml/BuiNNY24,
  author       = {Ngoc Bui and
                  Hieu Trung Nguyen and
                  Viet Anh Nguyen and
                  Rex Ying},
  editor       = {Ruslan Salakhutdinov and
                  Zico Kolter and
                  Katherine A. Heller and
                  Adrian Weller and
                  Nuria Oliver and
                  Jonathan Scarlett and
                  Felix Berkenkamp},
  title        = {Explaining Graph Neural Networks via Structure-aware Interaction Index},
  booktitle    = {Forty-first International Conference on Machine Learning, {ICML} 2024,
                  Vienna, Austria, July 21-27, 2024},
  series       = {Proceedings of Machine Learning Research},
  pages        = {4854--4883},
  publisher    = {{PMLR} / OpenReview.net},
  year         = {2024},
  url          = {https://proceedings.mlr.press/v235/bui24b.html},
  bibsource    = {dblp computer science bibliography, https://dblp.org}
}

@article{Hoeffding1948,
 ISSN = {00034851},
 URL = {http://www.jstor.org/stable/2235637},
 author = {Wassily Hoeffding},
 journal = {The Annals of Mathematical Statistics},
 number = {3},
 pages = {293--325},
 publisher = {Institute of Mathematical Statistics},
 title = {A Class of Statistics with Asymptotically Normal Distribution},
 urldate = {2026-05-02},
 volume = {19},
 year = {1948}
}

@article{Lei03072018,
author = {Jing Lei and Max G’Sell and Alessandro Rinaldo and Ryan J. Tibshirani and Larry Wasserman},
title = {Distribution-Free Predictive Inference for Regression},
journal = {Journal of the American Statistical Association},
volume = {113},
number = {523},
pages = {1094--1111},
year = {2018},
publisher = {Taylor \& Francis},
doi = {10.1080/01621459.2017.1307116},
URL = {https://doi.org/10.1080/01621459.2017.1307116}
}

@inproceedings{DBLP:conf/nips/CovertLL20,
  author       = {Ian Covert and
                  Scott M. Lundberg and
                  Su{-}In Lee},
  editor       = {Hugo Larochelle and
                  Marc'Aurelio Ranzato and
                  Raia Hadsell and
                  Maria{-}Florina Balcan and
                  Hsuan{-}Tien Lin},
  title        = {Understanding Global Feature Contributions With Additive Importance
                  Measures},
  booktitle    = {Advances in Neural Information Processing Systems 33: Annual Conference
                  on Neural Information Processing Systems 2020, NeurIPS 2020, December
                  6-12, 2020, virtual},
  year         = {2020},
  url          = {https://proceedings.neurips.cc/paper/2020/hash/c7bf0b7c1a86d5eb3be2c722cf2cf746-Abstract.html},
  bibsource    = {dblp computer science bibliography, https://dblp.org}
}

@article{DBLP:journals/natmi/LundbergECDPNKH20,
  author       = {Scott M. Lundberg and
                  Gabriel G. Erion and
                  Hugh Chen and
                  Alex J. DeGrave and
                  Jordan M. Prutkin and
                  Bala Nair and
                  Ronit Katz and
                  Jonathan Himmelfarb and
                  Nisha Bansal and
                  Su{-}In Lee},
  title        = {From local explanations to global understanding with explainable {AI}
                  for trees},
  journal      = {Nat. Mach. Intell.},
  volume       = {2},
  number       = {1},
  pages        = {56--67},
  year         = {2020},
  url          = {https://doi.org/10.1038/s42256-019-0138-9},
  doi          = {10.1038/S42256-019-0138-9},
  bibsource    = {dblp computer science bibliography, https://dblp.org}
}

@inproceedings{DBLP:conf/cvpr/HuangLMW17,
  author       = {Gao Huang and
                  Zhuang Liu and
                  Laurens van der Maaten and
                  Kilian Q. Weinberger},
  title        = {Densely Connected Convolutional Networks},
  booktitle    = {2017 {IEEE} Conference on Computer Vision and Pattern Recognition,
                  {CVPR} 2017, Honolulu, HI, USA, July 21-26, 2017},
  pages        = {2261--2269},
  publisher    = {{IEEE} Computer Society},
  year         = {2017},
  url          = {https://doi.org/10.1109/CVPR.2017.243},
  doi          = {10.1109/CVPR.2017.243},
  bibsource    = {dblp computer science bibliography, https://dblp.org}
}
